
\documentclass[10pt,journal]{IEEEtran}
\usepackage{amsfonts}
\usepackage[nocompress]{cite}
\usepackage{cite}
\usepackage{eurosym}
\usepackage{setspace}
\usepackage{tikz}
\usepackage{tkz-graph}
\usepackage{cite}
\usepackage[hidelinks]{hyperref}
\usepackage{caption}
\usepackage{subcaption}
\usepackage{algorithm}
\usepackage{algpseudocode}
\usepackage{array}
\usepackage{mdwmath}
\usepackage{mdwtab}
\usepackage{mathtools}
\usepackage{amsmath}
\usepackage{amsthm}
\usepackage{amssymb}
\usepackage{array}
\usepackage{epstopdf}
\usepackage[utf8]{inputenc}
\usepackage[english]{babel}

\ifCLASSOPTIONcompsoc
\else
\fi
\ifCLASSINFOpdf
\else
\fi
\usetikzlibrary{shapes,arrows,fit,calc,positioning,automata}

\newcommand{\X}{\mathbf{X}}
\newcommand{\Y}{\mathbf{Y}}
\newcommand{\I}{\mathbf{I}}
\newcommand{\x}{\mathbf{x}}
\newcommand{\mub}{\mathbf{\mu}}
\newcommand{\Lambdab}{\mathbf{\Lambda}}

\newcommand{\M}{\mathbf{M}}
\newcommand{\m}{\mathbf{m}}

\theoremstyle{plain}
\newtheorem{theorem}{Theorem}
\newtheorem{prop}{Proposition}
\newtheorem{corollary}{Corollary}
\newtheorem{lemma}{Lemma}
\theoremstyle{definition}
\newtheorem{definition}{Definition}

\begin{document}

\title{Optimal Bayesian Transfer Learning}

\author{Alireza Karbalayghareh, 
\IEEEmembership{Student Member, IEEE,} Xiaoning Qian, \IEEEmembership{Senior
Member, IEEE,} and~Edward~R.~Dougherty, \IEEEmembership{Fellow, IEEE}\thanks{
The authors are with the Department of Electrical and Computer Engineering,
Texas A\&M University, College Station, TX, USA.}}
\maketitle

\IEEEtitleabstractindextext{
\begin{abstract}
Transfer learning has recently attracted significant research attention, as it simultaneously learns from different source domains, which have plenty of labeled data, and transfers the relevant knowledge to the target domain with limited labeled data to improve the prediction performance. We propose a Bayesian transfer learning framework, in the homogeneous transfer learning scenario, where the source and target domains are related through the joint prior density of the model parameters. The modeling of joint prior densities enables better understanding of the ``transferability" between domains. We define a joint Wishart distribution for the precision matrices of the Gaussian feature-label distributions in the source and target domains to act like a bridge that transfers the useful information of the source domain to help classification in the target domain by improving the target posteriors. Using several theorems in multivariate statistics, the posteriors and posterior predictive densities are derived in closed forms with hypergeometric functions of matrix argument, leading to our novel closed-form and fast Optimal Bayesian Transfer Learning (OBTL) classifier.  Experimental results on both synthetic and real-world benchmark data confirm the superb performance of the OBTL compared to the other state-of-the-art transfer learning and domain adaptation methods.

\end{abstract}

\begin{IEEEkeywords}
Transfer learning, domain adaptation, optimal Bayesian transfer learning, optimal Bayesian classifier
\end{IEEEkeywords}}

\IEEEdisplaynontitleabstractindextext
\IEEEpeerreviewmaketitle

\section{Introduction}
\label{sec1}
A basic assumption of traditional machine learning is that data in the
training and test sets are independently sampled in one domain with the
identical underlying distribution. However, with the growing amount of
heterogeneity in modern data, the assumption of having only one domain may
not be reasonable. Transfer learning (TL) is a learning strategy that
enables us to learn from a source domain with plenty of labeled data as well
as a target domain with no or very few labeled data in order to design a
better classifier in the target domain than the ones trained by target-only
data for its generalization performance. This can reduce the effort of
collecting labeled data for the target domain, which might be very costly,
if not impossible. Due to its importance, there has been ongoing research on
the topic of transfer learning and many surveys in the recent years covering
transfer learning and domain adaptation methods from different perspectives 
\cite{survey2010,survey2017deep, survey2015, survey2016, survey2017}.

If we train a model in one domain and directly apply it in another, the
trained model may not generalize well, but if the domains are related,
appropriate transfer learning and domain adaptation methods can borrow
information from all the data across the domains to develop better
generalizable models in the target domain. Transfer learning in medical
genomics is desirable, since the number of labeled data samples is often
very limited due to the difficulty of having disease samples and the
prohibitive costs of human clinical trials. However, it is relatively easier
to obtain gene-expression data for cell lines or other model species like
mice or dogs. If these different life systems share the same underlying
disease cellular mechanisms, we may utilize data in cell lines or model
species as our source domain to develop transfer learning methods for more
accurate human disease prognosis in the target domain \cite%
{zou2015transfer,ganchev2011transfer}.

\subsection{Related Works}

Domain adaptation (DA) is a specific case of transfer learning where the
source and target domains have the same classes or categories \cite%
{survey2017deep, survey2015, survey2017}. DA methods either adapt the model
learned in the source domain to be applied in the target domain or adapt the
source data so that the distribution can be close to the one of the target
data. Depending on the availability of labeled target data, the DA methods
are categorized as unsupervised and semi-supervised algorithms. Unsupervised
DA problems applies to the cases where there are no labeled target data and
the algorithm uses only unlabeled data in the target domain along with
source labeled data \cite{gong2012geodesic}. Semi-supervised DA methods use
both the unlabeled and a few labeled target data to learn a classifier in
the target domain with the help of source labeled data \cite{HFA2012,
hoffman2013,hoffman2014, CDLS2016}.

Depending on whether the source and target domains have the same feature
space with the same feature dimension, there are homogeneous and
heterogeneous DA methods. The first direction in homogeneous DA is instance
re-weighting, for which the most popular measure to re-weight the data is
Maximum Mean Discrepancy (MMD) \cite{MMD} between the two domains. Transfer
Adaptive Boosting (TrAdaBoost) \cite{dai2007boosting} is another method that
adaptively sets the weights for the source and target samples during each
iteration based on the relevance of source and target data to help train the
target classifier. Another direction is model or parameter adaptation. There
are several efforts to adapt the SVM classifier designed in the source
domain for the target domain, for example, based on residual error \cite%
{duan2009domain,bruzzone2010domain}. Feature augmentation methods, such as
Geodesic Flow Sampling (GFS) and Geodesic Flow Kernel (GFK) \cite%
{gong2012geodesic}, derive intermediate subspaces using Geodesic flows,
which interpolate between the source and target domains. Finding an
invariant latent domain in which the distance between the empirical
distributions of the source and target data is minimized is another
direction to tackle the problem of domain adaptation, such as Invariant
Latent Space (ILS) in \cite{ILS2017}. Authors in \cite{ILS2017} proposed to
learn an invariant latent Hilbert space to address both the unsupervised and
semi-supervised DA problems, where a notion of domain variance is
simultaneously minimized while maximizing a measure of discriminatory power
using Riemannian optimization techniques. Max-Margin Domain Transform (MMDT) 
\cite{hoffman2013} is a semi-supervised feature transformation DA method
which uses a cost function based on the misclassification loss and jointly
optimizes both the transformation and classifier parameters. Another
domain-invariant representation method \cite{OT} matches the distributions
in the source and target domains via a regularized optimal transportation
model. Heterogeneous Feature Augmentation (HFA) \cite{HFA2012} is a
heterogeneous DA method which typically embeds the source and target data
into a common latent space prior to data augmentation.

Domain adaption has been recently studied in deep learning frameworks like
deep adaptation network (DAN) \cite{long2015learning}, residual transfer
networks (RTN) \cite{long2016unsupervised}, and models based on generative
adversarial networks (GAN) such as domain adversarial neural network (DaNN) 
\cite{ganin2016domain} and coupled GAN (CoGAN) \cite{liu2016coupled}.
Although deep DA methods have shown promising results, they require a fairly
large amount of labeled data.

\subsection{Main Contributions}

This paper treats homogeneous transfer learning and domain adaptation from
Bayesian perspectives, a key aim being better theoretical understanding when
data in the source domain are \textquotedblleft transferrable" to help
learning in the target domain. When learning complex systems with limited
data, Bayesian learning can integrate prior knowledge to compensate for the
generalization performance loss due to the lack of data. Rooted in
Optimal Bayesian Classifiers (OBC) \cite{Lori1,Lori2}, which gives the classifiers having
Bayesian minimum mean squared error (MMSE) over uncertainty classes of
feature-label distributions, we propose a Bayesian transfer learning
framework and the corresponding Optimal Bayesian Transfer Learning (OBTL)
classifier to formulate the OBC in the target domain by taking advantage of
both the available data and the joint prior knowledge in source and target
domains. In this Bayesian learning framework, transfer learning from the
source to target domain is through a joint prior probability density
function for the model parameters of the feature-label distributions of the
two domains. By explicitly modeling the dependency of the model parameters
of the feature-label distribution, the posterior of the target model
parameters can be updated via the joint prior probability distribution
function in conjunction with the source and target data. Based on that, we
derive the \textit{effective} class-conditional densities of the target
domain, by which the OBTL classifier is constructed.

Our problem definition is the same as the aforementioned domain adaptation
methods, where there are plenty of labeled source data and few labeled
target data. The source and target data follow different multivariate
Gaussian distributions with arbitrary mean vectors and precision (inverse of
covariance) matrices. For the OBTL, we define a joint Gaussian-Wishart prior
distribution, where the two precision matrices in the two domains are
jointly connected. This joint prior distribution for the two precision
matrices of the two domains acts like a bridge through which the useful
knowledge of the source domain can be transferred to the target domain,
making the posterior of the target parameters tighter with less uncertainty.

With such a Bayesian transfer learning framework and several theorems from
multivariate statistics, we define an appropriate joint prior for the
precision matrices using hypergeometric functions of matrix argument, whose
marginal distributions are Wishart as well. The corresponding closed-form
posterior distributions for the target model parameters are derived by
integrating out all the source model parameters. Having closed-form
posteriors facilitates closed-form effective class-conditional densities.
Hence, the OBTL classifier can be derived based on the corresponding
hypergeometric functions and does not need iterative and costly techniques
like MCMC sampling. Although the OBTL classifier has a closed form,
computing these hypergeometric functions involves the computation of series
of zonal polynomials, which is time-consuming and not scalable to high
dimension. To resolve this issue, we use the Laplace approximations of these
functions, which preserves the good prediction performance of the OBTL while
making it efficient and scalable. The performance of the OBTL is tested on
both synthetic data and real-world benchmark image datasets to show its
superior performance over state-of-the-art domain adaption methods.

The paper is organized as follows. Section \ref{sec2} introduces the
Bayesian transfer learning framework. Section \ref{sec3} derives the
closed-form posteriors of target parameters, via which Section \ref{sec4}
obtains the effective class-conditional densities in the target domain.
Section \ref{sec5} derives the OBTL classifier, and Section \ref{sec6}
presents the OBC in the target domain and shows that the OBTL classifier
converts to the target-only OBC when there is no interaction between the
domains. Section \ref{sec7} presents
experimental results using both synthetic and real-world benchmark data.
Section \ref{sec8} concludes the paper. Appendix \ref{appendix:hypergeometric} states some useful 
theorems for the generalized hypergeometric functions of matrix argument. Appendices \ref{appendix:posterior} and \ref{appendix:effective} provide the proofs of our main  theorems. Finally, Appendix \ref{appendix:Laplace} presents the Laplace approximation of Gauss hypergeometric functions of matrix argument.

\section{Bayesian Transfer Learning Framework}
\label{sec2}

We consider a supervised transfer learning problem in which there are $L$
common classes (labels) in each domain. Let $\mathcal{D}_{s}$ and $\mathcal{D%
}_{t}$ denote the labeled datasets of the source and target domains with the
sizes of $N_{s}$ and $N_{t}$, respectively, where $N_{t}\ll N_{s}$. Let $%
\mathcal{D}_{s}^{l}=\left\{ \mathbf{x}_{s,1}^{l},\mathbf{x}_{s,2}^{l},\cdots
,\mathbf{x}_{s,n_{s}^{l}}^{l}\right\} $, $l\in \{1,\cdots ,L\}$, where  $%
n_{s}^{l}$ denotes the size of data in the source domain for the label $l$. Similarly, let $%
\mathcal{D}_{t}^{l}=\left\{ \mathbf{x}_{t,1}^{l},\mathbf{x}_{t,2}^{l},\cdots
,\mathbf{x}_{t,n_{t}^{l}}^{l}\right\} $, $l\in \{1,\cdots ,L\}$, where  $%
n_{t}^{l}$ denotes the size of data in the target domain for the label $l$. There is no intersection
between $\mathcal{D}_t^i$ and $\mathcal{D}_t^j$ and also between $\mathcal{D}_s^i$ and $\mathcal{D}_s^j$ for 
any $i,j\in \{1,\cdots,L\}$. Obviously, we have $%
\mathcal{D}_{s}=\cup _{l=1}^{L}\mathcal{D}_{s}^{l}$, $\mathcal{D}_{t}=\cup
_{l=1}^{L}\mathcal{D}_{t}^{l}$, $N_{s}=\sum_{l=1}^{L}n_{s}^{l}$, and $%
N_{t}=\sum_{l=1}^{L}n_{t}^{l}$. Since we consider the homogeneous transfer
learning scenario, where the feature spaces are the same in both the source
and target domains, $\mathbf{x}_{s}^{l}$ and $\mathbf{x}_{t}^{l}$ are $%
d\times 1$ vectors for $d$ features of the source and target domains,
respectively.

Letting $\mathbf{x}^{l}=\left[ {\mathbf{x}_{t}^{l^{\prime }}},{\mathbf{x}%
_{s}^{l^{\prime }}}\right] ^{\prime }$ be a $2d\times 1$ augmented feature
vector, $\mathbf{A}^{^{\prime }}$ denoting the transpose of matrix $\mathbf{A%
}$, a general joint sampling model would take the Gaussian form 
\begin{equation}
\mathbf{x}^{l}\sim \mathcal{N}\left( \mathbf{\mu }^{l},\left( \mathbf{%
\Lambda }^{l}\right) ^{-1}\right) ,~~~l\in \{1,\cdots ,L\},  
\label{x}
\end{equation}%
with 
\begin{equation}
\mathbf{\mu }^{l}=%
\begin{bmatrix}
\mathbf{\mu }_{t}^{l} \\ 
\mathbf{\mu }_{s}^{l}%
\end{bmatrix}%
,~~~~\mathbf{\Lambda }^{l}=%
\begin{bmatrix}
\mathbf{\Lambda }_{t}^{l} & \mathbf{\Lambda }_{ts}^{l} \\ 
{\mathbf{\Lambda }_{ts}^{l}}^{^{\prime }} & \mathbf{\Lambda }_{s}^{l}%
\end{bmatrix},
  \label{mu_lambda}
\end{equation}
where $\mathbf{\mu }^{l}$ is the $2d\times 1$ mean vector, and $\mathbf{%
\Lambda }^{l}$ is the $2d\times 2d$ precision matrix. In this model, $%
\mathbf{\Lambda }_{t}^{l}$ and $\mathbf{\Lambda }_{s}^{l}$ account for the
interactions of features within the source and target domains, respectively,
and $\mathbf{\Lambda }_{ts}^{l}$ accounts for the interactions of the
features across the source and target domains, for any class $l\in
\{1,\cdots ,L\}$. In this Gaussian setting, it is common to use a
Wishart distribution as a prior for the precision matrix $\mathbf{\Lambda }^{l}$, since it is a conjugate prior.

In transfer learning, it is not realistic to assume joint sampling of the source and target domains. Therefore we cannot use the general joint sampling model.
Instead, we assume that there are two datasets separately sampled from the
source and target domains. Thus, we define a joint prior
distribution for $\mathbf{\Lambda }_{s}^{l}$ and $\mathbf{\Lambda }_{t}^{l}$
by marginalizing out the term $\mathbf{\Lambda }_{ts}^{l}$. This joint prior
distribution of the parameters of the source and target domains accounts for
the dependency (or ``relatedness") between the domains.

Given this adjustment to account for transfer learning, we utilize a Gaussian
model for the feature-label distribution in each domain: 
\begin{equation}
\mathbf{x}_{z}^{l}\sim \mathcal{N}\left( \mathbf{\mu }_{z}^{l},{\left( 
\mathbf{\Lambda }_{z}^{l}\right) }^{-1}\right) ,~~~l\in \{1,\cdots ,L\},
\label{x_s_x_t}
\end{equation}%
where subscript $z\in \{s,t\}$ denotes the source $s$ or target $t$ domain, 
$\mathbf{\mu }_{s}^{l}$ and $\mathbf{\mu }_{t}^{l}$ are $d\times 1$ mean
vectors in the source and target domains for label $l$, respectively, $%
\mathbf{\Lambda }_{s}^{l}$ and $\mathbf{\Lambda }_{t}^{l}$ are the $d\times
d $ precision matrices in the source and target domains for label $l$,
respectively, and a joint Gaussian-Wishart distribution is employed as a prior for
mean and precision matrices of the Gaussian models. Under these assumptions,
the joint prior distribution for $\mathbf{\mu }_{s}^{l}$, $\mathbf{\mu }%
_{t}^{l}$, $\mathbf{\Lambda }_{s}^{l}$, and $\mathbf{\Lambda }_{s}^{l}$
takes the form
\begin{equation}
\label{general_joint_prior}
p\left( \mathbf{\mu }_{s}^{l},\mathbf{\mu }_{t}^{l},\mathbf{\Lambda }%
_{s}^{l},\mathbf{\Lambda }_{t}^{l}\right) =p\left( \mathbf{\mu }_{s}^{l},%
\mathbf{\mu }_{t}^{l}|\mathbf{\Lambda }_{s}^{l},\mathbf{\Lambda }%
_{t}^{l}\right) p\left( \mathbf{\Lambda }_{s}^{l},\mathbf{\Lambda }%
_{t}^{l}\right) .
\end{equation}%
To facilitate conjugate priors, we assume that, for any class $l\in
\{1,\cdots ,L\}$, $\mathbf{\mu }_{s}^{l}$ and $\mathbf{\mu }_{t}^{l}$ are
conditionally independent given $\mathbf{\Lambda }_{s}^{l}$ and $\mathbf{%
\Lambda }_{t}^{l}$, so that 
\begin{equation}
p\left( \mathbf{\mu }_{s}^{l},\mathbf{\mu }_{t}^{l},\mathbf{\Lambda }%
_{s}^{l},\mathbf{\Lambda }_{t}^{l}\right) =p\left( \mathbf{\mu }_{s}^{l}|%
\mathbf{\Lambda }_{s}^{l}\right) p\left( \mathbf{\mu }_{t}^{l}|\mathbf{%
\Lambda }_{t}^{l}\right) p\left( \mathbf{\Lambda }_{s}^{l},\mathbf{\Lambda }%
_{t}^{l}\right) ,  \label{p_mu}
\end{equation}%
and that both $p\left( \mathbf{\mu }_{s}^{l}|\mathbf{\Lambda }%
_{s}^{l}\right) $ and $p\left( \mathbf{\mu }_{t}^{l}|\mathbf{\Lambda }%
_{t}^{l}\right) $ are Gaussian, 
\begin{equation}
\mathbf{\mu }_{z}^{l}|\mathbf{\Lambda }_{z}^{l}\sim \mathcal{N}\left( 
\mathbf{m}_{z}^{l},\left( \kappa _{z}^{l}\mathbf{\Lambda }_{z}^{l}\right)
^{-1}\right) ,  \label{mu_s_mu_t}
\end{equation}%
where $\mathbf{m}_{z}^{l}$ is the $d\times 1$ mean vector of $\mathbf{\mu }%
_{z}^{l}$, and $\kappa _{z}^{l}$ is a positive scalar hyperparameter. We
need to define a joint distribution for $\mathbf{\Lambda }_{s}^{l}$ and $%
\mathbf{\Lambda }_{t}^{l}$. In the case of a prior for either $\mathbf{%
\Lambda }_{s}^{l}$ or $\mathbf{\Lambda }_{t}^{l}$, we use a Wishart
distribution as the conjugate prior. Here we desire a joint distribution for 
$\mathbf{\Lambda }_{s}^{l}$ and $\mathbf{\Lambda }_{t}^{l}$, whose marginal
distributions for both $\mathbf{\Lambda }_{s}^{l}$ and $\mathbf{\Lambda }%
_{t}^{l}$ are Wishart.

We present some definitions and theorems that will be used in
deriving the OBTL classifier.

\begin{definition}
\label{definition1} A random $d\times d$ symmetric positive-definite matrix $%
\mathbf{\Lambda }$ has a nonsingular Wishart distribution with $\nu $
degrees of freedom, $W_{d}(\mathbf{M},\nu )$, if $\nu \geq d$ and $\mathbf{M}
$ is a $d\times d$ positive-definite matrix ($\mathbf{M}>0$) and the density
is 
\begin{equation}
p(\mathbf{\Lambda })=\left[ 2^{\frac{\nu d}{2}}\Gamma _{d}\left( \frac{\nu }{%
2}\right) |\mathbf{M}|^{\frac{\nu }{2}}\right] ^{-1}|\mathbf{\Lambda }|^{%
\frac{\nu -d-1}{2}}\mathrm{etr}\left( -\frac{1}{2}\mathbf{M}^{-1}\mathbf{%
\Lambda }\right) ,  \label{wishart}
\end{equation}%
where $|\mathbf{A}|$ is the determinant of $\mathbf{A}$, $\mathrm{etr}(%
\mathbf{A})=\exp \left( \mathrm{tr}(\mathbf{A})\right) $ and $\Gamma
_{d}(\alpha )$ is the multivariate gamma function given by 
\begin{equation}
\Gamma _{d}(\alpha )=\pi ^{\frac{d(d-1)}{4}}\prod_{i=1}^{d}\Gamma \left(
\alpha -\frac{i-1}{2}\right) .  \label{Gamma_multi}
\end{equation}
\end{definition}

\begin{prop}
\label{proposition1}  \cite{muirhead}: If $\mathbf{\Lambda} \sim W_d(\mathbf{M}%
,\nu)$, and $\mathbf{A}$ is an $r\times d$ matrix of rank $r$, where $r \le d
$, then $\mathbf{A} \mathbf{\Lambda} \mathbf{A}^{^{\prime }} \sim W_r(%
\mathbf{A}\mathbf{M}\mathbf{A}^{^{\prime }},\nu)$.
\end{prop}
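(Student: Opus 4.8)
The plan is to prove the result by comparing Laplace transforms (matrix moment generating functions), since these determine the distribution uniquely and, unlike a direct change of variables, gracefully accommodate both the drop in dimension from $d$ to $r$ and the possibility of real (noninteger) $\nu$. A naive Jacobian computation is awkward here because the map $\mathbf{\Lambda}\mapsto\mathbf{A}\mathbf{\Lambda}\mathbf{A}^{\prime}$ is not a bijection on the cone of positive-definite matrices when $r<d$, so the transform route is cleaner.

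First I would record the Laplace transform of the Wishart density in \eqref{wishart}: for a symmetric $d\times d$ matrix $\mathbf{\Theta}$ with $\mathbf{I}-2\mathbf{\Theta}\mathbf{M}>0$, multiplying the density by $\mathrm{etr}(\mathbf{\Theta}\mathbf{\Lambda})$ and combining exponents shows the integrand is an unnormalized $W_d\big((\mathbf{M}^{-1}-2\mathbf{\Theta})^{-1},\nu\big)$ density; integrating over the positive-definite cone and reading off the ratio of normalizing constants gives
\[
E\left[\mathrm{etr}\left(\mathbf{\Theta}\mathbf{\Lambda}\right)\right]=\left|\mathbf{I}-2\mathbf{\Theta}\mathbf{M}\right|^{-\nu/2}.
\]
Next, writing $\mathbf{B}=\mathbf{A}\mathbf{\Lambda}\mathbf{A}^{\prime}$, I would evaluate its transform at a symmetric $r\times r$ matrix $\mathbf{T}$. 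Using cyclic invariance of the trace, $\mathrm{tr}(\mathbf{T}\mathbf{A}\mathbf{\Lambda}\mathbf{A}^{\prime})=\mathrm{tr}\big((\mathbf{A}^{\prime}\mathbf{T}\mathbf{A})\mathbf{\Lambda}\big)$, so substituting $\mathbf{\Theta}=\mathbf{A}^{\prime}\mathbf{T}\mathbf{A}$ into the formula above yields
\[
E\left[\mathrm{etr}\left(\mathbf{T}\mathbf{B}\right)\right]=\left|\mathbf{I}-2\mathbf{A}^{\prime}\mathbf{T}\mathbf{A}\mathbf{M}\right|^{-\nu/2}.
\]

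The key algebraic step, which I expect to be the crux, is to collapse this $d\times d$ determinant to an $r\times r$ one matching the target Wishart. Applying Sylvester's determinant identity $|\mathbf{I}+\mathbf{U}\mathbf{V}|=|\mathbf{I}+\mathbf{V}\mathbf{U}|$ with $\mathbf{U}=-2\mathbf{A}^{\prime}\mathbf{T}$ (of size $d\times r$) and $\mathbf{V}=\mathbf{A}\mathbf{M}$ (of size $r\times d$), followed by one more cyclic swap inside the $r\times r$ determinant, gives
\[
\left|\mathbf{I}-2\mathbf{A}^{\prime}\mathbf{T}\mathbf{A}\mathbf{M}\right|=\left|\mathbf{I}-2\mathbf{A}\mathbf{M}\mathbf{A}^{\prime}\mathbf{T}\right|=\left|\mathbf{I}-2\mathbf{T}\left(\mathbf{A}\mathbf{M}\mathbf{A}^{\prime}\right)\right|.
\]
Hence $E[\mathrm{etr}(\mathbf{T}\mathbf{B})]=|\mathbf{I}-2\mathbf{T}(\mathbf{A}\mathbf{M}\mathbf{A}^{\prime})|^{-\nu/2}$, which is precisely the Laplace transform of $W_r(\mathbf{A}\mathbf{M}\mathbf{A}^{\prime},\nu)$.

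It remains to confirm that this target distribution is a genuine nonsingular Wishart. Because $\mathbf{A}$ has full row rank $r$ and $\mathbf{M}>0$, the matrix $\mathbf{A}\mathbf{M}\mathbf{A}^{\prime}$ is positive definite: for any nonzero $\mathbf{y}$, $\mathbf{y}^{\prime}\mathbf{A}\mathbf{M}\mathbf{A}^{\prime}\mathbf{y}=(\mathbf{A}^{\prime}\mathbf{y})^{\prime}\mathbf{M}(\mathbf{A}^{\prime}\mathbf{y})>0$ since $\mathbf{A}^{\prime}$ has full column rank. With $\nu\ge d\ge r$, the degrees of freedom condition of Definition \ref{definition1} also holds, so $W_r(\mathbf{A}\mathbf{M}\mathbf{A}^{\prime},\nu)$ is well defined. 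By uniqueness of the Laplace transform, $\mathbf{A}\mathbf{\Lambda}\mathbf{A}^{\prime}\sim W_r(\mathbf{A}\mathbf{M}\mathbf{A}^{\prime},\nu)$, as claimed. As a sanity check in the integer-$\nu$ case, one can instead use the representation $\mathbf{\Lambda}=\sum_{i=1}^{\nu}\mathbf{z}_i\mathbf{z}_i^{\prime}$ with $\mathbf{z}_i\sim\mathcal{N}(\mathbf{0},\mathbf{M})$ i.i.d., so that $\mathbf{B}=\sum_{i=1}^{\nu}(\mathbf{A}\mathbf{z}_i)(\mathbf{A}\mathbf{z}_i)^{\prime}$ with $\mathbf{A}\mathbf{z}_i\sim\mathcal{N}(\mathbf{0},\mathbf{A}\mathbf{M}\mathbf{A}^{\prime})$, giving the result immediately; the transform argument is what extends this to arbitrary real $\nu\ge d$.
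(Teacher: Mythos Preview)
Your proof is correct. The paper itself does not prove this proposition; it is quoted as a known result from Muirhead's textbook, so there is no in-paper argument to compare against. Your Laplace-transform route is in fact the standard one used in that reference: compute the moment generating function of $\mathbf{A}\mathbf{\Lambda}\mathbf{A}^{\prime}$ via trace cyclicity, then collapse the $d\times d$ determinant to an $r\times r$ one with Sylvester's identity, and identify the result as the transform of $W_r(\mathbf{A}\mathbf{M}\mathbf{A}^{\prime},\nu)$. The closing checks on positive-definiteness of $\mathbf{A}\mathbf{M}\mathbf{A}^{\prime}$ and on $\nu\ge r$ are exactly what is needed to ensure the target Wishart is nonsingular in the sense of Definition~\ref{definition1}, and the integer-$\nu$ sanity check via the stochastic representation is a nice complement.
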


\begin{corollary}
\label{corollary1} If $\mathbf{\Lambda} \sim W_d(\mathbf{M},\nu) $ and $%
\mathbf{\Lambda} = 
\begin{psmallmatrix} \Lambdab_{11} & \Lambdab_{12}
\\ \Lambdab_{12}^{'} & \Lambdab_{22} \end{psmallmatrix} $, where $\mathbf{%
\Lambda}_{11}$ and $\mathbf{\Lambda}_{22}$ are $d_1\times d_1$ and $d_2
\times d_2$ submatrices, respectively, and if $\mathbf{M} = 
\begin{psmallmatrix} \M_{11} & \M_{12} \\ \M_{12}^{'} & \M_{22}
\end{psmallmatrix} $ is the corresponding partition of $\mathbf{M}$ with $%
\mathbf{M}_{11}$ and $\mathbf{M}_{22}$ being two $d_1 \times d_1$ and $d_2
\times d_2$ submatrices, respectively, then $\mathbf{\Lambda}_{11} \sim
W_{d_1}(\mathbf{M}_{11},\nu)$ and $\mathbf{\Lambda}_{22} \sim W_{d_2}(%
\mathbf{M}_{22},\nu)$.
\end{corollary}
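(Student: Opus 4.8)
The plan is to obtain both marginal statements as immediate applications of Proposition~\ref{proposition1}, using suitably chosen selection matrices $\mathbf{A}$ that extract the desired diagonal block. The guiding observation is that conjugating $\Lambdab$ by a horizontal block of the identity simultaneously isolates the corresponding submatrix of $\Lambdab$ and applies the \emph{same} operation to $\M$, so the transformed scale matrix comes out to be exactly the matching submatrix of $\M$. This is precisely the structure Proposition~\ref{proposition1} is built to exploit.

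First, to handle $\Lambdab_{11}$, I would take $\mathbf{A} = [\,\I_{d_1} \;\; \mathbf{0}\,]$, the $d_1 \times d$ matrix whose left block is the $d_1 \times d_1$ identity and whose right block is the $d_1 \times d_2$ zero matrix. This matrix has rank $r = d_1 \le d$, so it satisfies the hypotheses of Proposition~\ref{proposition1}. A short block computation then gives
\begin{equation*}
\mathbf{A}\Lambdab\mathbf{A}^{\prime} = \Lambdab_{11}, \qquad \mathbf{A}\M\mathbf{A}^{\prime} = \M_{11},
\end{equation*}
and Proposition~\ref{proposition1} yields $\Lambdab_{11} \sim W_{d_1}(\M_{11}, \nu)$ at once. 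The statement for $\Lambdab_{22}$ then follows symmetrically by taking $\mathbf{A} = [\,\mathbf{0} \;\; \I_{d_2}\,]$, a $d_2 \times d$ matrix of rank $r = d_2 \le d$, for which $\mathbf{A}\Lambdab\mathbf{A}^{\prime} = \Lambdab_{22}$ and $\mathbf{A}\M\mathbf{A}^{\prime} = \M_{22}$; a second application of Proposition~\ref{proposition1} gives $\Lambdab_{22} \sim W_{d_2}(\M_{22}, \nu)$.

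There is essentially no hard step here: the whole argument reduces to checking the rank condition $r \le d$ for the two selection matrices and verifying the two block products, both of which are routine. The one point that warrants a moment of care is confirming that the degrees of freedom $\nu$ are carried through unchanged under the transformation, which is exactly what Proposition~\ref{proposition1} asserts, since $\nu$ there does not depend on $r$. Hence the corollary is simply the special case of Proposition~\ref{proposition1} in which $\mathbf{A}$ is a coordinate projection.
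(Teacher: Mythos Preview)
Your proposal is correct and matches the paper's intended approach exactly: the paper states this result as a corollary of Proposition~\ref{proposition1} without giving an explicit proof, and the coordinate-projection argument with $\mathbf{A}=[\,\mathbf{I}_{d_1}\;\mathbf{0}\,]$ and $\mathbf{A}=[\,\mathbf{0}\;\mathbf{I}_{d_2}\,]$ is the standard (and only reasonable) way to read off that implication.
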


Using Corollary \ref{corollary1}, we can ensure that using the Wishart
distribution for the precision matrix $\mathbf{\Lambda}^l$ (\ref{mu_lambda})
of the joint model in (\ref{x}) will lead to the Wishart marginal
distributions for $\mathbf{\Lambda}_s^l$ and $\mathbf{\Lambda}_t^l$ in the
source and target domains separately, which is a desired property. Now we
introduce a theorem, proposed in \cite{joint_wishart}, which gives the form
of the joint distribution of the two submatrices of a partitioned Wishart
matrix.

\begin{theorem}
\label{theorem1}  \cite{joint_wishart}: Let $\mathbf{\Lambda} = 
\begin{psmallmatrix} \Lambdab_{11} & \Lambdab_{12} \\ \Lambdab_{12}^{'} &
\Lambdab_{22} \end{psmallmatrix} $ be a $(d_1+d_2) \times (d_1+d_2)$
partitioned Wishart random matrix, where the diagonal partitions are of
sizes $d_1 \times d_1$ and $d_2 \times d_2$, respectively. The Wishart
distribution of $\mathbf{\Lambda}$ has $\nu \ge d_1 +d_2$ degrees of freedom
and positive-definite scale matrix $\mathbf{M}=%
\begin{psmallmatrix}
\mathbf{M}_{11} & \mathbf{M}_{12} \\ \mathbf{M}_{12}^{'} &
\mathbf{M}_{22}\end{psmallmatrix} $ partitioned in the same way as $\mathbf{%
\Lambda}$. The joint distribution of the two diagonal partitions $\mathbf{%
\Lambda}_{11}$ and $\mathbf{\Lambda}_{22}$ have the density function given
by 
\begin{equation}
\begin{aligned} & p(\mathbf{\Lambda}_{11},\mathbf{\Lambda}_{22}) = \\ & K ~
\mathrm{etr}\left(-\frac{1}{2} \left(\mathbf{M}_{11}^{-1} +
\mathbf{F}^{'}\mathbf{C}_2 \mathbf{F}\right)\mathbf{\Lambda}_{11}\right)
\mathrm{etr}\left(-\frac{1}{2} \mathbf{C}_2^{-1}
\mathbf{\Lambda}_{22}\right) \\ & \times
|\mathbf{\Lambda}_{11}|^{\frac{\nu-d_2-1}{2}} ~
|\mathbf{\Lambda}_{22}|^{\frac{\nu-d_1-1}{2}} ~ ~_0F_1\left(\frac{\nu}{2};
\frac{1}{4}\mathbf{G} \right), \end{aligned}
\end{equation}
where $\mathbf{C}_2=\mathbf{M}_{22} - \mathbf{M}_{12}^{^{\prime }}\mathbf{M}%
_{11}^{-1}\mathbf{M}_{12}$, $\mathbf{F}=\mathbf{C}_2^{-1} \mathbf{M}%
_{12}^{^{\prime }} \mathbf{M}_{11}^{-1}$, $\mathbf{G}=\mathbf{\Lambda}_{22}^{%
\frac{1}{2}} \mathbf{F} \mathbf{\Lambda}_{11} \mathbf{F}^{^{\prime }}\mathbf{%
\Lambda}_{22}^{\frac{1}{2}}$, $K^{-1} = 2^{\frac{(d_1+d_2)\nu}{2}}
\Gamma_{d_1}\left(\frac{\nu}{2}\right) \Gamma_{d_2}\left(\frac{\nu}{2}%
\right) |\mathbf{M}|^{\frac{\nu}{2}}$, and$~_0F_1$ is the generalized
matrix-variate hypergeometric function.
\end{theorem}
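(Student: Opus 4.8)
The plan is to derive $p(\Lambdab_{11},\Lambdab_{22})$ from the full Wishart density of $\Lambdab$ in Definition~\ref{definition1} (for dimension $p:=d_1+d_2$) by integrating out the off-diagonal block $\Lambdab_{12}$, which ranges over all real $d_1\times d_2$ matrices subject to $\Lambdab>0$. First I would split the two $\Lambdab$-dependent factors of the density. For the exponential, I would invert $\M$ in block form: with $\C_2=\M_{22}-\M_{12}'\M_{11}^{-1}\M_{12}$ and $\F=\C_2^{-1}\M_{12}'\M_{11}^{-1}$, the blocks of $\M^{-1}$ are $(\M^{-1})_{22}=\C_2^{-1}$, $(\M^{-1})_{11}=\M_{11}^{-1}+\F'\C_2\F$, and off-diagonal $-\F'$, $-\F$. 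Expanding $\mathrm{tr}(\M^{-1}\Lambdab)$ blockwise separates it into a piece linear in $\Lambdab_{11}$ (with matrix $\M_{11}^{-1}+\F'\C_2\F$), a piece linear in $\Lambdab_{22}$ (with matrix $\C_2^{-1}$), and a cross term $-2\,\mathrm{tr}(\F\Lambdab_{12})$; after the overall factor $-\tfrac12$, the first two pieces reproduce the two $\mathrm{etr}$ factors of the claimed density and the cross term leaves a residual $\mathrm{etr}(\F\Lambdab_{12})$ to be integrated. For the determinant I would use the Schur identity $|\Lambdab|=|\Lambdab_{11}|\,|\Lambdab_{22}-\Lambdab_{12}'\Lambdab_{11}^{-1}\Lambdab_{12}|$, which confines the remaining $\Lambdab_{12}$ dependence of $|\Lambdab|^{(\nu-p-1)/2}$ to the Schur complement.

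Next I would standardize the $\Lambdab_{12}$ integral by the substitution $\Lambdab_{12}=\Lambdab_{11}^{1/2}\mathbf{Z}\Lambdab_{22}^{1/2}$, whose Jacobian is $|\Lambdab_{11}|^{d_2/2}|\Lambdab_{22}|^{d_1/2}$. This turns the Schur complement into $\Lambdab_{22}^{1/2}(\I-\mathbf{Z}'\mathbf{Z})\Lambdab_{22}^{1/2}$, maps the constraint $\Lambdab>0$ to $\mathbf{Z}'\mathbf{Z}<\I$, and rewrites the cross term as $\mathrm{tr}(\mathbf{B}\mathbf{Z})$ with $\mathbf{B}=\Lambdab_{22}^{1/2}\F\Lambdab_{11}^{1/2}$, so that $\mathbf{B}\mathbf{B}'=\Lambdab_{22}^{1/2}\F\Lambdab_{11}\F'\Lambdab_{22}^{1/2}=\mathbf{G}$. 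The powers of $|\Lambdab_{11}|$ and $|\Lambdab_{22}|$ assembled from the determinant split ($(\nu-p-1)/2$ each) and from the Jacobian ($d_2/2$ and $d_1/2$ respectively) combine into the polynomial prefactor of the stated density. What then remains is the single matrix integral $J=\int_{\mathbf{Z}'\mathbf{Z}<\I}|\I-\mathbf{Z}'\mathbf{Z}|^{(\nu-p-1)/2}\,\mathrm{etr}(\mathbf{B}\mathbf{Z})\,d\mathbf{Z}$, which must account for both the ${}_0F_1$ factor and the constant $K$.

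The evaluation of $J$ is the step I expect to be the main obstacle. I would expand $\mathrm{etr}(\mathbf{B}\mathbf{Z})=\sum_{k\ge0}(\mathrm{tr}\,\mathbf{B}\mathbf{Z})^k/k!$ and note that the reflection $\mathbf{Z}\mapsto-\mathbf{Z}$ preserves both the region and the weight $|\I-\mathbf{Z}'\mathbf{Z}|$, so every odd-$k$ term integrates to zero. For the even terms I would integrate $(\mathrm{tr}\,\mathbf{B}\mathbf{Z})^{2m}$ against the matrix-variate beta weight using the zonal-polynomial machinery collected in Appendix~\ref{appendix:hypergeometric}; this turns the series into $\sum_{m\ge0}\sum_{\kappa\vdash m} C_\kappa(\tfrac14\mathbf{B}\mathbf{B}')/\big[(\nu/2)_\kappa\,m!\big]={}_0F_1(\tfrac{\nu}{2};\tfrac14\mathbf{G})$, with $(\nu/2)_\kappa$ the generalized Pochhammer symbol and $C_\kappa$ the zonal polynomials. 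Equivalently one may cite the known matrix-argument integral representation of ${}_0F_1$ directly. The same computation fixes the constant: setting $\mathbf{B}=0$ reduces $J$ to the matrix-variate beta integral, whose value $\Gamma_{p}(\nu/2)/\big[\Gamma_{d_1}(\nu/2)\Gamma_{d_2}(\nu/2)\big]$ is precisely what converts the $\Gamma_{p}(\nu/2)$ of the Wishart normalizer into the product $\Gamma_{d_1}(\nu/2)\Gamma_{d_2}(\nu/2)$ in $K$ (a quick check at $d_1=d_2=1$, where $J|_{\mathbf{B}=0}=B(\tfrac12,\tfrac{\nu-1}{2})$, confirms this). Collecting the prefactor, the two $\mathrm{etr}$ terms, the ${}_0F_1$, and the constant completes the proof; the delicate parts are justifying the term-by-term integration in zonal polynomials and tracking the constant through the matrix beta integral, and a useful consistency check is that integrating $\Lambdab_{22}$ out must return the Wishart marginal of $\Lambdab_{11}$ guaranteed by Corollary~\ref{corollary1}.
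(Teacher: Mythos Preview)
The paper does not prove Theorem~\ref{theorem1}; it is quoted from the external reference \cite{joint_wishart}, so there is no in-paper argument to compare your proposal against. Your outline is nonetheless the standard derivation of this result: start from the full Wishart density, block-invert $\M$ to split the trace, apply the Schur determinant identity, substitute $\Lambdab_{12}=\Lambdab_{11}^{1/2}\mathbf{Z}\Lambdab_{22}^{1/2}$, and recognize the remaining integral over $\{\mathbf{Z}'\mathbf{Z}<\I\}$ as the Herz-type integral representation of ${}_0F_1$. The termwise zonal-polynomial route you sketch for $J$ is workable in principle but is not directly supported by the theorems collected in Appendix~\ref{appendix:hypergeometric} (those concern integrals over the cone of positive-definite matrices, not over rectangular $\mathbf{Z}$ with $\mathbf{Z}'\mathbf{Z}<\I$); invoking the known integral representation of ${}_0F_1$, as you also suggest, is the cleaner option.

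One bookkeeping remark worth flagging: carrying your computation through yields exponents $(\nu-d_1-1)/2$ on $|\Lambdab_{11}|$ and $(\nu-d_2-1)/2$ on $|\Lambdab_{22}|$ (from $(\nu-p-1)/2$ plus the Jacobian contributions $d_2/2$ and $d_1/2$ respectively), which are swapped relative to the exponents printed in the theorem statement. Since the paper only invokes the result with $d_1=d_2=d$ (equation~\eqref{joint}), the discrepancy is invisible there; your own consistency check against the Wishart marginal of Corollary~\ref{corollary1} confirms that your exponents are the ones compatible with $|\Lambdab_{11}|^{(\nu-d_1-1)/2}$ in the $\Lambdab_{11}$-marginal, so the stated theorem appears to carry a typo in the general $d_1\neq d_2$ case.
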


\begin{definition}
\label{definition2}  \cite{nagar2017properties}: The generalized
hypergeometric function of one matrix argument is defined by 
\begin{eqnarray}  \label{hypergeo}
~_pF_q(a_1,\cdots,a_p;b_1,\cdots,b_q;\mathbf{X}) \hspace{2cm}  \notag \\
= \sum_{k=0}^{\infty} \sum_{\kappa \vdash k} \frac{(a_1)_\kappa \cdots
(a_p)_{\kappa}}{(b_1)_{\kappa} \cdots (b_q)_\kappa} \frac{C_\kappa(\mathbf{X}%
)}{k!},
\end{eqnarray}
where $a_i$, $i=1,\cdots,p$, and $b_j$, $j=1,\cdots,q$, are arbitrary
complex (real in our case) numbers, $C_\kappa(\mathbf{X})$ is the zonal
polynomial of $d\times d$ symmetric matrix $\mathbf{X}$ corresponding to the
ordered partition $\kappa=(k_1,\cdots,k_d)$, $k_1 \ge \cdots \ge k_d \ge 0$, 
$k_1+\cdots k_d=k$ and $\sum_{\kappa\vdash k}$ denotes summation over all
partitions $\kappa$ of $k$. The generalized hypergeometric coefficient $%
(a)_\kappa$ is defined by 
\begin{equation}
(a)_\kappa = \prod_{i=1}^d \left(a - \frac{i-1}{2}\right)_{k_i},
\end{equation}
where $(a)_r=a(a+1)\cdots (a+r-1)$, $r=1,2,\cdots$, with $(a)_0=1$.
\end{definition}

Conditions for convergence of the series in (\ref{hypergeo}) are available
in the literature \cite{constantine1963}. From (\ref{hypergeo}) it follows 
\vspace{-.2cm}
\begin{equation}
\begin{aligned} &~_0F_0(\mathbf{X}) = \sum_{k=0}^{\infty} \sum_{\kappa
\vdash k} \frac{C_\kappa(\mathbf{X})}{k!} = \sum_{k=0}^{\infty}
\frac{(\mathrm{tr}(\mathbf{X}))^k}{k!} = \mathrm{etr}(\mathbf{X}), \\
&~_1F_0(a;\mathbf{X}) = \sum_{k=0}^{\infty} \sum_{\kappa \vdash k}
\frac{(a)_\kappa C_\kappa(\mathbf{X})}{k!} = |\mathbf{I}_m -
\mathbf{X}|^{-a}, ~~ ||\mathbf{X}|| <1, \\ &~_0F_1(b;\mathbf{X}) =
\sum_{k=0}^{\infty} \sum_{\kappa \vdash k} \frac{
C_\kappa(\mathbf{X})}{(b)_\kappa k!}, \\ &~_1F_1(a;b;\mathbf{X}) =
\sum_{k=0}^{\infty} \sum_{\kappa \vdash k}
\frac{(a)_\kappa}{(b)_\kappa}\frac{ C_\kappa(\mathbf{X})}{k!}, \\
&~_2F_1(a,b;c;\mathbf{X}) = \sum_{k=0}^{\infty} \sum_{\kappa \vdash k}
\frac{(a)_\kappa (b)_\kappa}{(c)_\kappa}\frac{ C_\kappa(\mathbf{X})}{k!},
~~~||\mathbf{X}|| <1, \end{aligned}  \label{Gauss}
\end{equation}
where $||\mathbf{X}|| <1$ means that the maximum of the absolute values of
the eigenvalues of $\mathbf{X}$ is less than $1$. $_1F_1(a;b;\mathbf{X})$
and $_2F_1(a,b;c;\mathbf{X}) $ are respectively called Confluent and Gauss
hypergeometric functions of matrix argument. See Appendix \ref{appendix:hypergeometric} for some useful theorems on
zonal polynomials and generalized hypergeometric functions of matrix arguments. We use those 
theorems to derive the posterior densities and posterior predictive densities of the target parameters in closed forms 
in terms of Confluent and Gauss hypergeometric functions of matrix argument in Sections \ref{sec3} and \ref{sec4}, respectively.

Now, using Theorem \ref{theorem1}, we define the joint prior distribution, $%
p(\mathbf{\Lambda }_{s}^{l},\mathbf{\Lambda }_{t}^{l})$ in (\ref{p_mu}), of
the precision matrices of the source and target domains for class $l\in
\{1,\cdots ,L\}$ as follows: 
\vspace{-.2cm}
\begin{equation}
\begin{aligned} &p(\mathbf{\Lambda}_{t}^l,\mathbf{\Lambda}_{s}^l) = K^l
\mathrm{etr}\left(-\frac{1}{2} \left({\left(\mathbf{M}_{t}^l\right)}^{-1} +
{\mathbf{F}^l}^{'}\mathbf{C}^l
\mathbf{F}^l\right)\mathbf{\Lambda}_{t}^l\right) \\ & \times
\mathrm{etr}\left(-\frac{1}{2} {\left(\mathbf{C}^l\right)}^{-1}
\mathbf{\Lambda}_{s}^l\right) \\ & \times
\left|\mathbf{\Lambda}_{t}^l\right|^{\frac{\nu^l-d-1}{2}}
\left|\mathbf{\Lambda}_{s}^l\right|^{\frac{\nu^l-d-1}{2}}
~_0F_1\left(\frac{\nu^l}{2}; \frac{1}{4}\mathbf{G}^l \right), \end{aligned}
\label{joint}
\end{equation}%
where $\mathbf{M}=%
\begin{psmallmatrix}
\M_{t}^{l} & \M_{ts}^{l} \\ {\M_{ts}^{l}}^{'} & \M_{s}^{l}
\end{psmallmatrix} $ 
is a $2d\times 2d$ positive definite scale matrix, $\nu ^{l}\geq 2d$ denotes
degrees of freedom, and
\vspace{-.2cm}
\begin{equation}
\begin{aligned}
\mathbf{C}^{l} &=&\mathbf{M}_{s}^{l}-{\mathbf{M}_{ts}^{l}}^{^{\prime }}{%
\left( \mathbf{M}_{t}^{l}\right) }^{-1}\mathbf{M}_{ts}^{l}, \\
\mathbf{F}^{l} &=&{\left( \mathbf{C}^{l}\right) }^{-1}{\mathbf{M}_{ts}^{l}}%
^{^{\prime }}{\left( \mathbf{M}_{t}^{l}\right) }^{-1}, \\
\mathbf{G}^{l} &=&{\mathbf{\Lambda }_{s}^{l}}^{\frac{1}{2}}\mathbf{F}^{l}%
\mathbf{\Lambda }_{t}^{l}{\mathbf{F}^{l}}^{^{\prime }}{\mathbf{\Lambda }%
_{s}^{l}}^{\frac{1}{2}}, \\
{(K^{l})}^{-1} &=&2^{d\nu ^{l}}\Gamma _{d}^{2}\left( \frac{\nu ^{l}}{2}%
\right) |\mathbf{M}^{l}|^{\frac{\nu ^{l}}{2}}.
\end{aligned}
\end{equation}
Using Corollary \ref{corollary1}, $\mathbf{\Lambda }_{t}^{l}$ and $\mathbf{%
\Lambda }_{s}^{l}$ have the following Wishart marginal distributions: 
\begin{equation}
\mathbf{\Lambda }_{z}^{l}\sim W_{d}(\mathbf{M}_{z}^{l},\nu ^{l}),~~~l\in
\{1,\cdots ,L\},~~~z\in \{s,t\}.  \label{marg_t}
\end{equation}

\vspace{-.3cm}
\section{Posteriors of Target Parameters}
\label{sec3}

\begin{figure*}[h!]
\centering
\begin{tikzpicture}[->,>=stealth',shorten >=.5pt,auto,node distance=2cm,
                thick,main node/.style={circle,draw}]

  \node[main node, cloud, draw,cloud puffs=10,cloud puff arc=120, aspect=1.5, inner ysep=.3em] (1) {$\mathcal{D}_t^l$};
  \node[main node] (2) [right of=1] {$\mub_t^l$};
  \node[main node] (3) [right of=2] {$\Lambdab_t^l$};
  \node[main node] (4) [right of=3] {$\Lambdab_s^l$};
  \node[main node] (5) [right of=4] {$\mub_s^l$};
  \node[main node, cloud, draw,cloud puffs=10,cloud puff arc=120, aspect=1.5, inner ysep=.3em] (6) [right of=5] {$\mathcal{D}_s^l$};
  \node[main node,rectangle,minimum height=1.8cm,minimum width=5.7cm,rounded corners=.3cm,dashed] (7) [right of=1, label=below:Target Domain] {};
  \node[main node,rectangle,minimum height=1.8cm,minimum width=5.7cm,rounded corners=.3cm,dashed] (8) [right of=4, label=below:Source Domain] {};

  \path
    (3) edge  node {} (2)
        edge [bend right] node {} (1)
    (2) edge  node {} (1)
    (4) edge  node {} (5)
        edge [bend left] node {} (6)
    (5) edge  node {} (6);
  \path[-]
  (3) edge  node {} (4);
\end{tikzpicture}
\vspace{-.1cm}
\caption{{\protect\footnotesize Dependency of the source and target domains
through their precision matrices for any class $l\in \{1,\cdots,L\}$.}}
\label{fig1}
\vspace{-.2cm}
\end{figure*}
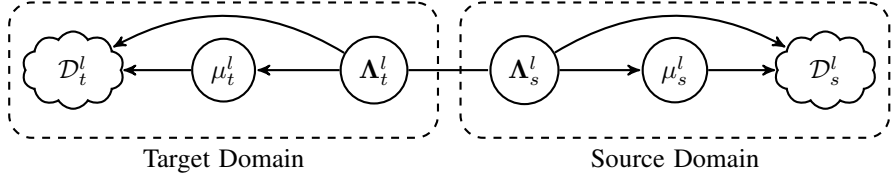

Having defined the prior distributions in the previous section, we aim to
derive the posterior distribution of the parameters of the target domain
upon observing the training source $\mathcal{D}_s$ and target $\mathcal{D}_t$
datasets. The likelihood of the datasets $\mathcal{D}_t$ and $\mathcal{D}_s$
is conditionally independent given the parameters of the target and source
domains. The dependence between the two domains is due to the dependence of
the prior distributions of the precision matrices, as shown in Fig \ref{fig1}%
. Within each domain, source or target, the likelihoods of the different
classes are also conditionally independent given the parameters of the
classes. As such, the joint likelihood of the datasets $\mathcal{D}_t$ and $%
\mathcal{D}_s$ can be written as \vspace{-.2cm} 
\begin{equation}  \label{likelihood}
\begin{aligned} p(\mathcal{D}_t,&\mathcal{D}_s
|\mathbf{\mu}_t,\mathbf{\mu}_s,\mathbf{\Lambda}_t,\mathbf{\Lambda}_s) =
p(\mathcal{D}_t|\mathbf{\mu}_t,\mathbf{\Lambda}_t)p(\mathcal{D}_s|\mathbf{%
\mu}_s,\mathbf{\Lambda}_s) \\
&=p(\mathcal{D}_t^1,\cdots,\mathcal{D}_t^L|\mathbf{\mu}_t^1,\cdots,\mathbf{%
\mu}_t^L,\mathbf{\Lambda}_t^1,\cdots,\mathbf{\Lambda}_t^L) \\ &~~~~\times
p(\mathcal{D}_s^1,\cdots,\mathcal{D}_s^L|\mathbf{\mu}_s^1,\cdots,\mathbf{%
\mu}_s^L,\mathbf{\Lambda}_s^1,\cdots,\mathbf{\Lambda}_s^L) \\
&=\prod_{l=1}^L p(\mathcal{D}_t^l|\mathbf{\mu}_t^l,\mathbf{\Lambda}_t^l)
\prod_{l=1}^L p(\mathcal{D}_s^l|\mathbf{\mu}_s^l,\mathbf{\Lambda}_s^l).
\end{aligned}
\end{equation}
The posterior of the parameters given $\mathcal{D}_t$ and $\mathcal{D}_s$
satisfies \vspace{-.2cm} 
\begin{equation}
\begin{aligned}
&p(\mathbf{\mu}_t,\mathbf{\mu}_s,\mathbf{\Lambda}_t,\mathbf{\Lambda}_s|%
\mathcal{D}_t,\mathcal{D}_s) \\ &\propto
p(\mathcal{D}_t,\mathcal{D}_s|\mathbf{\mu}_t,\mathbf{\mu}_s,\mathbf{%
\Lambda}_t,\mathbf{\Lambda}_s)
p(\mathbf{\mu}_t,\mathbf{\mu}_s,\mathbf{\Lambda}_t,\mathbf{\Lambda}_s) \\
&\propto \prod_{l=1}^L
p(\mathcal{D}_t^l|\mathbf{\mu}_t^l,\mathbf{\Lambda}_t^l) \prod_{l=1}^L
p(\mathcal{D}_s^l|\mathbf{\mu}_s^l,\mathbf{\Lambda}_s^l) \prod_{l=1}^L
p(\mathbf{\mu}_t^l,\mathbf{\mu}_s^l,\mathbf{\Lambda}_t^l,\mathbf{%
\Lambda}_s^l), \end{aligned}  \label{posterior_1}
\end{equation}
where we assume that the priors of the parameters in different classes are
independent, $p(\mathbf{\mu}_t,\mathbf{\mu}_s,\mathbf{\Lambda}_t,\mathbf{%
\Lambda}_s) = \prod_{l=1}^L p(\mathbf{\mu}_t^l,\mathbf{\mu}_s^l,\mathbf{%
\Lambda}_t^l,\mathbf{\Lambda}_s^l)$. From (\ref{p_mu}) and (\ref{posterior_1}%
), \vspace{-.2cm} 
\begin{equation}
\begin{aligned}
p(\mu_t,\mu_s,\mathbf{\Lambda}_t,\mathbf{\Lambda}_s|\mathcal{D}_t,%
\mathcal{D}_s) \propto \prod_{l=1}^L
p(\mathcal{D}_t^l|\mathbf{\mu}_t^l,\mathbf{\Lambda}_t^l) p(\mathcal{D}_s^l
|\mathbf{\mu}_s^l,\mathbf{\Lambda}_s^l) \\ \hspace{1cm}\times
p\left(\mathbf{\mu}_s^l | \mathbf{\Lambda}_s^l\right)
p\left(\mathbf{\mu}_t^l | \mathbf{\Lambda}_t^l\right)
p\left(\mathbf{\Lambda}_s^l, \mathbf{\Lambda}_t^l\right). \end{aligned}
\end{equation}
We can see that the posterior of the parameters is equal to the product of
the posteriors of the parameters of each class: \vspace{-.2cm} 
\begin{eqnarray}  \label{posterior2}
p(\mu_t,\mu_s,\mathbf{\Lambda}_t,\mathbf{\Lambda}_s|\mathcal{D}_t,\mathcal{D}%
_s) = \prod_{l=1}^L p(\mathbf{\mu}_t^l,\mathbf{\mu}_s^l,\mathbf{\Lambda}_t^l,%
\mathbf{\Lambda}_s^l|\mathcal{D}_t^l,\mathcal{D}_s^l) ,
\end{eqnarray}
where \vspace{-.2cm} 
\begin{eqnarray}  \label{posterior3}
p(\mathbf{\mu}_t^l,\mathbf{\mu}_s^l,\mathbf{\Lambda}_t^l,\mathbf{\Lambda}%
_s^l|\mathcal{D}_t^l,\mathcal{D}_s^l) \propto p(\mathcal{D}_t^l|\mathbf{\mu}%
_t^l,\mathbf{\Lambda}_t^l) p(\mathcal{D}_s^l|\mathbf{\mu}_s^l,\mathbf{\Lambda%
}_s^l)  \notag \\
\times p\left(\mathbf{\mu}_s^l | \mathbf{\Lambda}_s^l\right) p\left(\mathbf{%
\mu}_t^l | \mathbf{\Lambda}_t^l\right) p\left(\mathbf{\Lambda}_s^l, \mathbf{%
\Lambda}_t^l\right).
\end{eqnarray}
Since we are interested in the posterior of the parameters of the target
domain, we integrate out the parameters of the source domain in (\ref%
{posterior2}): 
\vspace{-.2cm}
\begin{equation}
\begin{aligned} \label{posterior4} p(\mathbf{\mu}_t,\mathbf{\Lambda}_t
&|\mathcal{D}_t,\mathcal{D}_s) = \int_{\mub_s,\Lambdab_s}
p(\mathbf{\mu}_t,\mathbf{\mu}_s,\mathbf{\Lambda}_t,\mathbf{\Lambda}_s|%
\mathcal{D}_t,\mathcal{D}_s)d\mathbf{\mu}_s d\mathbf{\Lambda}_s \\ &=
\prod_{l=1}^L \int_{\mub_s^l,\Lambdab_s^l}
p(\mathbf{\mu}_t^l,\mathbf{\mu}_s^l,\mathbf{\Lambda}_t^l,\mathbf{%
\Lambda}_s^l |\mathcal{D}_t^l,\mathcal{D}_s^l) d\mathbf{\mu}_s^l
d\mathbf{\Lambda}_s^l \\ &=\prod_{l=1}^L
p(\mathbf{\mu}_t^l,\mathbf{\Lambda}_t^l |\mathcal{D}_t^l,\mathcal{D}_s^l),
\nonumber \end{aligned}
\end{equation}
where 
\begin{equation}
\begin{aligned} &p(\mathbf{\mu}_t^l,\mathbf{\Lambda}_t^l
|\mathcal{D}_t^l,\mathcal{D}_s^l) \\ &= \int_{\mub_s^l,\Lambdab_s^l}
p(\mathbf{\mu}_t^l,\mathbf{\mu}_s^l,\mathbf{\Lambda}_t^l,\mathbf{%
\Lambda}_s^l|\mathcal{D}_t^l,\mathcal{D}_s^l) d\mathbf{\mu}_s^l
d\mathbf{\Lambda}_s^l \\ &\propto
p(\mathcal{D}_t^l|\mathbf{\mu}_t^l,\mathbf{\Lambda}_t^l)
p\left(\mathbf{\mu}_t^l | \mathbf{\Lambda}_t^l\right) \\ &\times
\int_{\mub_s^l,\Lambdab_s^l}
p(\mathcal{D}_s^l|\mathbf{\mu}_s^l,\mathbf{\Lambda}_s^l)
p\left(\mathbf{\mu}_s^l | \mathbf{\Lambda}_s^l\right)
p\left(\mathbf{\Lambda}_s^l, \mathbf{\Lambda}_t^l\right) d\mathbf{\mu}_s^l
d\mathbf{\Lambda}_s^l. \end{aligned}  \label{posterior5}
\end{equation}

\begin{theorem}
\label{thm:posterior}
Given the target $\mathcal{D}_t$ and source $\mathcal{D}_s$ data, the posterior distribution of target mean $\mu_t^l$ and target precision matrix $\Lambdab_t^l$ for the class $l\in \{1,\cdots,L\}$ has Gaussian-hypergeometric-function distribution 
\begin{equation}  
\label{prop4}
\begin{aligned} &p(\mathbf{\mu}_t^l,\mathbf{\Lambda}_t^l
|\mathcal{D}_t^l,\mathcal{D}_s^l) = \\ & A^l
\left|\mathbf{\Lambda}_t^l\right|^{\frac{1}{2}} \exp
\left(-\frac{\kappa_{t,n}^l}{2}\left(\mathbf{\mu}_t^l -
\mathbf{m}_{t,n}^l\right)^{'}\mathbf{\Lambda}_t^l \left(\mathbf{\mu}_t^l -
\mathbf{m}_{t,n}^l\right) \right) \\ &\times
\left|\mathbf{\Lambda}_{t}^l\right|^{\frac{\nu^l + n_t^l -d-1}{2}}
\mathrm{etr}\left(-\frac{1}{2}
{\left(\mathbf{T}_t^l\right)}^{-1}\mathbf{\Lambda}_{t}^l\right) \\ & \times
~_1F_1\left(\frac{\nu^l + n_s^l}{2}; \frac{\nu^l}{2}; \frac{1}{2}
\mathbf{F}^l \mathbf{\Lambda}_{t}^l {\mathbf{F}^l}^{'} \mathbf{T}_s^l
\right), \end{aligned}
\end{equation}
where $A^l$ is the constant of proportionality 
\begin{equation}  \label{A4}
\begin{aligned}  
&{\left(A^l\right)}^{-1} =\left(\frac{2\pi}{\kappa_{t,n}^l}\right)^{\frac{d}{2}}
2^{\frac{d\left(\nu^l+n_t^l \right)}{2}} \Gamma_d
\left(\frac{\nu^l+n_t^l}{2} \right) \left|\mathbf{T}_t^l\right|^{\frac{\nu^l
+ n_t^l}{2}} \\ 
& ~~~~~~ \times ~_2F_1\left(\frac{\nu^l + n_s^l}{2},
\frac{\nu^l + n_t^l}{2}; \frac{\nu^l}{2}; \mathbf{T}_s^l\mathbf{F}^l
\mathbf{T}_t^l {\mathbf{F}^l}^{'} \right), 
\end{aligned}
\end{equation}
and 
\begin{equation}  \label{const1}
\begin{aligned} &\kappa_{t,n}^l = \kappa_t^l + n_t^l,  \\ 
&\mathbf{m}_{t,n}^l =
\frac{\kappa_t^l \m_t^l + n_t^l \bar{\x}_t^l}{\kappa_t^l+n_t^l}, \\
 &{\left(\mathbf{T}_t^l\right)}^{-1} =
{\left(\mathbf{M}_{t}^l\right)}^{-1} + {\mathbf{F}^l}^{'}\mathbf{C}^l
\mathbf{F}^l + \mathbf{S}_t^l \\ & \hspace{2cm} + \frac{\kappa_t^l
n_t^l}{\kappa_t^l + n_t^l} (\mathbf{m}_t^l -\bar{\x}_t^l)(\mathbf{m}_t^l
-\bar{\x}_t^l)^{'}, \\ &{\left(\mathbf{T}_s^l\right)}^{-1} =
{\left(\mathbf{C}^l\right)}^{-1} + \mathbf{S}_s^l + \frac{\kappa_s^l
n_s^l}{\kappa_s^l + n_s^l} (\mathbf{m}_s^l -\bar{\x}_s^l)(\mathbf{m}_s^l
-\bar{\x}_s^l)^{'}, \end{aligned}
\end{equation}
with sample means and covariances for $z\in\{s,t\}$ as 
\begin{equation}  \label{const2}
\bar{\mathbf{x}}_z^l = \frac{1}{n_z^l} \sum_{i=1}^{n_z^l} \mathbf{x}%
_{z,i}^l, ~~~ \mathbf{S}_z^l = \sum_{i=1}^{n_z^l} \left(\mathbf{x}_{z,i}^l - 
\bar{\mathbf{x}}_z^l \right)\left(\mathbf{x}_{z,i}^l - \bar{\mathbf{x}}_z^l
\right)^{^{\prime }}.  \notag
\end{equation}
\end{theorem}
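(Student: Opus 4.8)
The plan is to evaluate the inner integral in (\ref{posterior5}) in two stages—first integrating out the source mean $\mub_s^l$, then the source precision $\Lambdab_s^l$—and finally to fix the normalizing constant $A^l$ by normalizing the resulting kernel over $(\mub_t^l,\Lambdab_t^l)$.

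First I would integrate out $\mub_s^l$. Combining the source Gaussian likelihood $p(\mathcal{D}_s^l|\mub_s^l,\Lambdab_s^l)$ with the conditional Gaussian prior $p(\mub_s^l|\Lambdab_s^l)$ from (\ref{mu_s_mu_t}) and completing the square in $\mub_s^l$, the $\mub_s^l$-integral is a standard Gaussian integral over $\mathbb{R}^d$. It collapses the data into the sufficient statistics $\bar\x_s^l$ and $\Sb_s^l$ of (\ref{const2}) and produces the rank-one correction $\tfrac{\kappa_s^l n_s^l}{\kappa_s^l+n_s^l}(\m_s^l-\bar\x_s^l)(\m_s^l-\bar\x_s^l)'$, leaving a factor of Wishart type $|\Lambdab_s^l|^{(\nu^l+n_s^l-d-1)/2}\,\mathrm{etr}(-\tfrac12(\mathbf{T}_s^l)^{-1}\Lambdab_s^l)$ with $(\mathbf{T}_s^l)^{-1}$ exactly as in (\ref{const1}), multiplied by the coupling factor $_0F_1(\tfrac{\nu^l}{2};\tfrac14\mathbf{G}^l)$ inherited from the joint prior (\ref{joint}).

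The crux is then integrating out $\Lambdab_s^l$, where the integration variable appears nonlinearly inside the matrix argument $\mathbf{G}^l={\Lambdab_s^l}^{1/2}\F^l\Lambdab_t^l{\F^l}'{\Lambdab_s^l}^{1/2}$ of the hypergeometric function. The key is the commutation property of zonal polynomials, $C_\kappa(\mathbf{A}\mathbf{B})=C_\kappa(\mathbf{B}\mathbf{A})$, which lets me replace $\mathbf{G}^l$ by $\F^l\Lambdab_t^l{\F^l}'\Lambdab_s^l$ inside $_0F_1$, so that $\Lambdab_s^l$ enters linearly. I can then apply the generalized Laplace/Kummer integral for a Wishart kernel against a hypergeometric function of matrix argument (the integral identity in Appendix \ref{appendix:hypergeometric}), which integrates a $_pF_q$ against $|\Lambdab_s^l|^{a-(d+1)/2}\mathrm{etr}(-\tfrac12(\mathbf{T}_s^l)^{-1}\Lambdab_s^l)$ and raises it to a $_{p+1}F_q$ evaluated at the transformed argument. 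With $p=q=0$ this turns $_0F_1(\tfrac{\nu^l}{2};\cdot)$ into the Confluent function $_1F_1(\tfrac{\nu^l+n_s^l}{2};\tfrac{\nu^l}{2};\tfrac12\F^l\Lambdab_t^l{\F^l}'\mathbf{T}_s^l)$ of (\ref{prop4}), while the determinant $|\mathbf{T}_s^l|^{(\nu^l+n_s^l)/2}$ and gamma factors are absorbed into the constant. This step is the main obstacle, because it demands the correct matrix-argument integration theorem together with the zonal-polynomial identity; the rest is bookkeeping.

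Finally I would collect the $\Lambdab_t^l$- and $\mub_t^l$-dependent factors. Combining the target likelihood, the target conditional prior $p(\mub_t^l|\Lambdab_t^l)$, and the $\Lambdab_t^l$ pieces $\mathrm{etr}(-\tfrac12({(\M_t^l)}^{-1}+{\F^l}'\C^l\F^l)\Lambdab_t^l)|\Lambdab_t^l|^{(\nu^l-d-1)/2}$ from (\ref{joint}), and completing the square in $\mub_t^l$, yields the Gaussian factor in $(\mub_t^l-\m_{t,n}^l)$ with $\kappa_{t,n}^l,\m_{t,n}^l$ as in (\ref{const1}) together with the Wishart-type factor $|\Lambdab_t^l|^{(\nu^l+n_t^l-d-1)/2}\mathrm{etr}(-\tfrac12(\mathbf{T}_t^l)^{-1}\Lambdab_t^l)$; this establishes (\ref{prop4}) up to $A^l$. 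To obtain (\ref{A4}), I normalize: the $\mub_t^l$-integral is Gaussian and contributes $(2\pi/\kappa_{t,n}^l)^{d/2}$, while the $\Lambdab_t^l$-integral is again a Wishart kernel times $_1F_1$. Applying the same matrix-argument integral identity once more, and the commutation property to align the argument, raises $_1F_1$ to the Gauss function $_2F_1(\tfrac{\nu^l+n_s^l}{2},\tfrac{\nu^l+n_t^l}{2};\tfrac{\nu^l}{2};\mathbf{T}_s^l\F^l\mathbf{T}_t^l{\F^l}')$ and supplies the $\Gamma_d(\tfrac{\nu^l+n_t^l}{2})$ and $|\mathbf{T}_t^l|^{(\nu^l+n_t^l)/2}$ factors, completing (\ref{A4}).
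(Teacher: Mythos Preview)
Your proposal is correct and follows essentially the same route as the paper: integrate out $\mub_s^l$ via a Gaussian integral after completing the square, integrate out $\Lambdab_s^l$ via the Wishart--hypergeometric identity (Theorem~\ref{theorem7}) to promote $_0F_1$ to $_1F_1$, then normalize by doing the Gaussian $\mub_t^l$-integral and a second application of the same identity to promote $_1F_1$ to $_2F_1$. The one cosmetic difference is that for the final $\Lambdab_t^l$-integral the paper performs the change of variables $\Omega=\F^l\Lambdab_t^l{\F^l}'$ (which tacitly assumes $\F^l$ is invertible) before invoking Theorem~\ref{theorem7}, whereas you instead use the zonal-polynomial commutation $C_\kappa(\mathbf{A}\mathbf{B})=C_\kappa(\mathbf{B}\mathbf{A})$ to rewrite the argument as $\Lambdab_t^l{\F^l}'\mathbf{T}_s^l\F^l$ and apply the identity directly; both devices yield the same $_2F_1$, and your version has the mild advantage of not requiring $\F^l$ to be nonsingular.
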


\begin{proof}
\label{proof_posterior}
See Appendix \ref{appendix:posterior}.
\end{proof}

\section{Effective Class-Conditional Densities}

\label{sec4} In classification, the feature-label distributions are written in terms of class-conditional densities and prior class probabilities, and the posterior probabilities of the classes upon observation of data are proportional to the product of class-conditional densities and prior class probabilities, according to the Bayes rule. This also holds in the Bayesian setting except we use effective class-conditional densities, as shown in \cite{Lori1,Lori2}. For optimal Bayesian classifier \cite{Lori1,Lori2}, using the posterior predictive densities of the classes, called ``effective
class-conditional densities", leads to the optimal choices for classifiers
in order to minimize the Bayesian error estimates of the classifiers.
Similarly, we can derive the effective class-conditional densities for
defining the OBTL classifier in the target domain, albeit with the posterior
of the target parameters derived from both the target and source datasets. 

Suppose that $\mathbf{x}$ denotes a $d\times 1$ new observed data point in
the target domain that we aim to optimally classify into one of the
classes $l\in \{1,\cdots ,L\}$. In the context of the optimal Bayesian
classifier, we need the effective class-conditional densities for the $L$
classes, defined as 
\begin{equation}
p(\mathbf{x}|l)=\int_{\mathbf{\mu }_{t}^{l},\mathbf{\Lambda }_{t}^{l}}p(%
\mathbf{x}|\mathbf{\mu }_{t}^{l},\mathbf{\Lambda }_{t}^{l})\pi ^{\star }(%
\mathbf{\mu }_{t}^{l},\mathbf{\Lambda }_{t}^{l})d\mathbf{\mu }_{t}^{l}d%
\mathbf{\Lambda }_{t}^{l},  \label{eff}
\end{equation}%
for $l\in \{1,\cdots ,L\}$, where $\pi ^{\star }(\mathbf{\mu }_{t}^{l},%
\mathbf{\Lambda }_{t}^{l})=p(\mathbf{\mu }_{t}^{l},\mathbf{\Lambda }_{t}^{l}|%
\mathcal{D}_{t}^{l},\mathcal{D}_{s}^{l})$ is the posterior of $(\mathbf{\mu }%
_{t}^{l},\mathbf{\Lambda }_{t}^{l})$ upon observation of $\mathcal{D}%
_{t}^{l} $ and $\mathcal{D}_{s}^{l}$.

\begin{theorem}
\label{thm-effective}
The effective class-conditional density, denoted by $p(\mathbf{x}|l)=O_{\mathrm{OBTL}}(\mathbf{x}| l)$, in the target domain is given by
\begin{equation}
\begin{aligned} &O_{\mathrm{OBTL}}(\mathbf{x}| l) = \pi^{-\frac{d}{2}}
\left(\frac{\kappa_{t,n}^l}{\kappa_\x^l} \right)^{\frac{d}{2}} \Gamma_d
\left(\frac{\nu^l+n_t^l + 1}{2} \right) \\ & \times \Gamma_d^{-1}
\left(\frac{\nu^l+n_t^l}{2} \right)
\left|\mathbf{T}_\x^l\right|^{\frac{\nu^l + n_t^l + 1}{2}}
\left|\mathbf{T}_t^l\right|^{-\frac{\nu^l + n_t^l}{2}} \\ & \times
~_2F_1\left(\frac{\nu^l + n_s^l}{2}, \frac{\nu^l + n_t^l + 1}{2};
\frac{\nu^l}{2}; \mathbf{T}_s^l\mathbf{F}^l \mathbf{T}_\x^l
{\mathbf{F}^l}^{'} \right) \\ & \times ~_2F_1^{-1}\left(\frac{\nu^l +
n_s^l}{2}, \frac{\nu^l + n_t^l}{2}; \frac{\nu^l}{2};
\mathbf{T}_s^l\mathbf{F}^l \mathbf{T}_t^l {\mathbf{F}^l}^{'} \right),
\end{aligned}  \label{eff4}
\end{equation}
where 
\begin{equation}
\begin{aligned} 
& \kappa_\x^l = \kappa_{t,n}^l + 1 = \kappa_t^l + n_t^l + 1, \\
& {\left(\mathbf{T}_\x^l\right)}^{-1} =
{\left(\mathbf{T}_t^l\right)}^{-1} + \frac{\kappa_{t,n}^l}{\kappa_{t,n}^l +
1} \left(\mathbf{m}_{t,n}^l-\mathbf{x} \right)
\left(\mathbf{m}_{t,n}^l-\mathbf{x} \right)^{'}.
\end{aligned}
\label{update_1}
\end{equation}
\end{theorem}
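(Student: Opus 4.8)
The plan is to substitute the Gaussian sampling density $p(\mathbf{x}|\mathbf{\mu}_t^l,\mathbf{\Lambda}_t^l)$ and the posterior $\pi^\star(\mathbf{\mu}_t^l,\mathbf{\Lambda}_t^l)$ of Theorem~\ref{thm:posterior} into the defining integral (\ref{eff}), and then evaluate the two integrations in sequence: first over the target mean $\mathbf{\mu}_t^l$, then over the target precision $\mathbf{\Lambda}_t^l$. Writing $p(\mathbf{x}|\mathbf{\mu}_t^l,\mathbf{\Lambda}_t^l)=(2\pi)^{-d/2}|\mathbf{\Lambda}_t^l|^{1/2}\exp(-\tfrac{1}{2}(\mathbf{x}-\mathbf{\mu}_t^l)'\mathbf{\Lambda}_t^l(\mathbf{x}-\mathbf{\mu}_t^l))$ and multiplying it by (\ref{prop4}), the only factors depending on $\mathbf{\mu}_t^l$ are the two Gaussian exponentials: one from the posterior, with precision $\kappa_{t,n}^l\mathbf{\Lambda}_t^l$ and centre $\mathbf{m}_{t,n}^l$, and one from the new observation, with precision $\mathbf{\Lambda}_t^l$ and centre $\mathbf{x}$.

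First I would integrate out $\mathbf{\mu}_t^l$. The product of the two Gaussians is itself Gaussian in $\mathbf{\mu}_t^l$ with precision $(\kappa_{t,n}^l+1)\mathbf{\Lambda}_t^l=\kappa_\x^l\mathbf{\Lambda}_t^l$; completing the square and integrating over $\mathbb{R}^d$ yields the normalizer $(2\pi/\kappa_\x^l)^{d/2}|\mathbf{\Lambda}_t^l|^{-1/2}$ together with a residual exponential $\mathrm{etr}(-\tfrac{1}{2}\tfrac{\kappa_{t,n}^l}{\kappa_{t,n}^l+1}(\mathbf{m}_{t,n}^l-\mathbf{x})(\mathbf{m}_{t,n}^l-\mathbf{x})'\mathbf{\Lambda}_t^l)$. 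Merging this residual with the $\mathrm{etr}(-\tfrac{1}{2}(\mathbf{T}_t^l)^{-1}\mathbf{\Lambda}_t^l)$ factor already present produces exactly $\mathrm{etr}(-\tfrac{1}{2}(\mathbf{T}_\x^l)^{-1}\mathbf{\Lambda}_t^l)$ with $(\mathbf{T}_\x^l)^{-1}$ as in (\ref{update_1}); this is where the rank-one update and the identity $\kappa_\x^l=\kappa_{t,n}^l+1$ originate. Tracking the powers of $|\mathbf{\Lambda}_t^l|$ (two positive halves from the two Gaussian determinants, one negative half from the $\mu$-integration, and the Wishart-type exponent $\tfrac{\nu^l+n_t^l-d-1}{2}$) leaves $|\mathbf{\Lambda}_t^l|^{(\nu^l+n_t^l+1-d-1)/2}$.

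Next I would perform the $\mathbf{\Lambda}_t^l$ integration, which has reduced to $\int_{\mathbf{\Lambda}_t^l>0}|\mathbf{\Lambda}_t^l|^{(\nu^l+n_t^l+1-d-1)/2}\mathrm{etr}(-\tfrac{1}{2}(\mathbf{T}_\x^l)^{-1}\mathbf{\Lambda}_t^l)\,{}_1F_1(\tfrac{\nu^l+n_s^l}{2};\tfrac{\nu^l}{2};\tfrac{1}{2}\mathbf{F}^l\mathbf{\Lambda}_t^l{\mathbf{F}^l}'\mathbf{T}_s^l)\,d\mathbf{\Lambda}_t^l$. Before applying an integral formula I would use the cyclic invariance of hypergeometric functions of matrix argument, ${}_pF_q(\mathbf{A}\mathbf{B})={}_pF_q(\mathbf{B}\mathbf{A})$, to rewrite the ${}_1F_1$ argument as $\tfrac{1}{2}{\mathbf{F}^l}'\mathbf{T}_s^l\mathbf{F}^l\mathbf{\Lambda}_t^l$, i.e.\ in the canonical form (constant matrix)$\,\times\mathbf{\Lambda}_t^l$. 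The Laplace-transform-type identity for the confluent hypergeometric function of matrix argument (of the type collected in Appendix~\ref{appendix:hypergeometric}), $\int_{\mathbf{\Lambda}>0}|\mathbf{\Lambda}|^{a-\frac{d+1}{2}}\mathrm{etr}(-\mathbf{Z}\mathbf{\Lambda})\,{}_1F_1(b;c;\mathbf{T}\mathbf{\Lambda})\,d\mathbf{\Lambda}=\Gamma_d(a)|\mathbf{Z}|^{-a}\,{}_2F_1(b,a;c;\mathbf{T}\mathbf{Z}^{-1})$, then applies with $a=\tfrac{\nu^l+n_t^l+1}{2}$, $b=\tfrac{\nu^l+n_s^l}{2}$, $c=\tfrac{\nu^l}{2}$, $\mathbf{Z}=\tfrac{1}{2}(\mathbf{T}_\x^l)^{-1}$ and $\mathbf{T}=\tfrac{1}{2}{\mathbf{F}^l}'\mathbf{T}_s^l\mathbf{F}^l$. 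Since $\mathbf{Z}^{-1}=2\mathbf{T}_\x^l$, the ${}_2F_1$ argument becomes ${\mathbf{F}^l}'\mathbf{T}_s^l\mathbf{F}^l\mathbf{T}_\x^l$, which I would cycle once more to $\mathbf{T}_s^l\mathbf{F}^l\mathbf{T}_\x^l{\mathbf{F}^l}'$ to match (\ref{eff4}); the determinant contributes $2^{d(\nu^l+n_t^l+1)/2}|\mathbf{T}_\x^l|^{(\nu^l+n_t^l+1)/2}$.

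Finally I would collect constants. The $(2\pi)^{-d/2}$ from the new likelihood, the $(2\pi/\kappa_\x^l)^{d/2}$ from the $\mu$-integration, and the factor $A^l$ from (\ref{A4}) (whose reciprocal carries $(2\pi/\kappa_{t,n}^l)^{d/2}$, $2^{d(\nu^l+n_t^l)/2}$, $\Gamma_d(\tfrac{\nu^l+n_t^l}{2})$, $|\mathbf{T}_t^l|^{(\nu^l+n_t^l)/2}$, and the ${}_2F_1$ evaluated at $\mathbf{T}_s^l\mathbf{F}^l\mathbf{T}_t^l{\mathbf{F}^l}'$) combine with the $\mathbf{\Lambda}_t^l$-integral output so that the powers of $2$ and $2\pi$ collapse to $\pi^{-d/2}$, the $\kappa$ factors to $(\kappa_{t,n}^l/\kappa_\x^l)^{d/2}$, and the gamma, determinant, and hypergeometric terms organize into the stated ratios, yielding (\ref{eff4}). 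The main obstacle is the $\mathbf{\Lambda}_t^l$-integration: one must justify the Laplace-transform identity for the matrix-argument ${}_1F_1$ (term by term via zonal-polynomial integrals, with attention to convergence) and carefully track the cyclic rearrangements together with the matching of the $\tfrac{1}{2}$-scalings, since a slip there propagates into both the determinant powers and the ${}_2F_1$ argument; by comparison the Gaussian $\mu$-integration and the constant bookkeeping are routine.
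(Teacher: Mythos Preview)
Your proposal is correct and follows essentially the same approach as the paper: integrate out $\mathbf{\mu}_t^l$ by completing the Gaussian square to obtain the rank-one update $(\mathbf{T}_\x^l)^{-1}$, then evaluate the remaining $\mathbf{\Lambda}_t^l$-integral via the Laplace-transform identity of Theorem~\ref{theorem7}, and finally cancel against the normalizer $A^l$ from~(\ref{A4}). The only cosmetic difference is that the paper, when computing $(A^l)^{-1}$ in~(\ref{app:A4}), uses the linear change of variables $\Omega=\mathbf{F}^l\mathbf{\Lambda}_t^l{\mathbf{F}^l}'$ before invoking Theorem~\ref{theorem7}, whereas you invoke the cyclic invariance of the hypergeometric argument directly; both routes are equivalent here.
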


\begin{proof}
See Appendix \ref{appendix:effective}.
\end{proof}

\section{Optimal Bayesian Transfer Learning Classifier}

\label{sec5}

Let $c_{t}^{l}$ be the prior probability that the target sample $\mathbf{x}$
belongs to the class $l\in \{1,\cdots ,L\}$. Since $0<c_{t}^{l}<1$ and $%
\sum_{l=1}^{L}c_{t}^{l}=1$, a Dirichlet prior is assumed: 
\begin{equation}
(c_{t}^{1},\cdots ,c_{t}^{L})\sim \mathrm{Dir}(L,\mathbf{\xi }_{t}),
\end{equation}%
where $\mathbf{\xi }_{t}=(\xi _{t}^{1},\cdots ,\xi _{t}^{L})$ are the
concentration parameters, and $\xi _{t}^{l}>0$ for $l\in \{1,\cdots ,L\}$.
As the Dirichlet distribution is a conjugate prior for the categorical
distribution, upon observing $\mathbf{n}=(n_{t}^{1},\cdots ,n_{t}^{L})$ data
for class $l$ in the target domain, the posterior has a Dirichlet
distribution: 
\begin{equation}
\begin{aligned} \pi^{\star} = (c_t^1,\cdots,c_t^L| \mathbf{n}) &\sim
\mathrm{Dir}(L,\mathbf{\xi}_t+\mathbf{n}) \\ & =
\mathrm{Dir}(L,\xi_t^1+n_t^1, \cdots,\xi_t^L+n_t^L), \end{aligned}
\end{equation}%
with the posterior mean of $c_{t}^{l}$ as 
\begin{equation}
\mathrm{E}_{\pi ^{\star }}(c_{t}^{l})=\frac{\xi _{t}^{l}+n_{t}^{l}}{%
N_{t}+\xi _{t}^{0}},
\end{equation}%
where $N_{t}=\sum_{l=1}^{L}n_{t}^{l}$ and $\xi _{t}^{0}=\sum_{l=1}^{L}\xi
_{t}^{l}$. As such, the optimal Bayesian transfer learning (OBTL) classifier
for any new unlabeled sample $\mathbf{x}$ in the target domain is defined as 
\begin{equation}
\Psi _{\mathrm{OBTL}}(\mathbf{x})=\arg \!\max_{l\in \{1,\cdots ,L\}}\mathrm{E%
}_{\pi ^{\star }}(c_{t}^{l})O_{\mathrm{OBTL}}(\mathbf{x}|l),  \label{T-OBC}
\end{equation}%
which minimizes the expected error of the classifier in the target domain, that is, $\mathrm{E}_{\pi^{\star}}[\varepsilon(\Theta_t,\Psi _{\mathrm{OBTL}})] \leq \mathrm{E}_{\pi^{\star}}[\varepsilon(\Theta_t,\Psi)]$, where $\varepsilon(\Theta_t,\Psi)$ is the error of any arbitrary classifier $\Psi$ assuming the parameters $\Theta_t=\{c_t^l,\mu_t^l,\Lambdab_t^l\}_{l=1}^L$ of the feature-label distribution in the target domain, and the expectation is over the posterior $\pi^{\star}$ of $\Theta_t$ upon observation of data. If we do not have any prior knowledge for the selection of classes, we use the same concentration
parameter for all the classes: $\mathbf{\xi }_{t}=(\xi ,\cdots ,\xi )$.
Hence, if the number of samples in each class is the same, $n_{t}^{1}=\cdots
=n_{t}^{L}$, the first term $\mathrm{E}_{\pi ^{\star }}(c_{t}^{l})$ is the same for all
the classes and (\ref{T-OBC}) is reduced to: 
\begin{equation}
\Psi _{\mathrm{OBTL}}(\mathbf{x})=\arg \!\max_{l\in \{1,\cdots ,L\}}O_{%
\mathrm{OBTL}}(\mathbf{x}|l).  \label{OBTL_2}
\end{equation}

 We have derived the effective class-conditional densities in
closed forms (\ref{eff4}). However, deriving the OBTL classifier (\ref{T-OBC}%
) requires computing the Gauss hypergeometric function of matrix argument.
Computing the exact values of hypergeometirc functions of matrix argument
using the series of zonal polynomials, as in (\ref{Gauss}), is time-consuming
and is not scalable to high dimension. To facilitate computation, we propose
to use the Laplace approximation of this function, as in \cite%
{Laplace_approx}, which is computationally efficient and scalable. See Appendix \ref{appendix:Laplace} for the detailed description of the Laplace approximation of Gauss hypergeometric functions of matrix argument.

\vspace{-.2cm}

\section{OBC in Target Domain}

\label{sec6}

To see how the source data can help improve the performance, we compare the
OBTL classifier with the OBC based on the training data only from the target domain. Using
exactly the same modeling and parameters as the previous sections, the
priors for $\mathbf{\mu}_t^l$ and $\mathbf{\Lambda}_t^l$, from (\ref%
{mu_s_mu_t}) and (\ref{marg_t}), are given by \vspace{-.2cm} 
\begin{equation}
\vspace{-.2cm} \begin{aligned} \mathbf{\mu}_t^l | \mathbf{\Lambda}_t^l &\sim
\mathcal{N}\left(\mathbf{m}_t^l,\left(\kappa_t^l
\mathbf{\Lambda}_t^l\right)^{-1}\right), \\ \mathbf{\Lambda}_t^l &\sim
W_d(\mathbf{M}_t^l,\nu^l). \end{aligned}
\end{equation}
Using Lemma \ref{Lemma1} in Appendix \ref{appendix:posterior}, upon observing the dataset $\mathcal{D}_t^l$, the
posteriors of $\mathbf{\mu}_t^l$ and $\mathbf{\Lambda}_t^l$ will be \vspace{%
-.2cm} 
\begin{equation}
\begin{aligned} \mathbf{\mu}_t^l|\mathbf{\Lambda}_t^l , \mathcal{D}_t^l
&\sim \mathcal{N}\left(\mathbf{m}_{t,n}^l,
\left(\kappa_{t,n}^l\mathbf{\Lambda}_t^l\right)^{-1}\right), \\
\mathbf{\Lambda}_t^l | \mathcal{D}_t^l &\sim
W_d(\mathbf{M}_{t,n}^l,\nu_{t,n}^l), \end{aligned}
\end{equation}
where \vspace{-.2cm} 
\begin{equation}  \label{const11}
\begin{aligned} & \kappa_{t,n}^l = \kappa_t^l + n_t^l, ~~~ \nu_{t,n}^l =
\nu^l + n_t^l, ~~~ \mathbf{m}_{t,n}^l = \frac{\kappa_t^l \m_t^l + n_t^l
\bar{\x}_t^l}{\kappa_t^l+n_t^l}, \\ & {\left(\mathbf{M}_{t,n}^l\right)}^{-1}
= {\left(\mathbf{M}_{t}^l\right)}^{-1} + \mathbf{S}_t^l + \frac{\kappa_t^l
n_t^l}{\kappa_t^l + n_t^l} (\mathbf{m}_t^l -\bar{\x}_t^l)(\mathbf{m}_t^l
-\bar{\x}_t^l)^{'}, \end{aligned}
\end{equation}
with the corresponding sample mean and covariance: 
\begin{equation}
\vspace{-.2cm} \bar{\mathbf{x}}_t^l = \frac{1}{n_t^l} \sum_{i=1}^{n_t^l} 
\mathbf{x}_{t,i}^l, ~~~~ \mathbf{S}_t^l = \sum_{i=1}^{n_t^l} \left(\mathbf{x}%
_{t,i}^l - \bar{\mathbf{x}}_t^l \right)\left(\mathbf{x}_{t,i}^l - \bar{%
\mathbf{x}}_t^l \right)^{^{\prime }}.  \label{const22}
\end{equation}
By (\ref{eff}) and similar integral steps, the effective
class-conditional densities $p(\mathbf{x} | l) = O_{\mathrm{OBC}}(\mathbf{x}%
| l)$ for the OBC are derived as \cite{Lori1} 
\begin{equation}  \label{eff_target}
\begin{aligned} O_{\mathrm{OBC}}(\mathbf{x}| l) = \pi^{-\frac{d}{2}}
\left(\frac{\kappa_{t,n}^l}{\kappa_{t,n}^l + 1} \right)^{\frac{d}{2}}
\Gamma_d \left(\frac{\nu^l+n_t^l + 1}{2} \right) \\ \times ~ \Gamma_d^{-1}
\left(\frac{\nu^l+n_t^l}{2} \right)
\left|\mathbf{M}_\x^l\right|^{\frac{\nu^l + n_t^l + 1}{2}}
\left|\mathbf{M}_{t,n}^l\right|^{-\frac{\nu^l + n_t^l}{2}}, \end{aligned}
\end{equation}
where 
\begin{equation}  \label{update_2}
{\left(\mathbf{M}_{\mathbf{x}}^l\right)}^{-1} = {\left(\mathbf{M}%
_{t,n}^l\right)}^{-1} + \frac{\kappa_{t,n}^l}{\kappa_{t,n}^l + 1} (\mathbf{m}%
_{t,n}^l -\mathbf{x})(\mathbf{m}_{t,n}^l - \mathbf{x})^{^{\prime }}.
\end{equation}
The multi-class OBC \cite{dalton2015optimal}, under a zero-one loss
function, can be defined as 
\begin{equation}
\Psi_{\mathrm{OBC}}(\mathbf{x}) = \arg\!\max_{l\in\{1,\cdots,L\}} \mathrm{E}%
_{\pi^{\star}}(c_t^l) O_{\mathrm{OBC}}(\mathbf{x}| l).  \label{OBC}
\end{equation}
Similar to the OBTL, in the case of equal prior probabilities for the
classes, 
\begin{equation}
\Psi_{\mathrm{OBC}}(\mathbf{x}) = \arg\!\max_{l\in\{1,\cdots,L\}} O_{\mathrm{%
OBC}}(\mathbf{x}| l).  \label{OBC_2}
\end{equation}
For binary classification, the definition of the OBC in (\ref{OBC}) is
equivalent to the definition in \cite{Lori1}, where it is defined to be the
binary classifier possessing the minimum Bayesian mean square error estimate 
\cite{Lori-MMSE} relative to the posterior distribution.

\begin{theorem}
\label{theorem4}  If $\mathbf{M}_{ts}^l=\mathbf{0}$ for all $l\in
\{1,\cdots,L\}$, then 
\begin{equation}
\Psi_{\mathrm{OBTL}}(\mathbf{x}) = \Psi_{\mathrm{OBC}}(\mathbf{x}),
\end{equation}
meaning that if there is no interaction between the source and target
domains in all the classes a priori, then the OBTL classifier turns to the
OBC classifier in the target domain.
\end{theorem}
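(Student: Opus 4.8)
The plan is to show that under the hypothesis $\mathbf{M}_{ts}^l=\mathbf{0}$, the OBTL effective class-conditional density $O_{\mathrm{OBTL}}(\mathbf{x}|l)$ in (\ref{eff4}) collapses exactly onto the OBC density $O_{\mathrm{OBC}}(\mathbf{x}|l)$ in (\ref{eff_target}) for every class $l$. Since the two classifiers in (\ref{T-OBC}) and (\ref{OBC}) share the identical posterior class weights $\mathrm{E}_{\pi^{\star}}(c_t^l)$, equality of the effective densities will immediately force the two $\arg\max$ decision rules to coincide.

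First I would trace the effect of $\mathbf{M}_{ts}^l=\mathbf{0}$ through the auxiliary matrices defined after (\ref{joint}). We get $\mathbf{C}^l=\mathbf{M}_s^l$ and, crucially, $\mathbf{F}^l=(\mathbf{C}^l)^{-1}{\mathbf{M}_{ts}^l}'(\mathbf{M}_t^l)^{-1}=\mathbf{0}$. Substituting $\mathbf{F}^l=\mathbf{0}$ into the expression for $(\mathbf{T}_t^l)^{-1}$ in (\ref{const1}) annihilates the ${\mathbf{F}^l}'\mathbf{C}^l\mathbf{F}^l$ term, leaving precisely the expression $(\mathbf{M}_{t,n}^l)^{-1}$ of (\ref{const11}); hence $\mathbf{T}_t^l=\mathbf{M}_{t,n}^l$. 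Because the rank-one update defining $(\mathbf{T}_{\mathbf{x}}^l)^{-1}$ in (\ref{update_1}) and the one defining $(\mathbf{M}_{\mathbf{x}}^l)^{-1}$ in (\ref{update_2}) are the same once $\mathbf{T}_t^l=\mathbf{M}_{t,n}^l$, it follows that $\mathbf{T}_{\mathbf{x}}^l=\mathbf{M}_{\mathbf{x}}^l$; moreover $\kappa_{\mathbf{x}}^l=\kappa_{t,n}^l+1$ matches the $\kappa_{t,n}^l+1$ factor appearing in (\ref{eff_target}).

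Next I would dispose of the two Gauss hypergeometric factors. With $\mathbf{F}^l=\mathbf{0}$, both matrix arguments $\mathbf{T}_s^l\mathbf{F}^l\mathbf{T}_{\mathbf{x}}^l{\mathbf{F}^l}'$ and $\mathbf{T}_s^l\mathbf{F}^l\mathbf{T}_t^l{\mathbf{F}^l}'$ vanish. From the series (\ref{hypergeo}), only the $k=0$ term survives at the zero matrix: the empty-partition term contributes $1$, while $C_{\kappa}(\mathbf{0})=0$ for every partition of $k\ge 1$ since zonal polynomials are homogeneous of positive degree in the eigenvalues. Hence ${}_2F_1(\,\cdot\,,\,\cdot\,;\,\cdot\,;\mathbf{0})=1$, so both the ${}_2F_1$ factor and the ${}_2F_1^{-1}$ factor in (\ref{eff4}) equal one and drop out. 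What remains of $O_{\mathrm{OBTL}}(\mathbf{x}|l)$ is then the gamma-function and determinant expression in the variables $\kappa_{t,n}^l$, $\mathbf{M}_{\mathbf{x}}^l$, $\mathbf{M}_{t,n}^l$, which is verbatim $O_{\mathrm{OBC}}(\mathbf{x}|l)$ of (\ref{eff_target}).

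The argument is bookkeeping rather than deep, so I do not anticipate a serious obstacle; the single point requiring care is the evaluation of the matrix-argument hypergeometric functions at $\mathbf{0}$, which must be justified from the zonal-polynomial series in (\ref{hypergeo}) rather than by naive appeal to the scalar identity ${}_2F_1(a,b;c;0)=1$. Finally, since $O_{\mathrm{OBTL}}(\mathbf{x}|l)=O_{\mathrm{OBC}}(\mathbf{x}|l)$ for all $l$ and the weights $\mathrm{E}_{\pi^{\star}}(c_t^l)$ are identical in (\ref{T-OBC}) and (\ref{OBC}), the two maximizations return the same label for every $\mathbf{x}$, establishing $\Psi_{\mathrm{OBTL}}(\mathbf{x})=\Psi_{\mathrm{OBC}}(\mathbf{x})$.
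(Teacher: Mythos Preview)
Your proposal is correct and follows essentially the same route as the paper: deduce $\mathbf{F}^l=\mathbf{0}$ from $\mathbf{M}_{ts}^l=\mathbf{0}$, use ${}_2F_1(a,b;c;\mathbf{0})=1$ to kill the hypergeometric factors, and match $\mathbf{T}_t^l=\mathbf{M}_{t,n}^l$, $\mathbf{T}_{\mathbf{x}}^l=\mathbf{M}_{\mathbf{x}}^l$ to conclude $O_{\mathrm{OBTL}}=O_{\mathrm{OBC}}$. Your only addition is the explicit justification of ${}_2F_1(\cdot;\mathbf{0})=1$ from the zonal-polynomial series, which the paper simply asserts.
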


\begin{proof}
If $\mathbf{M}_{ts}^l=\mathbf{0}$ for all $l\in \{1,\cdots,L\}$, then $%
\mathbf{F}^l = \mathbf{0}$. Since $_2F_1(a,b;c;\mathbf{0})=1$ for any values
of $a$, $b$, and $c$, the Gauss hypergeometric functions will disappear in (%
\ref{eff4}). From (\ref{const1}) and (\ref{const11}), $\mathbf{T}_t^l=%
\mathbf{M}_{t,n}^l$. From (\ref{update_1}) and (\ref{update_2}), $\mathbf{T}%
_{\mathbf{x}}^l=\mathbf{M}_{\mathbf{x}}^l$. As a result, $O_{\mathrm{OBTL}}(%
\mathbf{x}| l)=O_{\mathrm{OBC}}(\mathbf{x}| l)$, and consequently, $\Psi_{%
\mathrm{OBTL}}(\mathbf{x})=\Psi_{\mathrm{OBC}}(\mathbf{x})$.
\end{proof}

\vspace{-.5cm}

\section{Experiments}

\label{sec7}

\subsection{Synthetic datasets}

We have considered a simulation setup and evaluated the OBTL classifiers by
the average classification error with different joint prior densities
modeling the relatedness of the source and target domains. The setup is as
follows. Unless mentioned, the feature dimension is $d=10$, the number of
classes in each domain is $L=2$, the number of source training data per
class is $n_{s}=n_{s}^{l}=200$, the number of target training data per class
is $n_{t}=n_{t}^{l}=10$, $\nu =\nu ^{l}=25$, $\kappa _{t}=\kappa
_{t}^{l}=100 $, $\kappa _{s}=\kappa _{s}^{l}=100$, for both the classes $%
l=1,2$, $\mathbf{m}_{t}^{1}=\mathbf{0}_{d}$, $\mathbf{m}_{t}^{2}=0.05\times 
\mathbf{1}_{d}$, $\mathbf{m}_{s}^{1}=\mathbf{m}_{t}^{1}+\mathbf{1}_{d}$, and 
$\mathbf{m}_{s}^{2}=\mathbf{m}_{t}^{2}+\mathbf{1}_{d}$, where $\mathbf{0}_{d}
$ and $\mathbf{1}_{d}$ are $d\times 1$ all-zero and all-one vectors,
respectively. For the scale matrices, we choose $\mathbf{M}_{t}^{l}=k_{t}%
\mathbf{I}_{d}$, $\mathbf{M}_{s}^{l}=k_{s}\mathbf{I}_{d}$, and $\mathbf{M}%
_{ts}^{l}=k_{ts}\mathbf{I}_{d}$ for two classes $l=1,2$, where $\mathbf{I}%
_{d}$ is the $d\times d$ identity matrix. Note that choosing an identity
matrix for $\mathbf{M}_{ts}^{l}$ makes sense when the order of the features
in the two domains is the same. We have the constraint that the scale matrix 
$\mathbf{M}^{l}=%
\begin{psmallmatrix} \mathbf{M}_{t}^l & \mathbf{M}_{ts}^l \\
{\mathbf{M}_{ts}^l}^{'} & \mathbf{M}_{s}^l\end{psmallmatrix}$ should be
positive definite for any class $l$. It is easy to check the following
corresponding constraints on $k_{t}$, $k_{s}$, and $k_{ts}$: $k_{t}>0$, $%
k_{s}>0$, and $|k_{ts}|<\sqrt{k_{t}k_{s}}$. We define $k_{ts}=\alpha \sqrt{%
k_{t}k_{s}}$, where $|\alpha |<1$. In this particular example, the value of $%
|\alpha |$ shows the amount of relatedness between the source and target
domains. If $|\alpha |=0$, the two domains are not related and if $|\alpha |$
is close to one, we have greater relatedness. We set $k_{t}=k_{s}=1$ and
plot the average classification error curves for different values of $%
|\alpha |$. All the simulations assume equal prior probabilities for the
classes, so we use (\ref{OBTL_2}) and (\ref{OBC_2}) for the OBTL classifier
and OBC, respectively.

We evaluate the prediction performance according to the common evaluation
procedure of Bayesian learning by average classification errors. To sample
from the prior (\ref{p_mu}) we first sample from a Wishart distribution $%
W_{2d}(\mathbf{M}^{l},\nu ^{l})$ to get a sample for $\mathbf{\Lambda }^{l}=%
\begin{psmallmatrix} \Lambdab_{t}^l & \Lambdab_{ts}^l \\
{\Lambdab_{ts}^l}^{'} & \Lambdab_{s}^l\end{psmallmatrix}$, for each class $%
l=1,2$, and then pick $(\mathbf{\Lambda }_{t}^{l},\mathbf{\Lambda }_{s}^{l})$%
, which is a joint sample from $p(\mathbf{\Lambda }_{t}^{l},\mathbf{\Lambda }%
_{s}^{l})$ in (\ref{joint}). Then given $\mathbf{\Lambda }_{t}^{l}$ and $%
\mathbf{\Lambda }_{s}^{l}$, we sample from (\ref{mu_s_mu_t}) to get samples
of $\mathbf{\mu }_{t}^{l}$ and $\mathbf{\mu }_{s}^{l}$ for $l=1,2$. Once we
have $\mathbf{\mu }_{t}^{l}$, $\mathbf{\mu }_{s}^{l}$, $\mathbf{\Lambda }%
_{t}^{l}$, and $\mathbf{\Lambda }_{s}^{l}$, we generate $100$ different
training and test sets from (\ref{x_s_x_t}). Training sets contain samples
from both the target and source domains, but the test set contains only
samples from the target domain. As the numbers of source and target training
data per class are $n_{s}$ and $n_{t}$, there are $Ln_{s}$ and $Ln_{t}$
source and target training data in total, respectively. We assume the size
of the test set per class is $1000$ in the simulations, so $2000$ in total.
For each training and test set, we use the OBTL classifier and its target-only version,
OBC, and calculate the error. Then we average all the errors for $100$
different training and test sets. We further repeat this whole process $1000$
times for different realizations of $\mathbf{\Lambda }_{t}^{l}$ and $\mathbf{%
\Lambda }_{s}^{l}$, $\mathbf{\mu }_{t}^{l}$, and $\mathbf{\mu }_{s}^{l}$ for 
$l=1,2$, and finally average all the errors and return the average
classification error. Note that in all figures, the hyperparameters used in
the OBTL classifier are the same as the ones used for simulating data,
except for the figures showing the sensitivity of the performance with
respect to different hyperparameters, in which case we assume that true
values of the hyperparameters used for simulating data are unknown.

\begin{figure}[t!]
\centering
\begin{subfigure}[t]{0.45\textwidth}
        \centering
        \includegraphics[width=\textwidth]{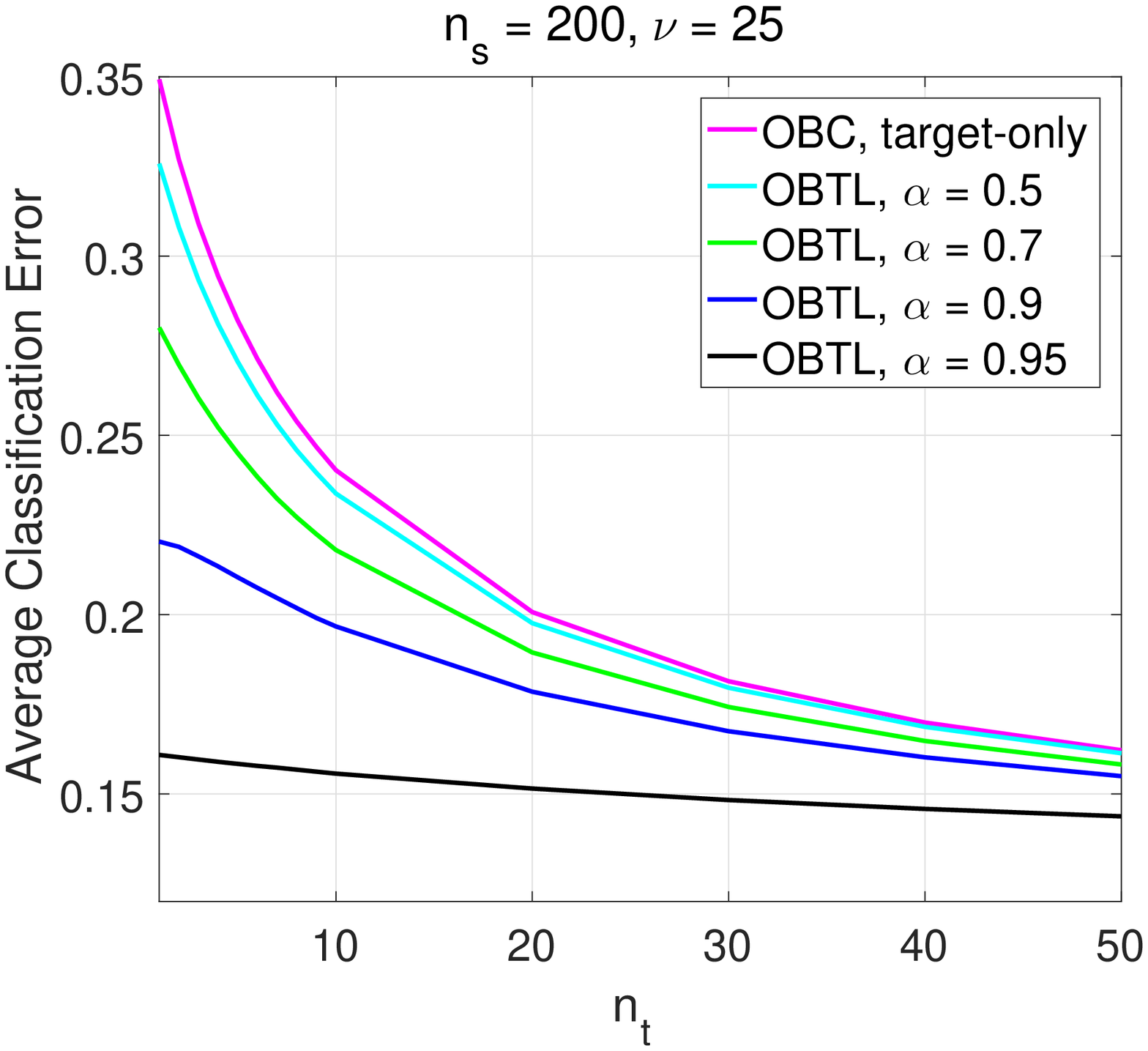}
        \caption{}
    \end{subfigure}
    
\begin{subfigure}[t]{0.45\textwidth}
        \centering
        \includegraphics[width=\textwidth]{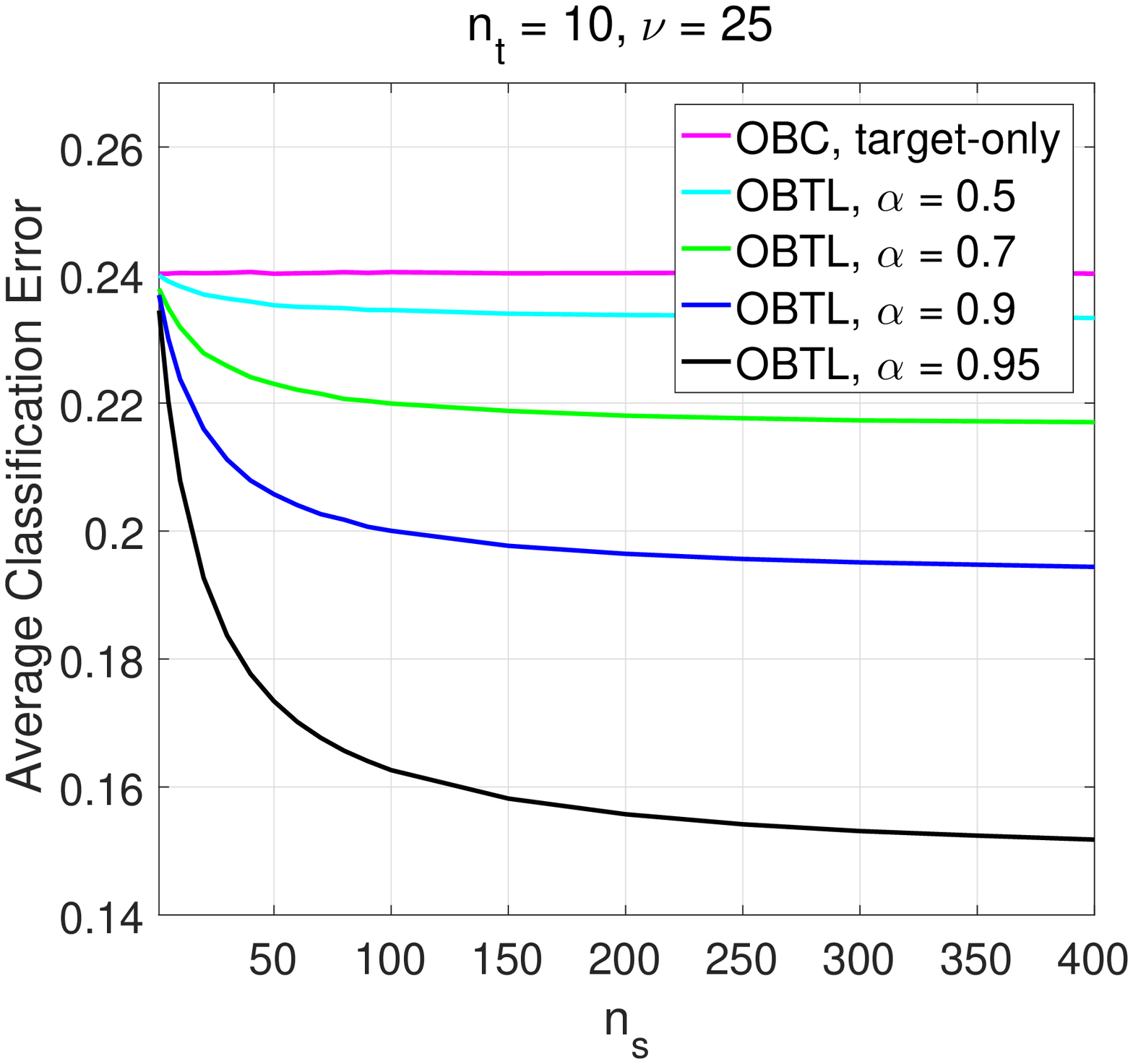}
        \caption{}
    \end{subfigure}
\caption{{\protect\footnotesize (a) Average classification error versus the
number of target training data per class, $n_t$, (b) Average classification
error versus the number of source training data per class, $n_s$.}}
\label{fig2}
\vspace{-.5cm}
\end{figure}

To examine how the source data improves the classifier in target domain, we
compare the performance of the OBTL classifier with the OBC designed in the
target domain alone. The average classification error versus $n_{t}$ is
depicted in Fig. \ref{fig2}a for the OBC and OBTL with different values of $%
\alpha $. When $\alpha $ is close to one, the performance of the OBTL
classifier is much better than that of the OBC, this due to the greater
relatedness between the two domains and appropriate use of the source data.
This performance improvement is especially noticeable when $n_{t}$ is small,
which reflects the real-world scenario. In Fig. \ref{fig2}a, we also observe
that the errors of the OBTL classifier and OBC are converging to a similar
value when $n_{t}$ gets very large, meaning that the source data are
redundant when there is a large amount of target data. When $\alpha $ is
larger, the error curves converge faster to the optimal error, which is the
average Bayes error of the target classifier. The corresponding Bayes error
averaged over $1000$ randomly generated distributions is equal to $0.122$ in this simulation setup. Recall that when $\alpha =0$,
the OBTL classifier reduces to the OBC. In this particular example, the sign
of $\alpha $ does not matter in the performance of the OBTL, which can be
verified by (\ref{eff4}). Hence, we can use $|\alpha |$ in all the cases.

Figure \ref{fig2}b depicts average classification error versus $n_{s}$ for
the OBC and OBTL with different values of $\alpha $. The error of the OBC is
constant for all $n_{s}$ as it does not employ the source data. The error of
the OBTL classifier equals that of the OBC when $n_{s}=0$ and starts to
decrease as $n_{s}$ increases. In Fig. \ref{fig2}b when $\alpha $ is larger,
the amount of improvement is greater since the two domains are more related. Another important 
point in Fig. \ref{fig2}b is that having very large source data when the two domains are highly related can compensate
the lack of target data and lead to a target classification error as small as the Bayes error in the target domain. 

Figure \ref{fig:box_plot} illustrates the box plots of the simulated classification errors corresponding to the $1000$ distributions randomly drawn from the prior distributions for both the OBC and OBTL with $\alpha=0.9$, which show the variability for different numbers $n_t$ of target data per class.

\begin{figure}[t!]
\centering
 \includegraphics[width=.45\textwidth]{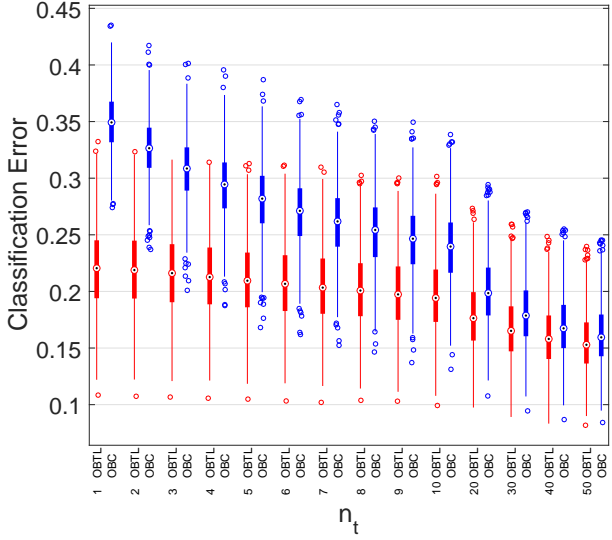}
\caption{{\protect\footnotesize Box plots of $1000$ simulated classification errors for different $n_t$. Blue denotes the OBC and red denotes the OBTL with $\alpha=0.9$.}}
\label{fig:box_plot}
\vspace{-.5cm}
\end{figure}

We investigate the sensitivity of the OBTL with respect to the
hyperparameters. Fig. \ref{fig3} represents the average classification error
of the OBTL with respect to $|\alpha|$, where we assume that we do not know
the true value $\alpha_{true}$ of the amount of relatedness between source
and target domains. In Figs. \ref{fig3}a-\ref{fig3}d we plot the error
curves when $\alpha_{true}=0.3,0.5,0.7,0.9$, respectively. We observe
several important trends in these figures. First of all, the performance
gain of the OBTL towards the OBC depends heavily on the relatedness (value
of $\alpha_{true}$) of source and target and the value of $\alpha$ used in
the classifier. Generally speaking, there exists an $\alpha_{max}$ in $(0,1)$
such that for $|\alpha| < \alpha_{max}$, the OBTL has a performance gain
towards the OBC, where the maximum gain is achieved at $|\alpha| = \alpha_{true}$ (it might not be exactly at $\alpha_{true}$ due to the Laplace approximation of the Gauss hypergeometric function). Second, the performance gain is higher when the two domains are highly
related (Fig. \ref{fig3}d). Third, when the two domains are very related,
for example, $\alpha_{true}=0.9$ in Fig. \ref{fig3}d, $\alpha_{max}=1$,
meaning that for any $|\alpha|$, the OBTL has performance gain towards the
target-only OBC. However, when the source and target domains are not related
much, like Figs. \ref{fig3}a and \ref{fig3}b, $\alpha_{max}<1$, and choosing $%
|\alpha|$ greater than $\alpha_{max}$ leads to performance loss compared to
the OBC. This means that exaggeration in the amount of relatedness between
the two domains can hurt the transfer learning classifier when the two
domains are not actually related, which refers to the concept of negative
transfer.

\begin{figure}[t!]
\centering
\begin{subfigure}[t]{0.23\textwidth}
        \centering
        \includegraphics[width=\textwidth]{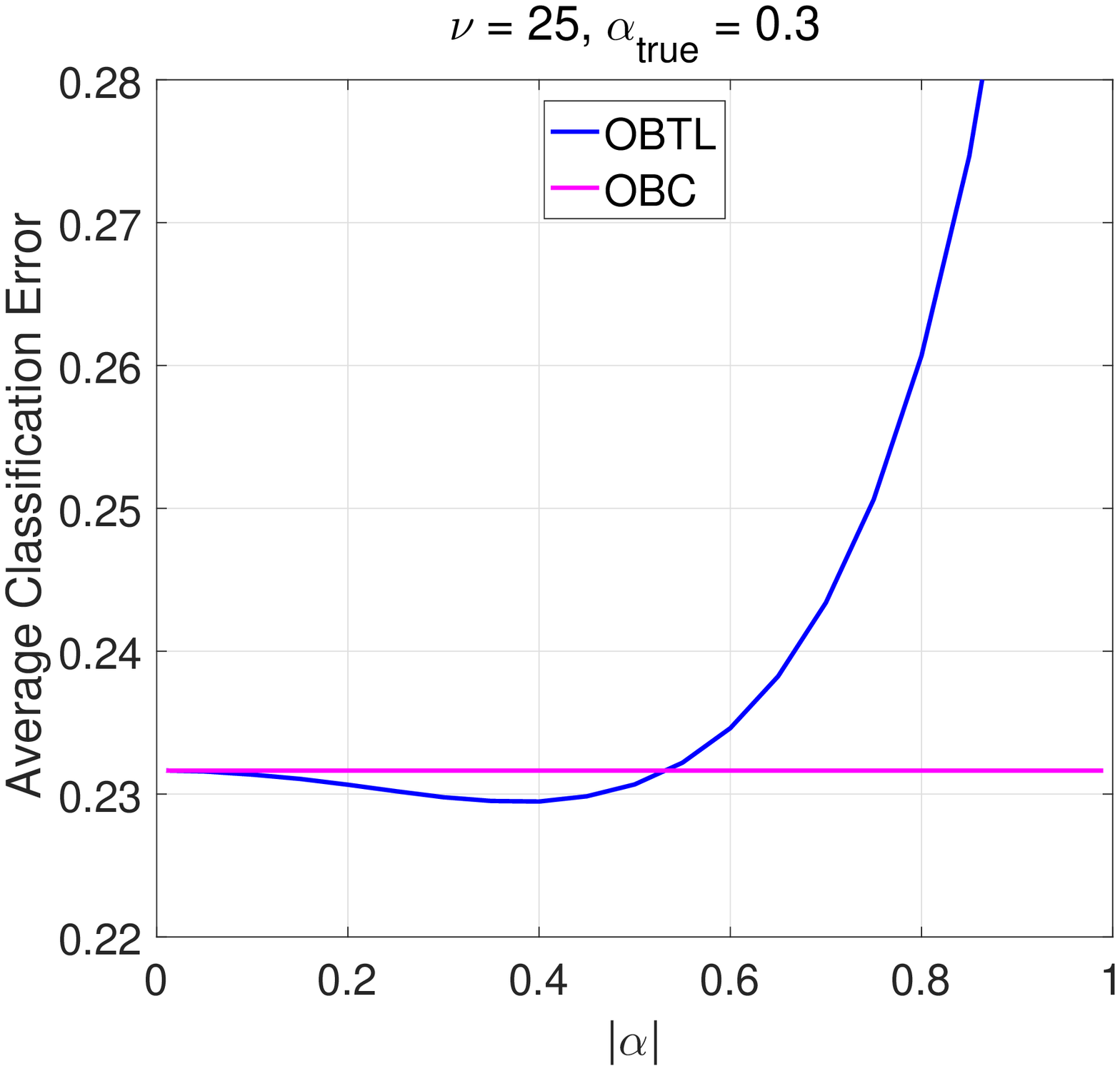}
        \caption{}
    \end{subfigure}
~ 
\begin{subfigure}[t]{0.23\textwidth}
        \centering
        \includegraphics[width=\textwidth]{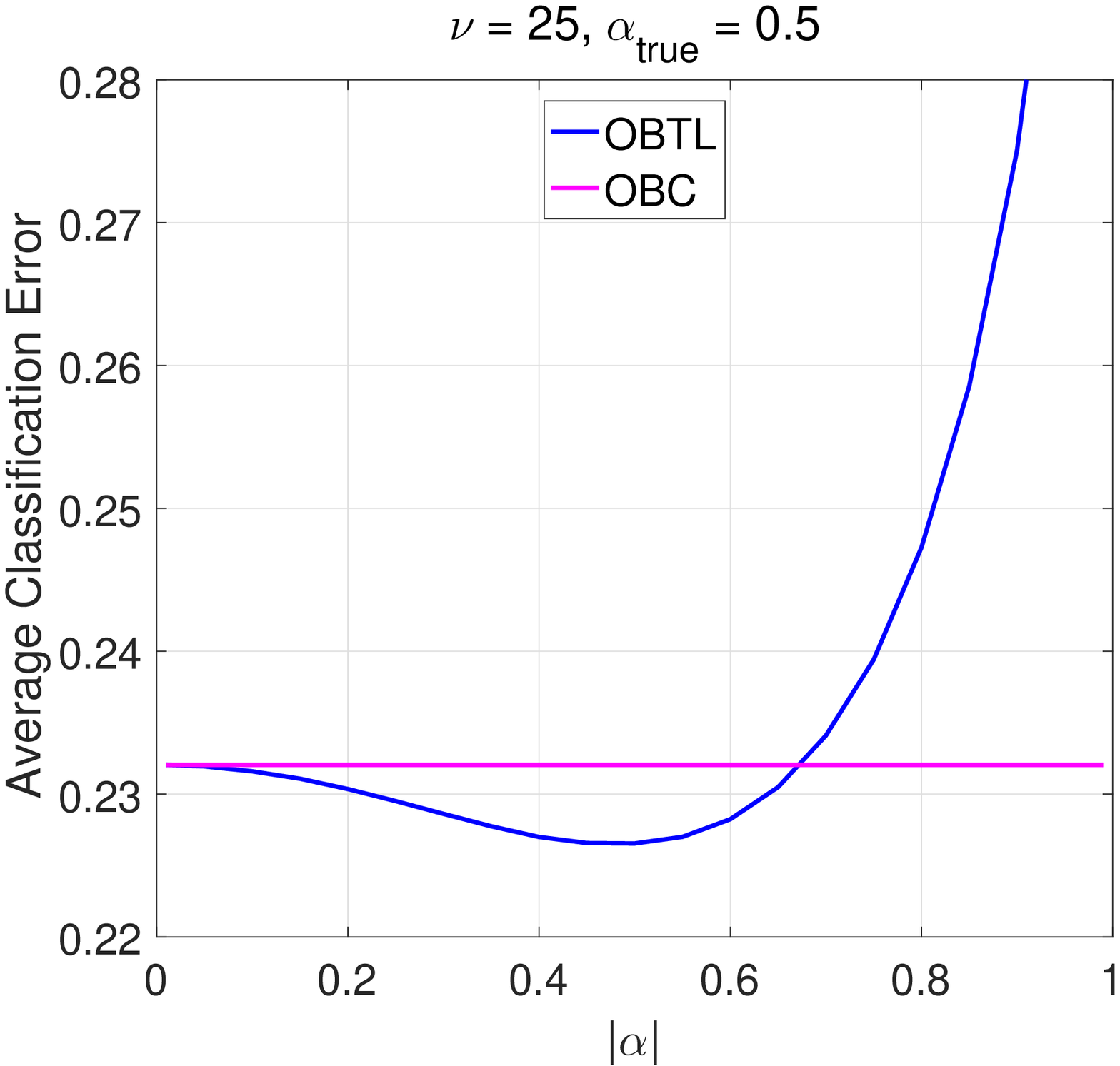}
        \caption{}
    \end{subfigure}
\par
\begin{subfigure}[t]{0.23\textwidth}
        \centering
        \includegraphics[width=\textwidth]{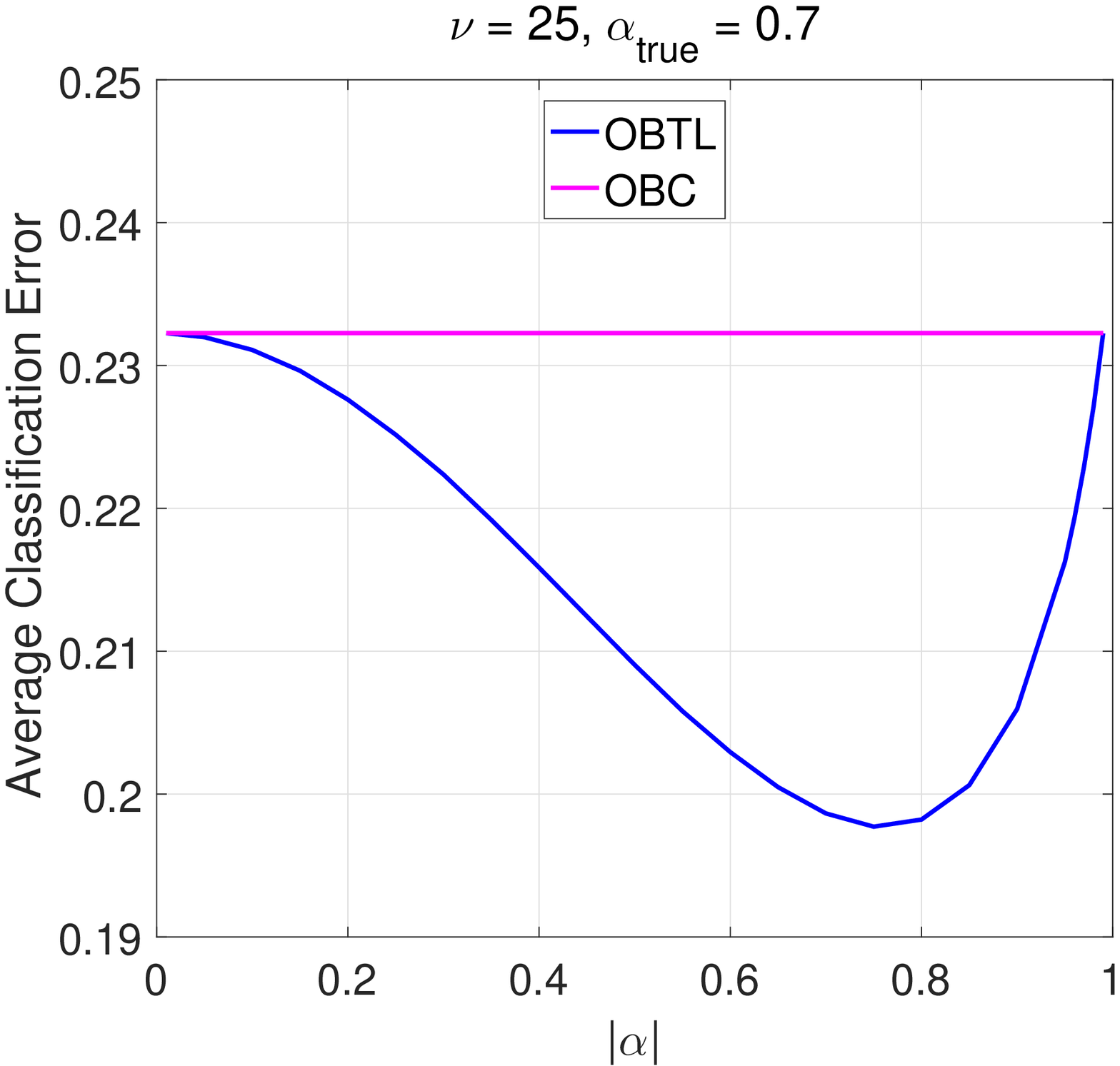}
        \caption{}
    \end{subfigure}     
~ 
\begin{subfigure}[t]{0.23\textwidth}
        \centering
        \includegraphics[width=\textwidth]{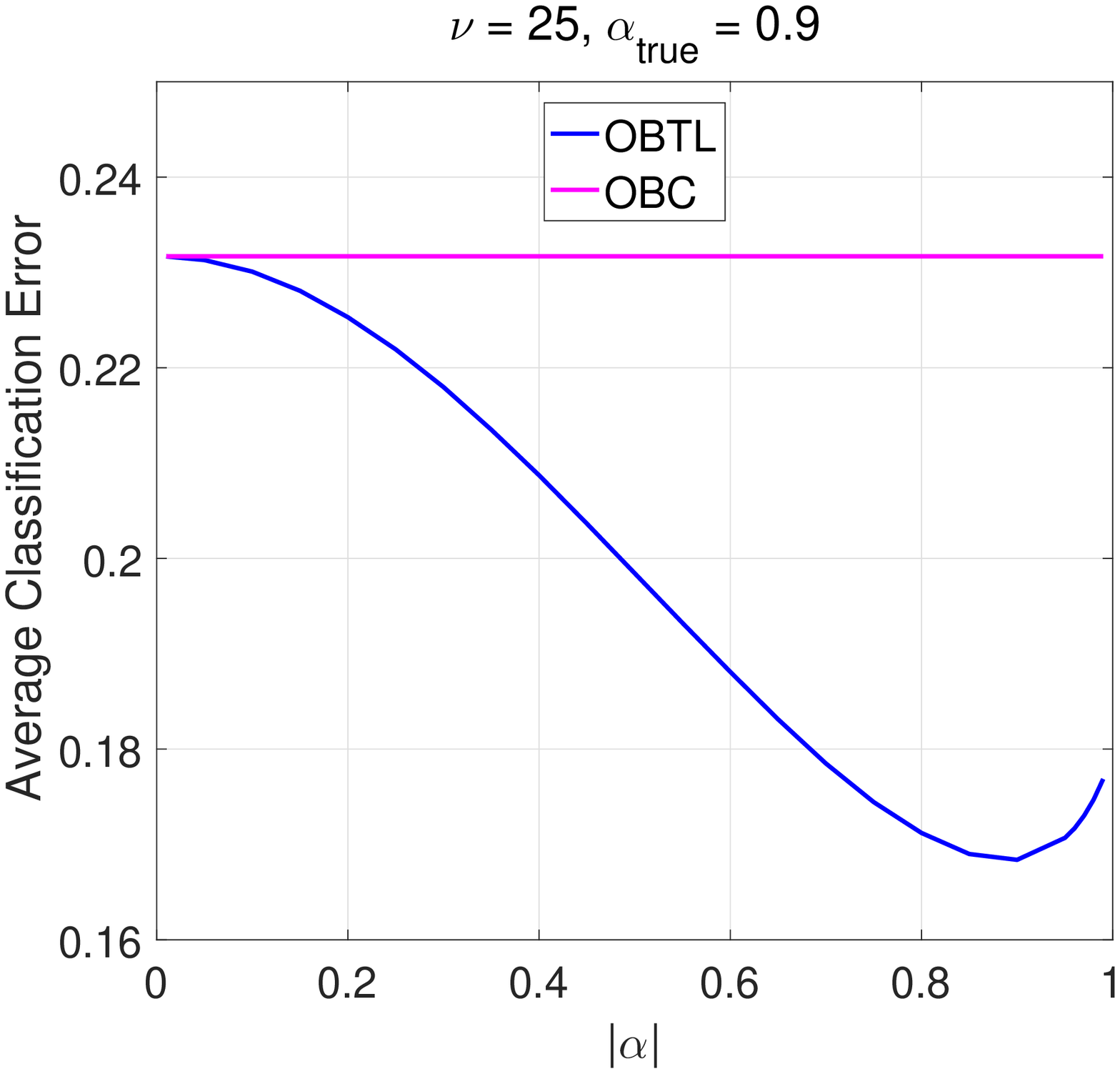}
        \caption{}
    \end{subfigure}

\caption{{\protect\footnotesize Average classification error vs $|\protect%
\alpha|$}}
\label{fig3}
\vspace{-.7cm}
\end{figure}

Figure \ref{fig4} shows the errors versus $\nu $, assuming unknown true
value $\nu _{true}$, for different values of $\alpha $ ($0.5$ and $0.9$) and 
$\nu _{true}$ ($25$ and $50$). The salient point here is that the
performance of the OBTL classifier is not so sensitive to $\nu $ if it is
chosen in its allowable range, that is, $\nu \geq 2d$. In Fig. \ref{fig4},
the error of the OBTL does not change much for $\nu \geq 2d=20$. As a
result, we can choose any arbitrary $\nu \geq 2d$ in real datasets without
worrying about critical performance deterioration.

\begin{figure}[t!]
\centering
\begin{subfigure}[t]{0.23\textwidth}
        \centering
        \includegraphics[width=\textwidth]{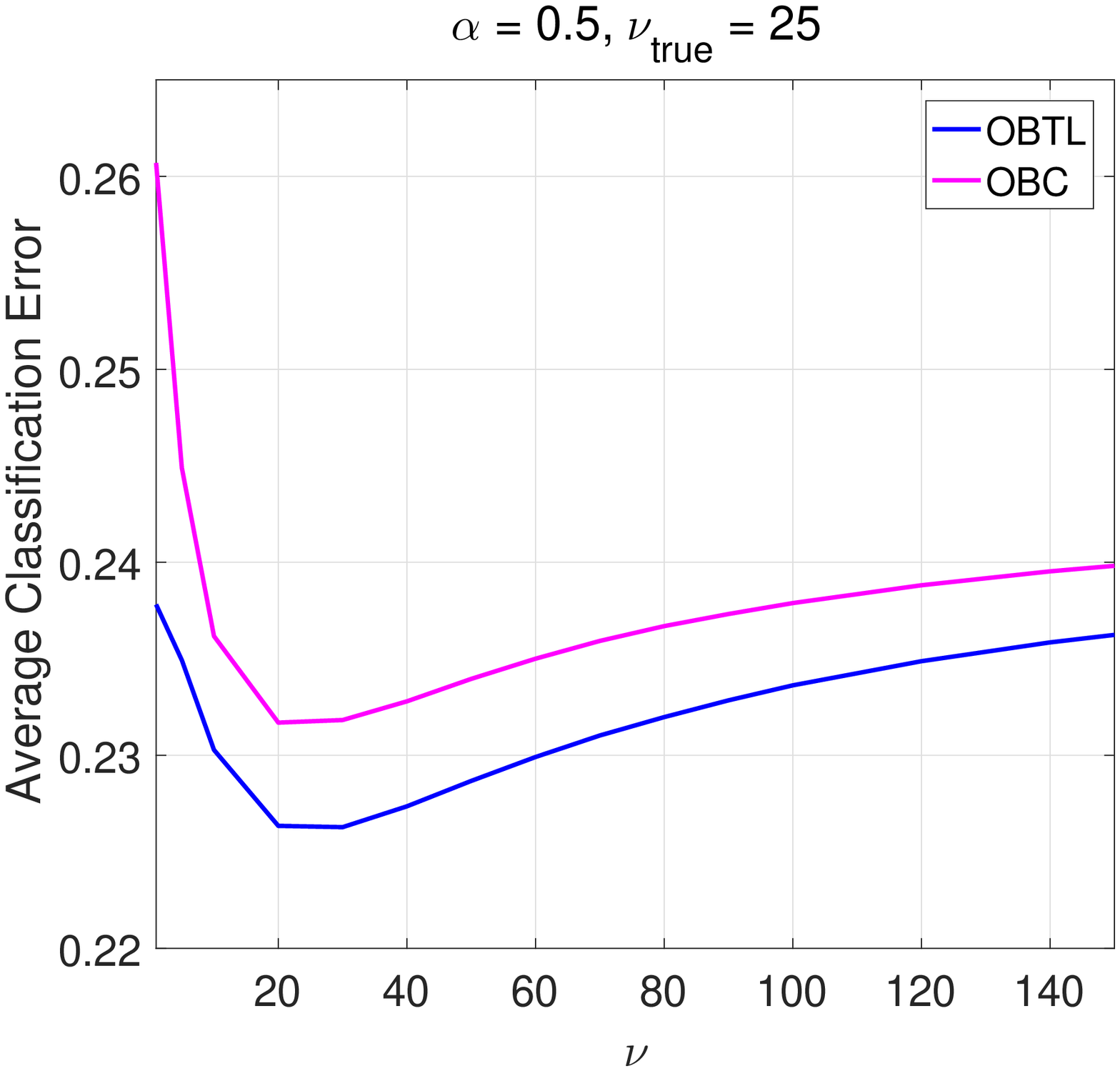}
        \caption{}
    \end{subfigure}
~ 
\begin{subfigure}[t]{0.23\textwidth}
        \centering
        \includegraphics[width=\textwidth]{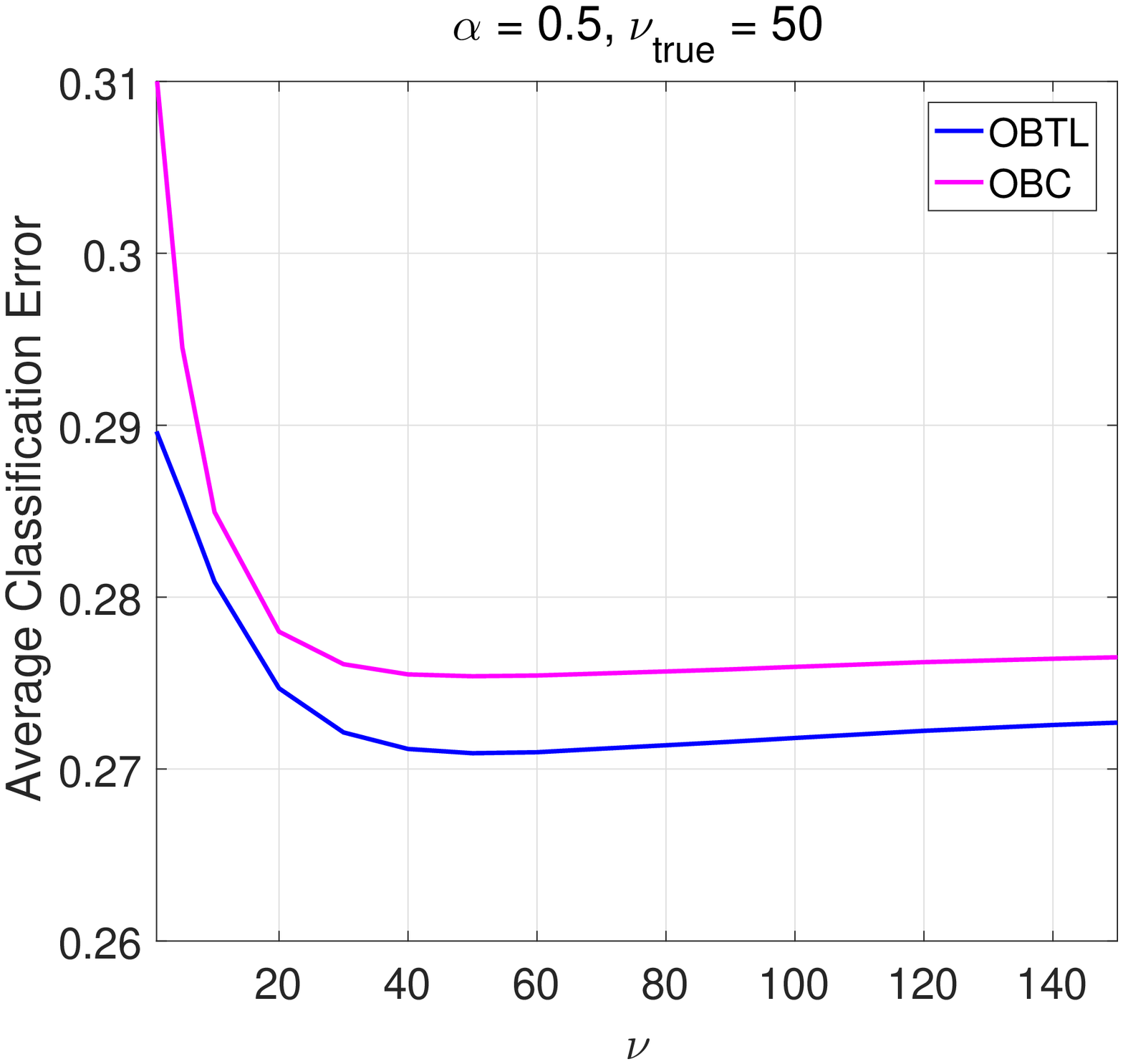}
        \caption{}
    \end{subfigure}
\par
\begin{subfigure}[t]{0.23\textwidth}
        \centering
        \includegraphics[width=\textwidth]{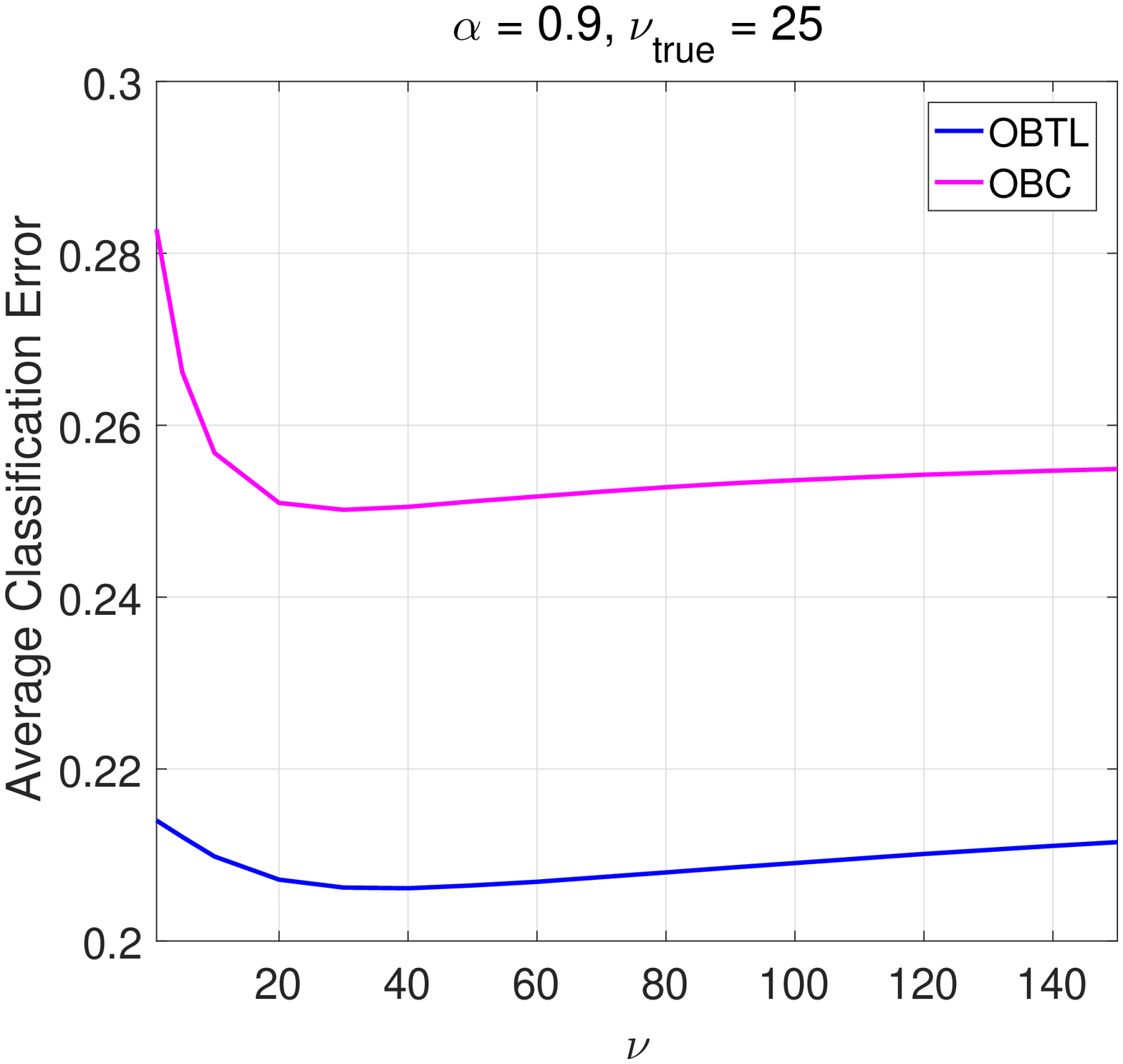}
        \caption{}
    \end{subfigure}     
~ 
\begin{subfigure}[t]{0.23\textwidth}
        \centering
        \includegraphics[width=\textwidth]{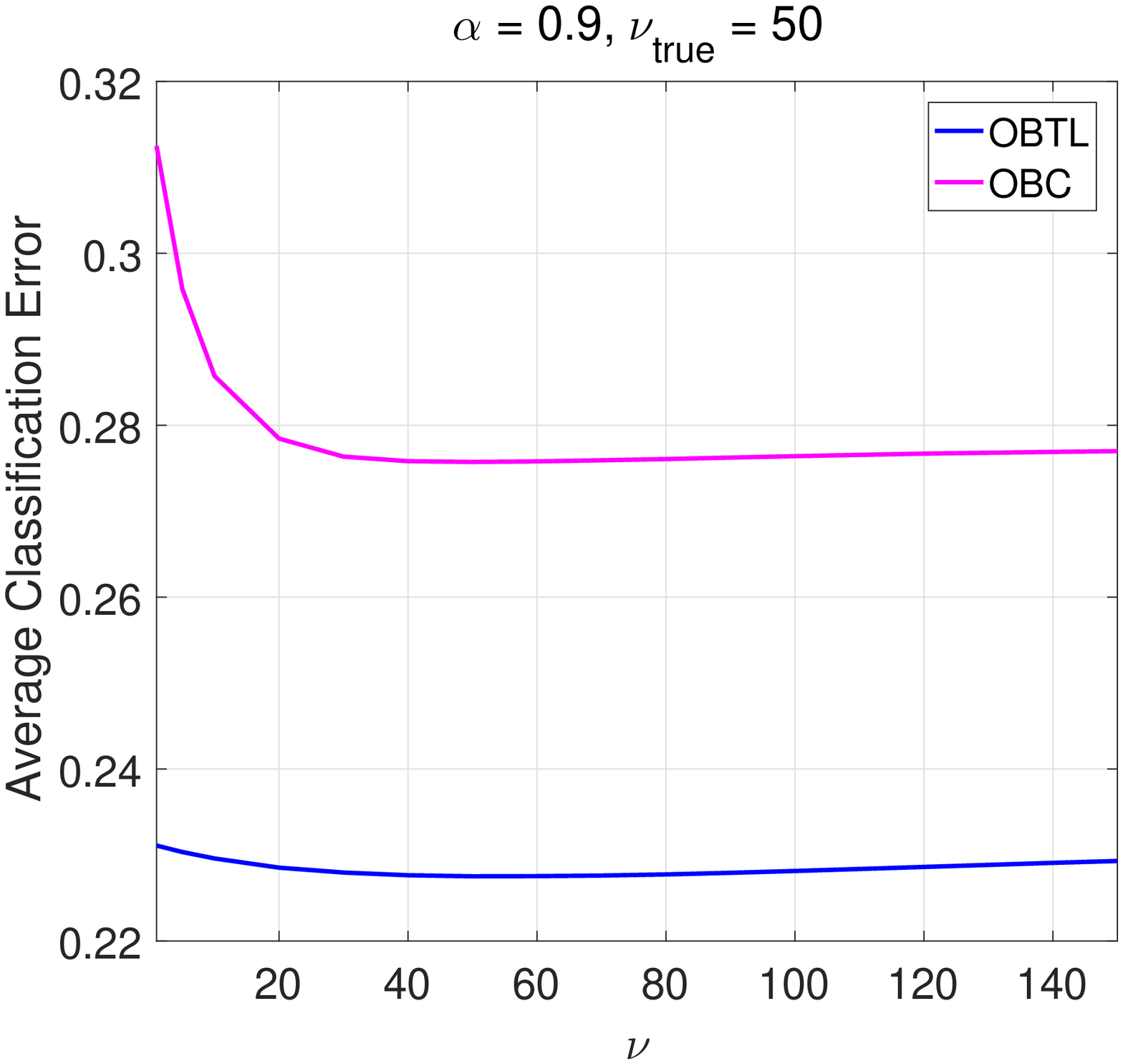}
        \caption{}
    \end{subfigure}
\caption{{\protect\footnotesize Average classification error vs $\protect\nu$%
}}
\label{fig4}
\vspace{-.5cm}
\end{figure}

Figure \ref{fig5} depicts average classification error versus $\kappa _{t}$
for two different values of $\alpha $ ($0.5$ and $0.9$), where the true
value of $\kappa _{t}$ is $\kappa _{true}=50$. Similar to $\nu $, if $\kappa
_{t}$ is greater than a value ($20$ in Fig. \ref{fig5}), the performance
does not change much. According to (\ref{const1}), it is better to choose $%
\kappa _{t}^{l}$ and $\kappa _{s}^{l}$ to be proportional to $n_{t}$ and $%
n_{s}$, respectively, since the values of updated means $\mathbf{m}%
_{t,n}^{l} $ and $\mathbf{m}_{s,n}^{l}$ are weighted averages of our prior
knowledge about means, $\mathbf{m}_{t}^{l}$ and $\mathbf{m}_{s}^{l}$, and
the sample means $\bar{\mathbf{x}}_{t}^{l}$ and $\bar{\mathbf{x}}_{s}^{l}$.
Assuming that $\kappa _{t}=\beta _{t}n_{t}$ and $\kappa _{s}=\beta _{s}n_{s}$%
, for some $\beta _{t},\beta _{s}>0$, if we have higher confidence on our priors
on means, we pick higher $\beta _{t}$ and $\beta _{s}$ (as in Fig. \ref%
{fig5}); but for the untrustworthy priors, we choose lower values for $\beta
_{t}$ and $\beta _{s}$.

Sensitivity results in Figs. \ref{fig3}, \ref{fig4}, and \ref{fig5} reveal
that in our simulation setup the performance improvement of the OBTL depends
on the value of $\alpha$ and true relatedness ($\alpha_{true}$ in this
example) between the two domains and is not affected that much by the
choices of other hyperparameters like $\nu$, $\kappa_t$, and $\kappa_s$. We
could have a reasonable range of $\alpha$ to get improved performance but
the correct estimates of \textit{relatedness} or \textit{transferability}
are critical, which is an important future research direction (see
Conclusions in Section \ref{sec8}).

\begin{figure}[t!]
\centering
\begin{subfigure}[t]{0.23\textwidth}
        \centering
        \includegraphics[width=\textwidth]{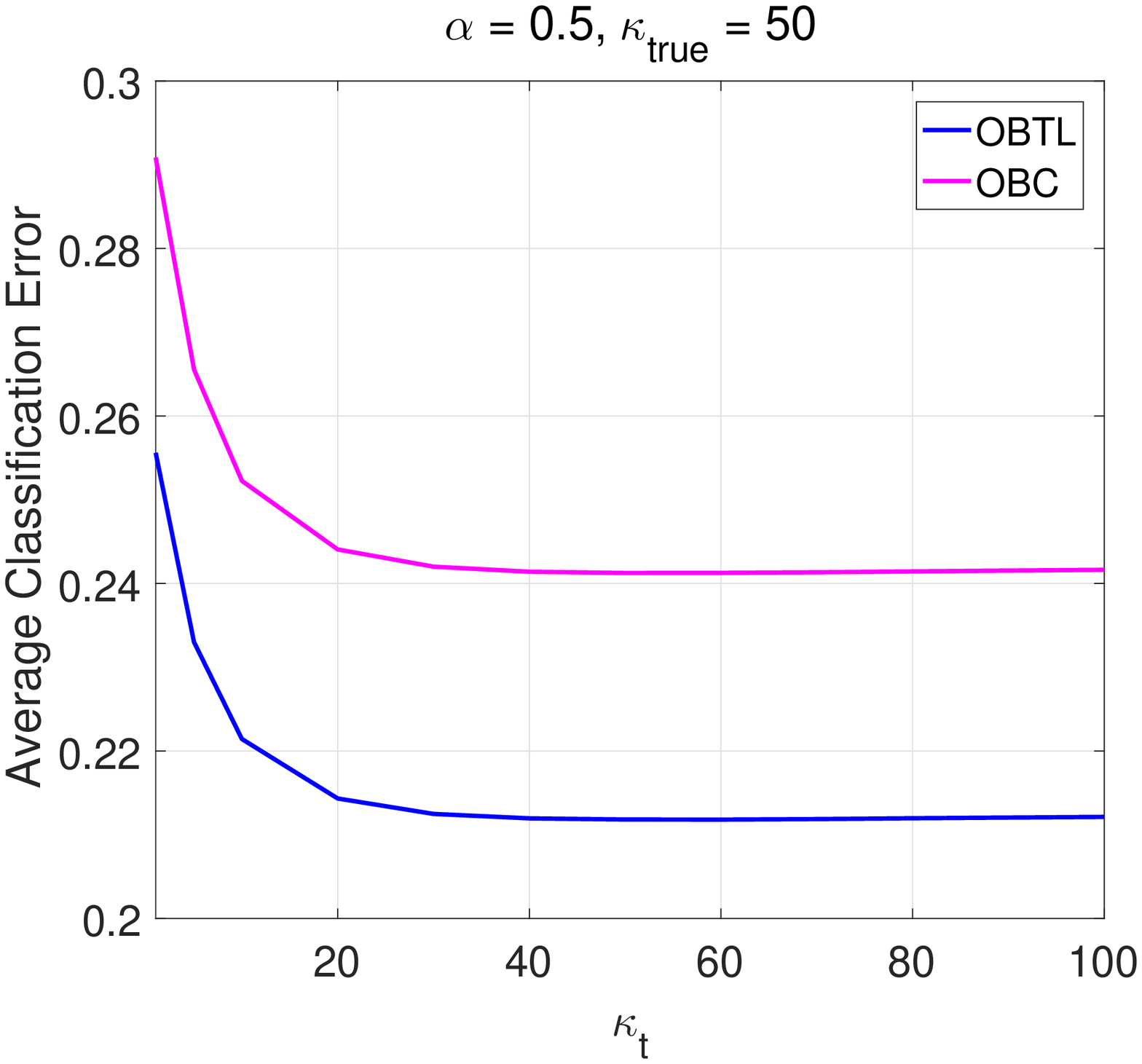}
        \caption{}
    \end{subfigure}
~ 
\begin{subfigure}[t]{0.23\textwidth}
        \centering
        \includegraphics[width=\textwidth]{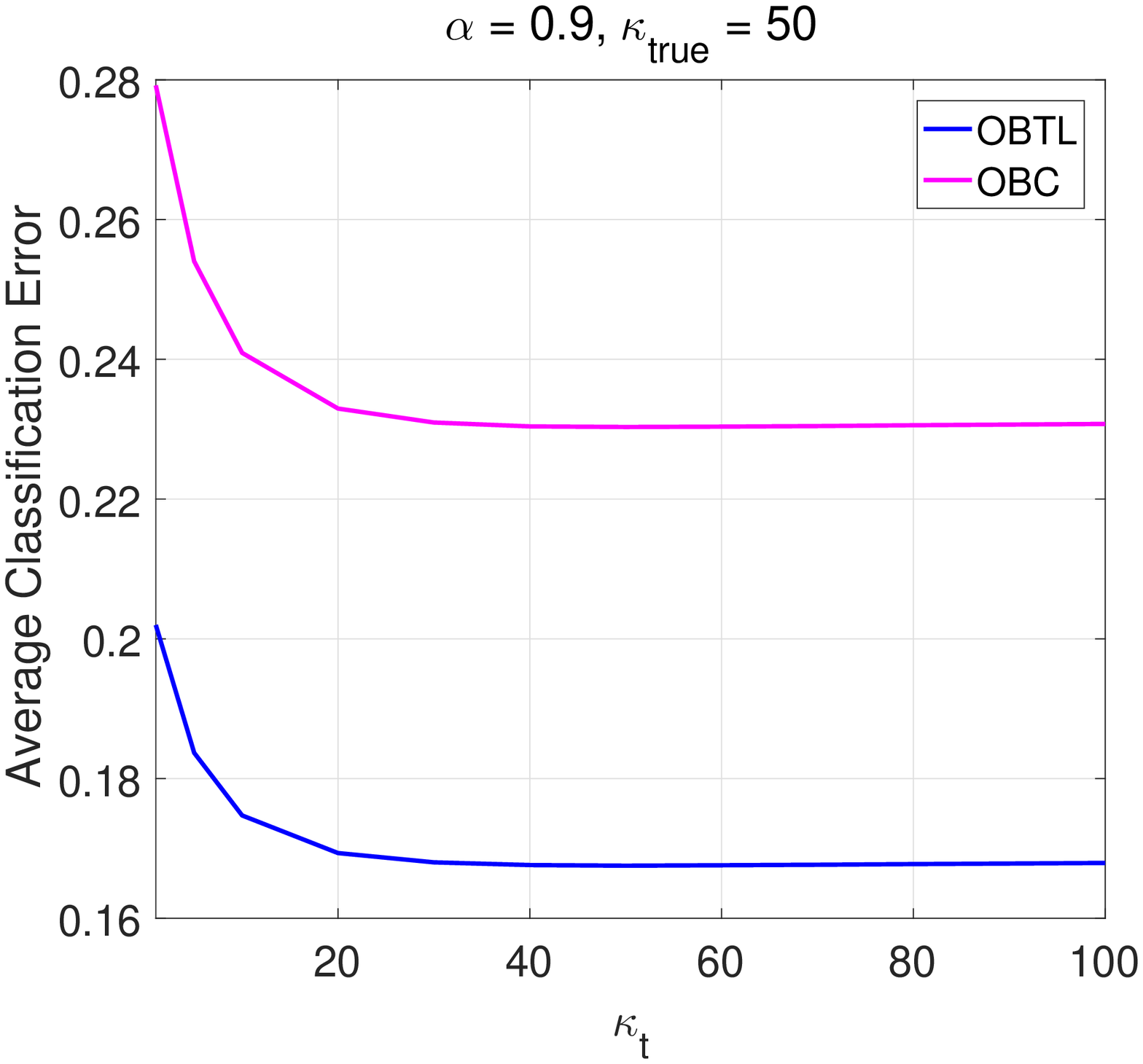}
        \caption{}
    \end{subfigure}

\caption{{\protect\footnotesize Average classification error vs $\protect%
\kappa_t$}}
\label{fig5}
\vspace{-.5cm}
\end{figure}

\subsection{Real-world benchmark datasets}

We test the OBTL classifier on \textit{Office} \cite{office} and \textit{%
Caltech256} \cite{caltech} image datasets, which have been adopted to help
benchmark different transfer learning algorithms in the literature. We have
used exactly the same evaluation setup and data splits of MMDT (Max-Margin
Domain Transform) \cite{hoffman2013}.

\begin{table*}[h!]
\caption{{\protect\footnotesize Semi-supervised accuracy for different
source and target domains in the \textit{Office+Caltech256} dataset using
SURF features. Domain names are denoted as a: \textit{amazon}, w: \textit{%
webcam}, d: \textit{dslr}, c: \textit{Caltech256}. The numbers in red show
the best accuracy and the numbers in blue show the second best accuracy in
each column. The results of the first six methods have been adopted from 
\protect\cite{ILS2017}. Similar to \protect\cite{ILS2017}, we have also used
the evaluation setup of \protect\cite{hoffman2013} for the OBTL.}}
\label{table-1}\centering
{\small \addtolength{\tabcolsep}{-3pt} 
\begin{tabular}{|c|c|c|c|c|c|c|c|c|c|c|c|c||c|}
\hline
$~$ & a $\rightarrow$ w & a $\rightarrow$ d & a $\rightarrow$ c & w $%
\rightarrow$ a & w $\rightarrow$ d & w $\rightarrow$ c & d $\rightarrow$ a & 
d $\rightarrow$ w & d $\rightarrow$ c & c $\rightarrow$ a & c $\rightarrow$ w
& c $\rightarrow$ d & Mean \\ \hline\hline
1-NN-t & 34.5 & 33.6 & 19.7 & 29.5 & 35.9 & 18.9 & 27.1 & 33.4 & 18.6 & 29.2
& 33.5 & 34.1 & 29.0 \\ \hline
SVM-t & 63.7 & 57.2 & 32.2 & 46.0 & 56.5 & 29.7 & 45.3 & 62.1 & 32.0 & 45.1
& 60.2 & 56.3 & 48.9 \\ \hline
HFA \cite{HFA2012} & 57.4 & 55.1 & 31.0 & \textbf{\textcolor{red}{56.5}} & 
56.5 & 29.0 & 42.9 & 60.5 & 30.9 & 43.8 & 58.1 & 55.6 & 48.1 \\ \hline
MMDT \cite{hoffman2013} & 64.6 & 56. 7 & 36.4 & 47.7 & 67.0 & 32.2 & 46.9 & 
74.1 & 34.1 & 49.4 & 63.8 & 56.5 & 52.5 \\ \hline
CDLS \cite{CDLS2016} & \textbf{\textcolor{blue}{68.7}} & \textbf{%
\textcolor{red}{60.4}} & 35.3 & 51.8 & 60.7 & 33.5 & 50.7 & 68.5 & 34.9 & 
50.9 & \textbf{\textcolor{blue}{66.3}} & \textbf{\textcolor{blue}{59.8}} & 
53.5 \\ \hline
ILS (1-NN) \cite{ILS2017} & 59.7 & 49.8 & \textbf{\textcolor{red}{43.6}} & 
54.3 & \textbf{\textcolor{blue}{70.8}} & \textbf{\textcolor{red}{38.6}} & 
\textbf{\textcolor{red}{55.0}} & \textbf{\textcolor{blue}{80.1}} & \textbf{%
\textcolor{red}{41.0}} & \textbf{\textcolor{red}{55.1}} & 62.9 & 56.2 & 
\textbf{\textcolor{blue}{55.6}} \\ \hline
\textbf{OBTL} & \textbf{\textcolor{red}{72.4}} & \textbf{%
\textcolor{blue}{60.2}} & \textbf{\textcolor{blue}{41.5}} & \textbf{%
\textcolor{blue}{55.0}} & \textbf{\textcolor{red}{75.0}} & \textbf{%
\textcolor{blue}{37.4}} & \textbf{\textcolor{blue}{54.4}} & \textbf{%
\textcolor{red}{83.2}} & \textbf{\textcolor{blue}{40.3}} & \textbf{%
\textcolor{blue}{54.8}} & \textbf{\textcolor{red}{71.1}} & \textbf{%
\textcolor{red}{61.5}} & \textbf{\textcolor{red}{58.9}} \\ \hline
\end{tabular}
}
\end{table*}

\begin{table*}[!t]
\caption{{\protect\footnotesize The values of hyperparameter $\alpha$ of the OBTL
used in each experiment. $n_t$ and $n_s$ are based on the data splits
provided by \protect\cite{hoffman2013}.}}
\label{table-2}\centering
{\small \addtolength{\tabcolsep}{-3pt} 
\begin{tabular}{|c|c|c|c|c|c|c|c|c|c|c|c|c|}
\hline
$~$ & a $\rightarrow$ w & a $\rightarrow$ d & a $\rightarrow$ c & w $%
\rightarrow$ a & w $\rightarrow$ d & w $\rightarrow$ c & d $\rightarrow$ a & 
d $\rightarrow$ w & d $\rightarrow$ c & c $\rightarrow$ a & c $\rightarrow$ w
& c $\rightarrow$ d \\ \hline\hline
$n_t$ & 3 & 3 & 3 & 3 & 3 & 3 & 3 & 3 & 3 & 3 & 3 & 3 \\ \hline
$n_s$ & 20 & 20 & 20 & 8 & 8 & 8 & 8 & 8 & 8 & 8 & 8 & 8 \\ \hline
$\alpha$ & 0.6 & 0.75 & 0.99 & 0.9 & 0.99 & 0.99 & 0.9 & 0.99 & 0.99 & 0.85 & 
0.5 & 0.75 \\ \hline
\end{tabular}
}
\vspace{-.2cm}
\end{table*}

\noindent $\bullet$ \textbf{Office dataset:} This dataset has images in
three different domains: \textit{amazon}, \textit{webcam}, and \textit{dslr}%
. The dataset contains 31 classes including the office stuff like backpack,
chair, keyboard, etc. The three domains \textit{amazon}, \textit{webcam},
and \textit{dslr} contain images from Amazon's website, a webcam, and a
digital single-lens reflex (dslr) camera, respectively, with different
lighting and backgrounds. SURF \cite{surf} image features are used in all
the domains, which are of dimension 800.

\noindent $\bullet$ \textbf{Office + Caltech256 dataset:} This dataset has $%
L=10$ common classes of both \textit{Office} and \textit{Caltech256}
datasets with the same feature dimension $d=800$. According to the data
splits of \cite{hoffman2013}, the numbers of training data per class in the
source domain are $n_s=20$ for \textit{amazon} and $n_s=8$ for the other
three domains, and in the target domain $n_t=3$ for all the four domains.
For this four-domain dataset, 20 random train-test splits have been created
by \cite{hoffman2013}. We run the OBTL classifier on that 20 provided
train-test splits and report the average accuracy. Note that the test data
are solely from the target domains. Authors of MMDT \cite{hoffman2013}
reduce the dimension to $d=20$ using PCA. We follow the same procedure for
the OBTL classifier.

Following the comparison framework of \cite{ILS2017}, which used the same
evaluation setup of \cite{hoffman2013}, we compare the OBTL's performance in
terms of accuracy (10-class) in Table 1 with two target-only classifiers and
four state-of-the-art semi-supervised transfer learning algorithms
(including \cite{ILS2017} itself). The evaluation setup is exactly the same
for the OBTL and all the other six methods. As a result, we use the results
of \cite{ILS2017} for the first six methods in Table 1 and compare them with
the OBTL classifier. The six methods are as follows.

\noindent $\bullet$ \textbf{1-NN-t and SVM-t:} The Nearest Neighbor (1-NN)
and linear SVM classifiers designed using only the target data.

\noindent $\bullet$ \textbf{HFA \cite{HFA2012}:} This Heterogeneous Feature
Augmentation (HFA) method learns a common latent space between source and
target domains using the max-margin approach and designs a classifier in
that common space.

\noindent $\bullet$ \textbf{MMDT \cite{hoffman2013}:} This Max-Margin Domain
Transform (MMDT) method learns a transformation between the source and
target domains and employs the weighted SVM for classification.

\noindent $\bullet$ \textbf{CDLS \cite{CDLS2016}:} This Cross-Domain
Landmark Selection (CDLS) is a semi-supervised heterogeneous domain
adaptation method, which derives a domain-invariant feature space for
improved classification performance.

\noindent $\bullet$ \textbf{ILS (1-NN) \cite{ILS2017}:} This is a recent
method that learns an Invariant Latent Space (ILS) to reduce the discrepancy
between the source and target domains and uses Riemannian optimization
techniques to match statistical properties between samples projected into
the latent space from different domains.

In Table \ref{table-1}, we have calculated the accuracy of the OBTL classifier in 12
distinct experiments, where the source-target pairs are different (source $%
\rightarrow$ target) in each experiment. We have marked the best accuracy in
each column with red and the second best accuracy with blue. We see that 
the OBTL classifier has either the best or second best accuracy in all the 12 experiments.
 We have written the mean accuracy of each method in the last
column, which has been averaged over all the 12 different experiments. The
OBTL classifier has the best mean accuracy and the ILS \cite{ILS2017} has
the second best accuracy among all the methods. We have assumed equal prior probabilities for
all the classes and used (\ref{OBTL_2}) for the OBTL classifier.

\noindent $\bullet$ \textbf{Hyperparameters of the OBTL:} We assume the same
values of hyperparameters for all the 10 classes in each domain, so we can
drop the superscript $l$ denoting the class label. We set $\nu=10d=200$ for
all the experiments. We choose $\alpha$ separately in each experiment since
the relatedness between distinct pairs of domains are different. For $%
\mathbf{m}_t$ and $\mathbf{m}_s$, we pool all the target and source data in
all the 10 classes, respectively, and use the sample means of the datasets.  We fix $\beta_t=\beta_s=1$ (meaning that $\kappa_t=n_t$ and $\kappa_s=n_s$) and $k_t=k_s=1/\nu=1/200=0.005$. The mean of the Wishart precision matrix $\Lambdab_z$, for $z \in \{s,t\}$, with scale matrix $\M_z$ and $\nu$ degrees of freedom is $\nu\M_z$. Consequently, $E(\mathbf{\Lambda}_t)=E(\mathbf{\Lambda}_s)=I_d$, which is a reasonable choice, since the provided datasets of \cite{hoffman2013} have been normally standardized. Therefore, the only hyperparameter and the most important one is $\alpha$ ($\in (0,1)$), which shows the relatedness between the two domains. Figs. \ref{fig-real}a and \ref{fig-real}b demonstrate that the accuracy is robust for $k_t\in (0.005, 0.02)$ and $k_s\in (0.005, 0.02)$, respectively. Figs. \ref{fig-real}a and \ref{fig-real}b are corresponding to two experiments: $a \rightarrow w, \alpha = 0.6$ and $w \rightarrow d, \alpha = 0.99$. Figs. \ref{fig-real}c and \ref{fig-real}d show interesting results. We have already seen similar behavior in the synthetic data as well. In the case of $a \rightarrow w$, accuracy grows smoothly by increasing $\alpha$, reaches the maximum at $\alpha=0.6$, and decreases afterwards. This verifies the fact that the source domain $a$ cannot help the target domain $w$ that much. On the contrary, accuracy increases monotonically in Fig. \ref{fig-real}d, in the case of $w \rightarrow d$, and the difference between accuracy for $\alpha=0.01$ and $\alpha=0.99$ is huge. This  confirms that the source domain $w$ is very related to the target domain $d$ and helps it a lot. Interestingly, this coincides with the findings from the literature that the two domains $w$ and $d$ are highly related. We choose the values of $\alpha$ in each experiment which give the best accuracy. They are shown in Table \ref{table-2}. The values of $\alpha$ in Table \ref{table-2} also reveal the amount of relatedness between any pairs of source and target domains. For example, both $w \rightarrow d$ and $d \rightarrow w$ have high relatedness with $\alpha=0.99$, which has already been verified by other papers as well \cite{gong2012geodesic}. 


\begin{figure}[t!]
\centering
\begin{subfigure}[t]{0.23\textwidth}
        \centering
        \includegraphics[width=\textwidth]{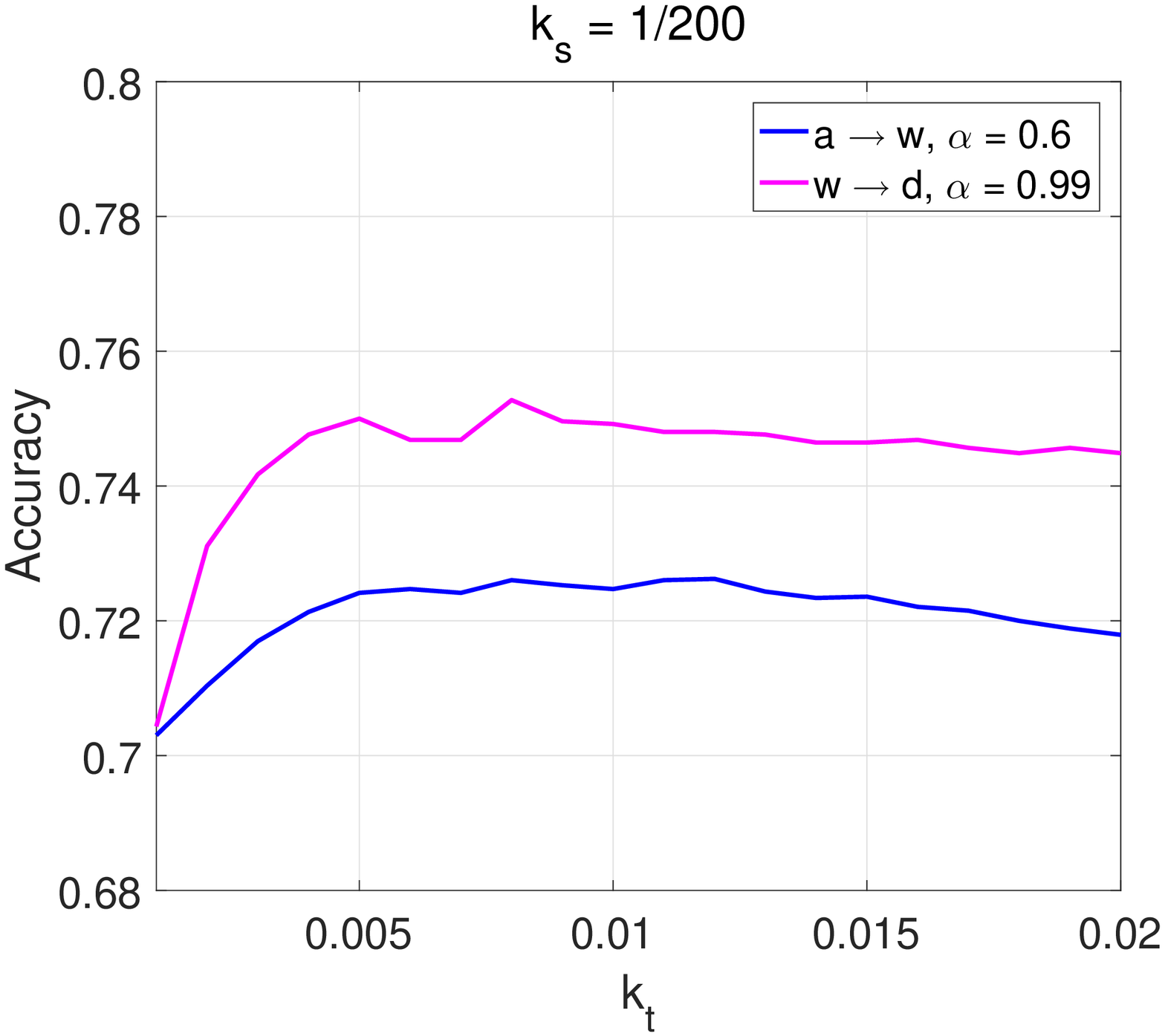}
        \caption{}
    \end{subfigure}
~ 
\begin{subfigure}[t]{0.23\textwidth}
        \centering
        \includegraphics[width=\textwidth]{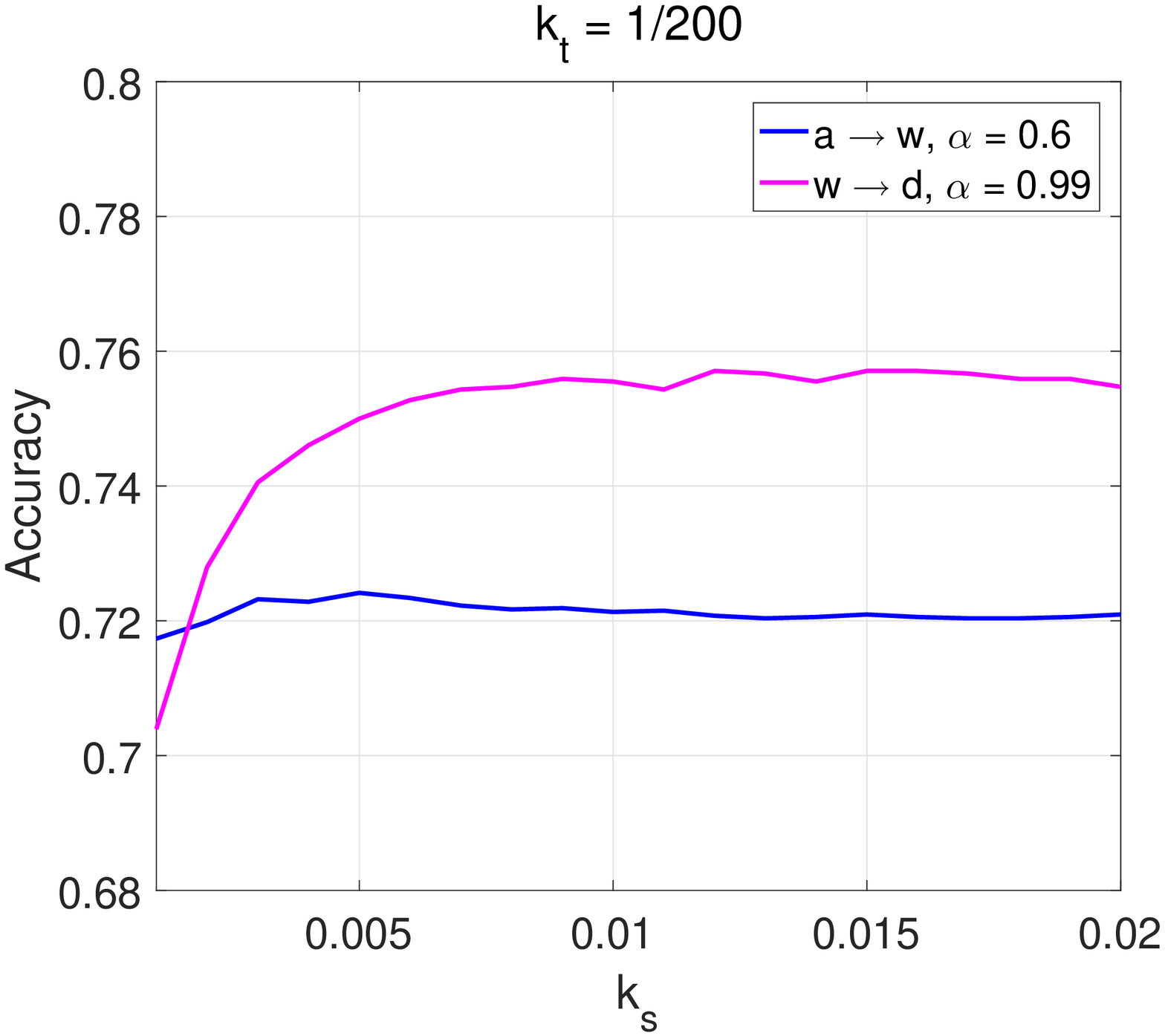}
        \caption{}
    \end{subfigure}
    \begin{subfigure}[t]{0.23\textwidth}
        \centering
        \includegraphics[width=\textwidth]{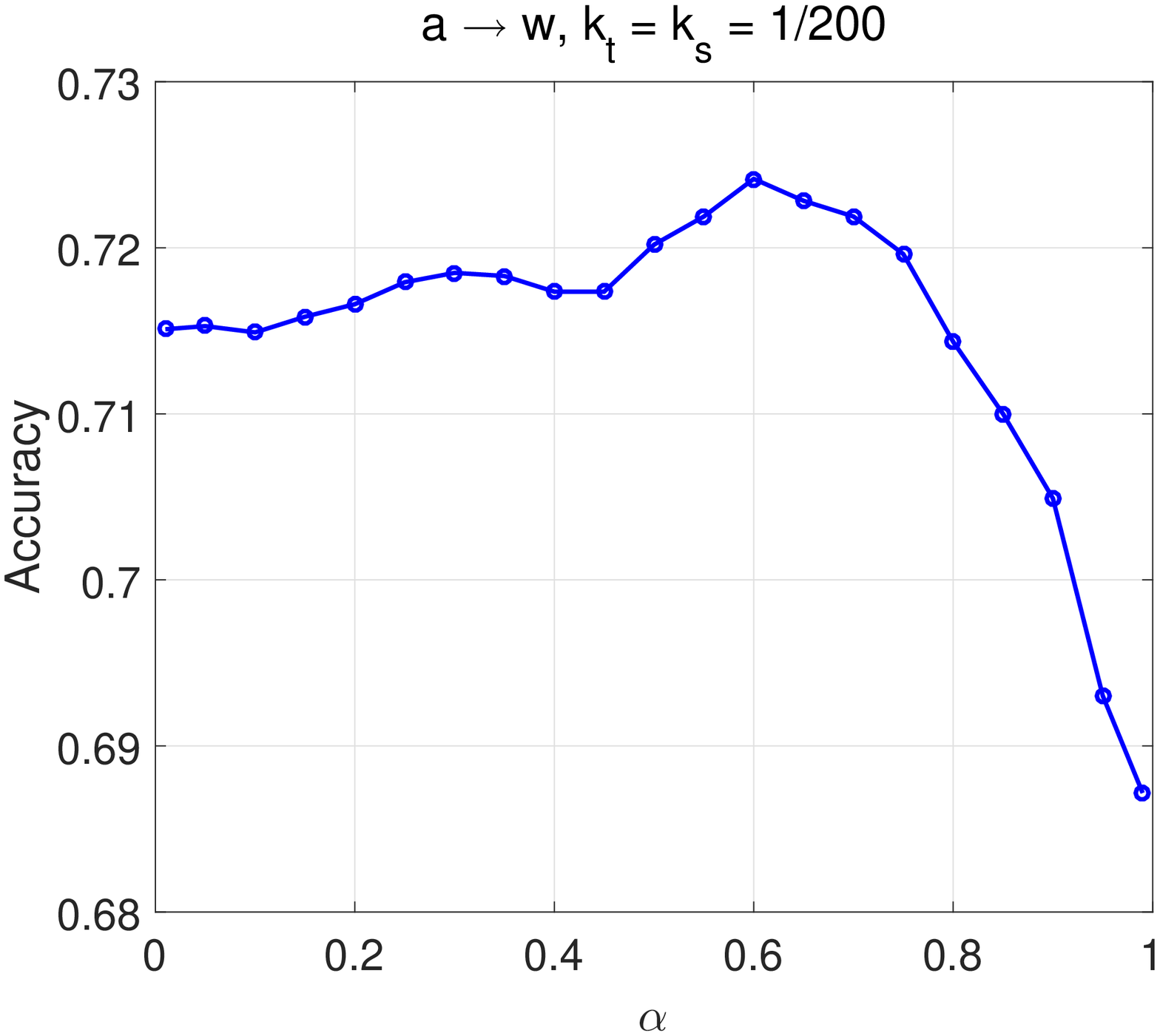}
        \caption{}
    \end{subfigure}
~ 
\begin{subfigure}[t]{0.23\textwidth}
        \centering
        \includegraphics[width=\textwidth]{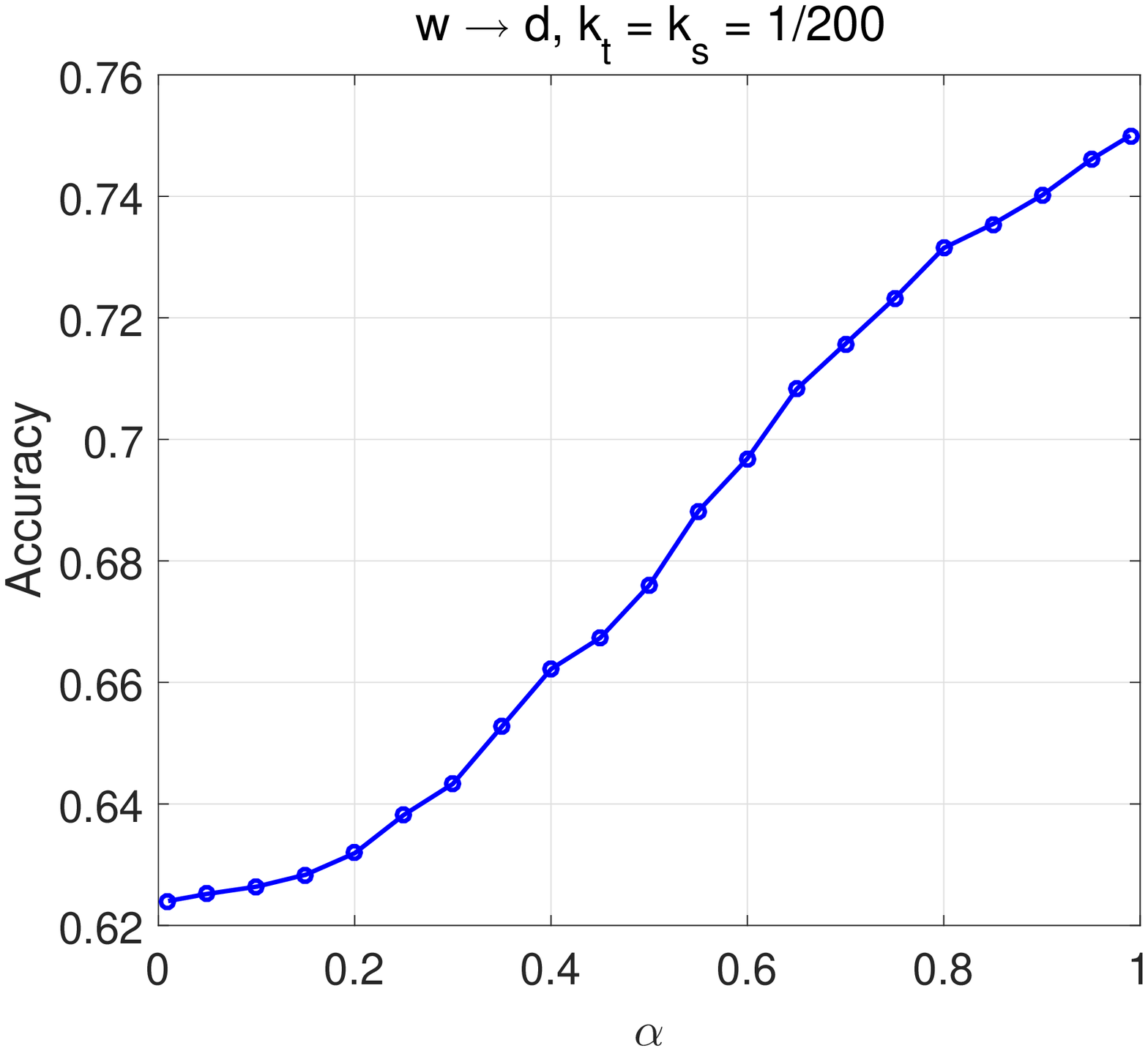}
        \caption{}
    \end{subfigure}
\caption{{\protect\footnotesize  Accuracy in the \textit{Office+Caltech256} dataset versus: (a) $k_t$ when $k_s=1/200$ and for two experiments $a \rightarrow w, \alpha = 0.6$ and $w \rightarrow d, \alpha = 0.99$, (b) $k_s$ when $k_t=1/200$ and for two experiments $a \rightarrow w, \alpha = 0.6$ and $w \rightarrow d, \alpha = 0.99$, (c) $\alpha$ when $k_t=k_s=1/200$ and for the experiment $a \rightarrow w$, (d) $\alpha$ when $k_t=k_s=1/200$ and for the experiment $w \rightarrow d$. }}
\label{fig-real}
\vspace{-.5cm}
\end{figure}


\section{Conclusions And Future Work}
\label{sec8} 

We have constructed a Bayesian transfer learning framework to
tackle the supervised transfer learning problem. The
proposed Optimal Bayesian Transfer Learning (OBTL) classifier can deal with
the lack of labeled data in the target domain and is optimal in this new
Bayesian framework since it minimizes the expected classification error. We
have obtained the closed-form posterior distribution of the target
parameters and accordingly the closed-form effective class-conditional
densities in the target domain to define the OBTL classifier. As the OBTL's
objective function consists of hypergeometric functions of matrix argument,
we use the Laplace approximations of those functions to derive a
computationally efficient and scalable OBTL classifier, while preserving its
superior performance. We have compared the performance of the OBTL with its
target-only version, OBC, to see how transferring from source to target
domain can help. We have tested the OBTL classifier with real-world
benchmark image datasets and demonstrated its excellent performance compared
to other state-of-the-art domain adaption methods.

This paper considers a Gaussian model, in which we can derive closed-form
solutions, as the case with the OBC. Since many practical problems cannot be
approximated by a Gaussian model, an important aspect of OBC development has
been the utilization of MCMC methods \cite{knight2014mcmc,
knight2015detecting}. In a forthcoming paper, we extend the OBTL setting to count data with a Negative Binomial model, in which the inference of parameters is done by MCMC. We will also apply the OBTL in dynamical systems and time series scenarios \cite{Alireza_TCBB,Alireza_TCBB2,Alireza_ICASSP,Alireza_BMC}.

We have only considered two domains in this paper, assuming there is only
one source domain. Having seen the good performance of the OBTL classifier
in two domains, in future work, we are going to apply it to the multi-source
transfer learning problems, where we can benefit from the knowledge of
different related sources in order to further improve the target classifier.

As in the case of the OBC, a basic engineering aspect of the OBTL is prior
construction. This has been studied under different conditions in the
context of the OBC: using the data from unused features to infer a prior
distribution \cite{dalton2011application}, deriving the prior distribution
from models of the data-generating technology \cite{knight2014mcmc},
 and applying constraints based on prior
knowledge to map the prior knowledge into a prior distribution via
optimization \cite{Esfahani_TCBB_1,Esfahani_TCBB_2,Shahin_BMC}. The methods in  \cite{Esfahani_TCBB_1,Esfahani_TCBB_2} are very general and have been placed into a formal mathematical
structure in \cite{Shahin_BMC}, where the prior results from an optimization
involving the Kullback-Leibler (KL) divergence constrained by conditional
probability statements characterizing physical knowledge, such as genetic
pathways in genomic medicine. A key focus of our future work will be to
extend this general framework to the OBTL, which will require a formulation
that incorporates knowledge relating the source and target domains. It
should be emphasized that with optimal Bayesian classification, as well as
with optimal Bayesian filtering \cite{Lori-IBRF,Qian-OBF,Roozbeh_IBR_Kalman}%
, the prior distribution is not on the operator to be designed (classifier
or filter) but on the underlying scientific model (feature-label
distribution, covariance matrix, or observation model) for which the
operator is optimized. It is for this reason that uncertainty in the
scientific model can be mapped into a prior distribution based on physical
laws.

\appendices

\section{Theorems for Zonal Polynomials and Generalized Hypergeometric Functions of Matrix Argument}
\label{appendix:hypergeometric}

\begin{theorem}
\label{theorem5} \cite{muirhead}: Let $\mathbf{Z}$ be a complex symmetric
matrix whose real part is positive-definite, and let $\mathbf{X}$ be an
arbitrary complex symmetric matrix. Then 
\begin{equation}
\begin{aligned} \int_{\mathbf{R}>0} \mathrm{etr}(-\mathbf{Z}\mathbf{R})
|\mathbf{R}|^{\alpha-\frac{d+1}{2}} C_\kappa(\mathbf{R}\mathbf{X})
d\mathbf{R} \\ = \Gamma_d(\alpha)(\alpha)_\kappa |\mathbf{Z}|^{-\alpha}
C_\kappa(\mathbf{X}\mathbf{Z}^{-1}), \end{aligned}
\end{equation}
the integration being over the space of positive-definite $d\times d$
matrices, and valid for all complex numbers $\alpha$ satisfying $\mathrm{Re}%
(\alpha)>\frac{d-1}{2}$. $\Gamma_d(\alpha)$ is the multivariate gamma
function defined in (\ref{Gamma_multi}).
\end{theorem}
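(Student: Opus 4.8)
The plan is to prove the identity first for real symmetric positive-definite $\mathbf{Z}$ and then extend it to complex $\mathbf{Z}$ with positive-definite real part by analytic continuation, since both sides are holomorphic in the entries of $\mathbf{Z}$ on that domain. I would begin by removing the dependence on $\mathbf{Z}$ through the change of variables $\mathbf{R} = \mathbf{Z}^{-1/2}\mathbf{S}\mathbf{Z}^{-1/2}$, whose Jacobian gives $d\mathbf{R} = |\mathbf{Z}|^{-(d+1)/2}\,d\mathbf{S}$. Under this substitution $\mathrm{etr}(-\mathbf{Z}\mathbf{R}) = \mathrm{etr}(-\mathbf{S})$, the determinant power contributes a factor $|\mathbf{Z}|^{-\alpha+(d+1)/2}$, and, using that a zonal polynomial is a symmetric function of the eigenvalues of its argument so that $C_\kappa(\mathbf{A}\mathbf{B}) = C_\kappa(\mathbf{B}\mathbf{A})$, we obtain $C_\kappa(\mathbf{R}\mathbf{X}) = C_\kappa(\mathbf{S}\,\mathbf{Z}^{-1/2}\mathbf{X}\mathbf{Z}^{-1/2})$. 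Collecting the $|\mathbf{Z}|$ factors pulls out $|\mathbf{Z}|^{-\alpha}$ and reduces the claim to the case $\mathbf{Z}=\mathbf{I}_d$ with $\mathbf{X}$ replaced by $\mathbf{Y} := \mathbf{Z}^{-1/2}\mathbf{X}\mathbf{Z}^{-1/2}$, noting that $C_\kappa(\mathbf{Y}) = C_\kappa(\mathbf{X}\mathbf{Z}^{-1})$ by the same cyclic property.

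It then suffices to evaluate $\phi(\mathbf{X}) := \int_{\mathbf{S}>0}\mathrm{etr}(-\mathbf{S})|\mathbf{S}|^{\alpha-(d+1)/2}C_\kappa(\mathbf{S}\mathbf{X})\,d\mathbf{S}$. I would first show that $\phi$ is invariant under $\mathbf{X}\mapsto\mathbf{H}\mathbf{X}\mathbf{H}'$ for every orthogonal $\mathbf{H}$: substituting $\mathbf{S}\mapsto\mathbf{H}'\mathbf{S}\mathbf{H}$ leaves the measure, $\mathrm{etr}(-\mathbf{S})$ and $|\mathbf{S}|$ unchanged while $C_\kappa(\mathbf{H}'\mathbf{S}\mathbf{H}\,\mathbf{H}\mathbf{X}\mathbf{H}') = C_\kappa(\mathbf{S}\mathbf{X})$. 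Hence $\phi(\mathbf{X}) = \int_{O(d)}\phi(\mathbf{H}\mathbf{X}\mathbf{H}')\,d\mathbf{H}$; swapping the order of integration and applying the zonal mean-value identity $\int_{O(d)}C_\kappa(\mathbf{S}\mathbf{H}\mathbf{X}\mathbf{H}')\,d\mathbf{H} = C_\kappa(\mathbf{S})C_\kappa(\mathbf{X})/C_\kappa(\mathbf{I}_d)$ collapses the inner integral, giving $\phi(\mathbf{X}) = \phi(\mathbf{I}_d)\,C_\kappa(\mathbf{X})/C_\kappa(\mathbf{I}_d)$. This isolates all remaining work into the single scalar constant $\phi(\mathbf{I}_d)$.

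The main obstacle is the final evaluation $\phi(\mathbf{I}_d) = \int_{\mathbf{S}>0}\mathrm{etr}(-\mathbf{S})|\mathbf{S}|^{\alpha-(d+1)/2}C_\kappa(\mathbf{S})\,d\mathbf{S} = \Gamma_d(\alpha)(\alpha)_\kappa C_\kappa(\mathbf{I}_d)$, where the Pochhammer factor $(\alpha)_\kappa = \prod_{i=1}^d(\alpha-(i-1)/2)_{k_i}$ must be produced. The base case $\kappa=(0,\dots,0)$ is exactly the multivariate gamma integral $\int_{\mathbf{S}>0}\mathrm{etr}(-\mathbf{S})|\mathbf{S}|^{\alpha-(d+1)/2}d\mathbf{S} = \Gamma_d(\alpha)$. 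For general $\kappa$ I would expand $C_\kappa(\mathbf{S})$ in the basis of generalized power functions built from the leading principal minors of $\mathbf{S}$, namely products $\prod_{i=1}^{d}|\mathbf{S}_{(i)}|^{k_i-k_{i+1}}$ (with $\mathbf{S}_{(i)}$ the top-left $i\times i$ block and $k_{d+1}:=0$), and integrate each such term against the kernel via the Ingham--Siegel generalized gamma integral. Each leading-minor power shifts the determinant exponent by a controlled one-dimensional amount, and accumulating these shifts over $i=1,\dots,d$ reproduces precisely the product $\prod_i(\alpha-(i-1)/2)_{k_i}$; the orthogonal-invariance argument of the previous paragraph then forces the recombined coefficients to equal $C_\kappa(\mathbf{I}_d)$ times $(\alpha)_\kappa$.

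Assembling the four steps, the integral equals $|\mathbf{Z}|^{-\alpha}\phi(\mathbf{Y}) = |\mathbf{Z}|^{-\alpha}\,\phi(\mathbf{I}_d)\,C_\kappa(\mathbf{Y})/C_\kappa(\mathbf{I}_d) = \Gamma_d(\alpha)(\alpha)_\kappa|\mathbf{Z}|^{-\alpha}C_\kappa(\mathbf{X}\mathbf{Z}^{-1})$, as claimed. Convergence of every integral above is guaranteed by the hypothesis $\mathrm{Re}(\alpha)>(d-1)/2$, which is exactly the condition making the underlying gamma integrals finite; I expect the bulk of the technical effort to lie in the leading-minor expansion of $C_\kappa$ and the bookkeeping that turns the accumulated exponent shifts into the generalized hypergeometric coefficient $(\alpha)_\kappa$ defined in Definition \ref{definition2}.
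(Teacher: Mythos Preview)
The paper does not prove this theorem; it is stated in Appendix~\ref{appendix:hypergeometric} as a quoted result from \cite{muirhead} and is used only as a tool in the proofs of Theorems~\ref{thm:posterior} and~\ref{thm-effective}. So there is no ``paper's own proof'' to compare against.

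That said, your outline is essentially the standard proof one finds in Muirhead. The reduction to $\mathbf{Z}=\mathbf{I}_d$ via $\mathbf{R}=\mathbf{Z}^{-1/2}\mathbf{S}\mathbf{Z}^{-1/2}$, the orthogonal-averaging step with the splitting formula $\int_{O(d)}C_\kappa(\mathbf{S}\mathbf{H}\mathbf{X}\mathbf{H}')\,d\mathbf{H}=C_\kappa(\mathbf{S})C_\kappa(\mathbf{X})/C_\kappa(\mathbf{I}_d)$, and the evaluation of the remaining scalar $\phi(\mathbf{I}_d)=\Gamma_d(\alpha)(\alpha)_\kappa C_\kappa(\mathbf{I}_d)$ through generalized power functions $\Delta_\kappa(\mathbf{S})=\prod_i|\mathbf{S}_{(i)}|^{k_i-k_{i+1}}$ are exactly the ingredients of Muirhead's Theorems~7.2.7 and~7.2.10. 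One small caution: in the last step you should not expand $C_\kappa$ in a \emph{basis} of power functions and integrate termwise; the clean argument first computes $\int_{\mathbf{S}>0}\mathrm{etr}(-\mathbf{S})|\mathbf{S}|^{\alpha-(d+1)/2}\Delta_\kappa(\mathbf{S})\,d\mathbf{S}=\Gamma_d(\alpha)(\alpha)_\kappa$ directly via a triangular change of variables, then invokes the fact that averaging $\Delta_\kappa(\mathbf{H}'\mathbf{S}\mathbf{H})$ over $O(d)$ yields $C_\kappa(\mathbf{S})/C_\kappa(\mathbf{I}_d)$, which is precisely the orthogonal-invariance you already set up. With that adjustment your plan is complete and correct.
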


\begin{theorem}
\label{theorem6}  \cite{Appell}: The zonal polynomials are invariant under
orthogonal transformation. That is, for a $d \times d$ symmetric matrix $%
\mathbf{X}$, 
\begin{equation}
C_\kappa(\mathbf{X}) = C_\kappa(\mathbf{H}\mathbf{X}\mathbf{H}^{^{\prime }}),
\end{equation}
where $\mathbf{H}$ is an orthogonal matrix of order $d$. If $\mathbf{R}$ is
a symmetric positive-definite matrix of order $d$, then 
\begin{equation}
C_\kappa(\mathbf{RX})=C_\kappa(\mathbf{R}^{1/2}\mathbf{X} \mathbf{R}^{1/2}).
\end{equation}
\end{theorem}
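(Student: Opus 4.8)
The plan is to reduce both identities to the single structural fact that the zonal polynomial $C_\kappa(\mathbf{X})$ is a symmetric function of the eigenvalues of $\mathbf{X}$ alone. This is built into the construction of $C_\kappa$ (the Jack polynomial with parameter $\alpha = 2$, equivalently James's decomposition of the space of homogeneous degree-$k$ polynomials in the entries of a symmetric matrix under the action of $O(d)$): the $C_\kappa$ are manufactured as symmetric polynomials in the eigenvalues indexed by the partition $\kappa \vdash k$. Granting this property, each claimed identity becomes merely the assertion that two matrices share the same spectrum.

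First I would establish orthogonal invariance. If $\mathbf{H}^{'}\mathbf{H} = \mathbf{I}$, then $\mathbf{H}^{'} = \mathbf{H}^{-1}$, so $\mathbf{H}\mathbf{X}\mathbf{H}^{'} = \mathbf{H}\mathbf{X}\mathbf{H}^{-1}$ is a genuine similarity transformation of $\mathbf{X}$. Similar matrices have identical characteristic polynomials and hence identical eigenvalues, so invoking the eigenvalue-dependence of $C_\kappa$ gives $C_\kappa(\mathbf{H}\mathbf{X}\mathbf{H}^{'}) = C_\kappa(\mathbf{X})$ at once.

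Next I would treat the positive-definite case. Since $\mathbf{R} > 0$, the symmetric square root $\mathbf{R}^{1/2}$ exists and is invertible, and a direct computation gives
\begin{equation}
\mathbf{R}^{1/2}\mathbf{X}\mathbf{R}^{1/2} = \mathbf{R}^{-1/2}\left(\mathbf{R}\mathbf{X}\right)\mathbf{R}^{1/2},
\end{equation}
exhibiting $\mathbf{R}^{1/2}\mathbf{X}\mathbf{R}^{1/2}$ as similar to $\mathbf{R}\mathbf{X}$ via the invertible matrix $\mathbf{R}^{1/2}$. Consequently the two matrices have the same eigenvalues; moreover $\mathbf{R}^{1/2}\mathbf{X}\mathbf{R}^{1/2}$ is symmetric, so these common eigenvalues are real, which is consistent with interpreting $C_\kappa$ of the (generally non-symmetric) product $\mathbf{R}\mathbf{X}$ through its spectrum. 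Applying the eigenvalue-dependence of $C_\kappa$ once more yields $C_\kappa(\mathbf{R}\mathbf{X}) = C_\kappa(\mathbf{R}^{1/2}\mathbf{X}\mathbf{R}^{1/2})$.

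The main obstacle is isolating the eigenvalue-dependence of $C_\kappa$ rigorously, since this is the only nonelementary ingredient and its difficulty hinges entirely on which of the equivalent definitions of the zonal polynomial one starts from. If $C_\kappa$ is taken directly as a symmetric function of eigenvalues, the property is immediate; but if one begins from an integral or generating-function definition, one must show that the $O(d)$-averaging inherent in the construction annihilates all non-spectral dependence, which is precisely the content of the first identity. Along that route the orthogonal-invariance statement carries the substantive work, and the positive-definite statement then follows purely from the similarity computation above.
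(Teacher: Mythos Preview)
Your argument is correct. The paper itself offers no proof of this statement: Theorem~\ref{theorem6} is quoted from an external reference (\cite{Appell}) and is listed in the appendix among known facts about zonal polynomials, so there is nothing in the paper to compare against line by line.

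That said, your sketch is the standard one and is sound. Both identities do reduce to the single fact that $C_\kappa(\cdot)$ depends only on the eigenvalues of its argument, after which orthogonal invariance is immediate (similarity via $\mathbf{H}^{-1}=\mathbf{H}'$) and the second identity follows from the similarity $\mathbf{R}^{1/2}\mathbf{X}\mathbf{R}^{1/2}=\mathbf{R}^{-1/2}(\mathbf{R}\mathbf{X})\mathbf{R}^{1/2}$. You are also right to flag the potential circularity: in James's original construction the zonal polynomials are \emph{defined} via the decomposition of the polynomial ring under the $O(d)$ action, so orthogonal invariance is baked in rather than proved, and the eigenvalue-only dependence is then a consequence of diagonalizing a symmetric matrix by an orthogonal change of basis. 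Under the Jack-polynomial or monomial-symmetric-function definition, eigenvalue dependence is the starting point and orthogonal invariance is the consequence. Either way the two claims are equivalent and your similarity computation for the $\mathbf{R}>0$ case is the nontrivial step that does not depend on which definition one adopts.
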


As a result, if $\mathbf{R}$ is a symmetric positive-definite matrix, the
hypergeometric function has the following property: 
\begin{equation}
\begin{aligned} ~_pF_q(a_1,\cdots,a_p;b_1,\cdots,b_q;\mathbf{RX})
\hspace{3cm} \\
=~_pF_q(a_1,\cdots,a_p;b_1,\cdots,b_q;\mathbf{R}^{1/2}\mathbf{X}
\mathbf{R}^{1/2}). \end{aligned}
\end{equation}

\begin{theorem}
\label{theorem7}  \cite{gupta2016properties}: If $\mathbf{Z}>0$ and $\mathrm{%
Re}(\alpha)>\frac{d-1}{2}$, and $\mathbf{X}$ is a $d \times d$ symmetric
matrix, we have 
\begin{equation}
\begin{aligned} &\int_{\mathbf{R}>0} \mathrm{etr}(-\mathbf{ZR})
|\mathbf{R}|^{\alpha-\frac{d+1}{2}} \\ &\hspace{2cm} \times
~_pF_q(a_1,\cdots,a_p;b_1,\cdots,b_q;\mathbf{RX}) d \mathbf{R} \\ &=
\int_{\mathbf{R}>0} \mathrm{etr}(-\mathbf{ZR})
|\mathbf{R}|^{\alpha-\frac{d+1}{2}} \\ &\hspace{1.5cm} \times
~_pF_q(a_1,\cdots,a_p;b_1,\cdots,b_q;\mathbf{R}^{1/2}\mathbf{X}
\mathbf{R}^{1/2}) d \mathbf{R} \\ &= \Gamma_d(\alpha) |\mathbf{Z}|^{-\alpha}
~_{p+1}F_q(a_1,\cdots,a_p,\alpha;b_1,\cdots,b_q;\mathbf{X}\mathbf{Z}^{-1}).
\end{aligned}  \notag
\end{equation}
\end{theorem}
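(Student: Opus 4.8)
The plan is to prove the identity by reducing it, term by term, to the single-zonal-polynomial Laplace-type integral of Theorem~\ref{theorem5}. The first equality in the statement requires no work: by the remark immediately following Theorem~\ref{theorem6}, for every positive-definite $\mathbf{R}$ one has ${}_pF_q(a_1,\cdots,a_p;b_1,\cdots,b_q;\mathbf{R}\mathbf{X})={}_pF_q(a_1,\cdots,a_p;b_1,\cdots,b_q;\mathbf{R}^{1/2}\mathbf{X}\mathbf{R}^{1/2})$, so the two integrands coincide on the domain $\mathbf{R}>0$ and the integrals are trivially equal. All the content lies in the second equality.

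For the second equality I would insert the series definition (\ref{hypergeo}) of ${}_pF_q$ into the integrand and interchange summation with integration, obtaining
\begin{equation*}
\begin{aligned}
&\int_{\mathbf{R}>0} \mathrm{etr}(-\mathbf{Z}\mathbf{R})\,|\mathbf{R}|^{\alpha-\frac{d+1}{2}}\,{}_pF_q(a_1,\cdots,a_p;b_1,\cdots,b_q;\mathbf{R}\mathbf{X})\,d\mathbf{R} \\
&\quad = \sum_{k=0}^{\infty}\sum_{\kappa\vdash k} \frac{(a_1)_\kappa\cdots(a_p)_\kappa}{(b_1)_\kappa\cdots(b_q)_\kappa}\frac{1}{k!} \int_{\mathbf{R}>0} \mathrm{etr}(-\mathbf{Z}\mathbf{R})\,|\mathbf{R}|^{\alpha-\frac{d+1}{2}}\,C_\kappa(\mathbf{R}\mathbf{X})\,d\mathbf{R}.
\end{aligned}
\end{equation*}
Each inner integral is exactly the object evaluated in Theorem~\ref{theorem5}, which under $\mathrm{Re}(\alpha)>\frac{d-1}{2}$ equals $\Gamma_d(\alpha)(\alpha)_\kappa|\mathbf{Z}|^{-\alpha}C_\kappa(\mathbf{X}\mathbf{Z}^{-1})$. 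Substituting this back and pulling the common factor $\Gamma_d(\alpha)|\mathbf{Z}|^{-\alpha}$ outside the double sum leaves
\begin{equation*}
\Gamma_d(\alpha)|\mathbf{Z}|^{-\alpha}\sum_{k=0}^{\infty}\sum_{\kappa\vdash k} \frac{(a_1)_\kappa\cdots(a_p)_\kappa\,(\alpha)_\kappa}{(b_1)_\kappa\cdots(b_q)_\kappa}\frac{C_\kappa(\mathbf{X}\mathbf{Z}^{-1})}{k!}.
\end{equation*}
The key observation is that the extra factor $(\alpha)_\kappa$ now sits among the numerator coefficients, so by Definition~\ref{definition2} the double sum is precisely ${}_{p+1}F_q(a_1,\cdots,a_p,\alpha;b_1,\cdots,b_q;\mathbf{X}\mathbf{Z}^{-1})$ with $\alpha$ adjoined as the $(p+1)$-th numerator parameter, which is the claimed right-hand side.

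The main obstacle is rigorously justifying the interchange of the double series with the integral over the cone of positive-definite matrices. I would handle this by dominated convergence: the hypothesis $\mathbf{Z}>0$ forces $\mathrm{etr}(-\mathbf{Z}\mathbf{R})$ to decay fast enough that the partial sums admit an integrable envelope, while $\mathrm{Re}(\alpha)>\frac{d-1}{2}$ guarantees that each term-wise integral converges (this being exactly the hypothesis under which Theorem~\ref{theorem5} is valid). One must also check that the resulting ${}_{p+1}F_q$ series converges on the image of the integration, i.e.\ that $\mathbf{X}\mathbf{Z}^{-1}$ lies in the region of convergence of the series in Definition~\ref{definition2}; for the cases actually used in this paper this is controlled by the eigenvalue bound $\|\mathbf{X}\mathbf{Z}^{-1}\|<1$ noted after (\ref{Gauss}). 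Granting these analytic points, the computation above is purely formal and the identity follows.
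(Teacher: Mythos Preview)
Your proposal is correct and is in fact the standard derivation of this identity: expand ${}_pF_q$ in zonal polynomials via Definition~\ref{definition2}, interchange summation with integration, evaluate each term with Theorem~\ref{theorem5}, and recognize the result as ${}_{p+1}F_q$. The paper itself does not prove Theorem~\ref{theorem7}; it merely quotes it from \cite{gupta2016properties} as a tool in Appendix~\ref{appendix:hypergeometric}, so there is no ``paper's own proof'' to compare against. Your argument is exactly the one that underlies the cited reference (and Muirhead's treatment), and your remarks about the analytic justification---dominated convergence under $\mathbf{Z}>0$ and the requirement $\mathrm{Re}(\alpha)>\frac{d-1}{2}$ for termwise integrability---are appropriate and sufficient.
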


\section{Proof of Theorem \ref{thm:posterior}}
\label{appendix:posterior}

We require the following lemma.
\begin{lemma}
\label{Lemma1} \cite{muirhead} If $\mathcal{D}=\{\mathbf{x}_1,\cdots,\mathbf{x}_n\}$ where $%
\mathbf{x}_i$ is a $d \times 1$ vector and $\mathbf{x}_i \sim \mathcal{N}(%
\mathbf{\mu},(\mathbf{\Lambda})^{-1})$, for $i=1,\cdots,n$, and $(\mathbf{\mu%
},\mathbf{\Lambda})$ has a Gaussian-Wishart prior, such that, $\mathbf{\mu} |%
\mathbf{\Lambda} \sim \mathcal{N}(\mathbf{m},(\kappa \mathbf{\Lambda})^{-1})$
and $\mathbf{\Lambda} \sim W_d(\mathbf{M},\nu)$, then the posterior of $(%
\mathbf{\mu},\mathbf{\Lambda})$ upon observing $\mathcal{D}$ is also a
Gaussian-Wishart distribution: 
\begin{equation}
\begin{aligned} \label{lemma1} \mathbf{\mu} |\mathbf{\Lambda} , \mathcal{D}
&\sim \mathcal{N}(\mathbf{m}_n, (\kappa_n\mathbf{\Lambda})^{-1}), \\
\mathbf{\Lambda} | \mathcal{D} &\sim W_d(\mathbf{M}_n,\nu_n), \end{aligned}
\end{equation}
where 
\begin{equation}
\begin{aligned}
& \kappa_n = \kappa + n, \\
& \nu_n = \nu + n, \\
& \mathbf{m}_n = \frac{\kappa \m + n \bar{\x}}{\kappa + n}, \\ 
& \mathbf{M}_n^{-1} = \mathbf{M}^{-1} + \mathbf{S} + \frac{\kappa n}{\kappa + n} (\mathbf{m}
-\bar{\x})(\mathbf{m} -\bar{\x})^{'},
 \end{aligned}
\end{equation}
depending on the sample mean and covariance matrix 
\begin{equation}
\begin{aligned}
& \bar{\mathbf{x}} = \frac{1}{n} \sum_{i=1}^n \mathbf{x}_i, \\
& \mathbf{S} =
\sum_{i=1}^n (\mathbf{x}_i - \bar{\mathbf{x}}) (\mathbf{x}_i - \bar{\mathbf{x%
}}) ^{^{\prime }}.
\end{aligned}
\end{equation}
\end{lemma}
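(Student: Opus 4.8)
The plan is to prove this standard Gaussian--Wishart conjugacy result by direct computation: I write out the data likelihood and the prior, multiply them to form the unnormalized posterior, and then reorganize the exponent so that it factors into a Gaussian density in $\mathbf{\mu}$ (conditioned on $\mathbf{\Lambda}$) times a Wishart density in $\mathbf{\Lambda}$. First I would expand the likelihood. Since the $\mathbf{x}_i$ are i.i.d.\ Gaussian, $p(\mathcal{D}|\mathbf{\mu},\mathbf{\Lambda}) \propto |\mathbf{\Lambda}|^{n/2}\exp\left(-\tfrac{1}{2}\sum_{i=1}^n (\mathbf{x}_i - \mathbf{\mu})'\mathbf{\Lambda}(\mathbf{x}_i - \mathbf{\mu})\right)$. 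The key algebraic step is the decomposition
\[
\sum_{i=1}^n (\mathbf{x}_i - \mathbf{\mu})'\mathbf{\Lambda}(\mathbf{x}_i - \mathbf{\mu}) = \mathrm{tr}(\mathbf{\Lambda}\mathbf{S}) + n(\bar{\mathbf{x}} - \mathbf{\mu})'\mathbf{\Lambda}(\bar{\mathbf{x}} - \mathbf{\mu}),
\]
which isolates the sufficient statistics $\bar{\mathbf{x}}$ and $\mathbf{S}$ and lets me write the likelihood as $|\mathbf{\Lambda}|^{n/2}\,\mathrm{etr}\!\left(-\tfrac{1}{2}\mathbf{\Lambda}\mathbf{S}\right)\exp\left(-\tfrac{n}{2}(\bar{\mathbf{x}} - \mathbf{\mu})'\mathbf{\Lambda}(\bar{\mathbf{x}} - \mathbf{\mu})\right)$.

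Next I would multiply by the Gaussian prior $p(\mathbf{\mu}|\mathbf{\Lambda}) \propto |\mathbf{\Lambda}|^{1/2}\exp\left(-\tfrac{\kappa}{2}(\mathbf{\mu} - \mathbf{m})'\mathbf{\Lambda}(\mathbf{\mu}-\mathbf{m})\right)$ and the Wishart prior $p(\mathbf{\Lambda}) \propto |\mathbf{\Lambda}|^{(\nu-d-1)/2}\,\mathrm{etr}\!\left(-\tfrac{1}{2}\mathbf{M}^{-1}\mathbf{\Lambda}\right)$ from Definition~\ref{definition1}. The crux of the argument is the completing-the-square identity for the two quadratic forms in $\mathbf{\mu}$ that share the common matrix $\mathbf{\Lambda}$:
\[
n(\bar{\mathbf{x}} - \mathbf{\mu})'\mathbf{\Lambda}(\bar{\mathbf{x}} - \mathbf{\mu}) + \kappa(\mathbf{\mu}-\mathbf{m})'\mathbf{\Lambda}(\mathbf{\mu}-\mathbf{m}) = \kappa_n(\mathbf{\mu}-\mathbf{m}_n)'\mathbf{\Lambda}(\mathbf{\mu}-\mathbf{m}_n) + \tfrac{\kappa n}{\kappa+n}(\mathbf{m}-\bar{\mathbf{x}})'\mathbf{\Lambda}(\mathbf{m}-\bar{\mathbf{x}}),
\]
with $\kappa_n = \kappa + n$ and $\mathbf{m}_n = (\kappa\mathbf{m} + n\bar{\mathbf{x}})/(\kappa+n)$. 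The first term on the right, together with the factor $|\mathbf{\Lambda}|^{1/2}$, is precisely the kernel of $\mathcal{N}(\mathbf{m}_n,(\kappa_n\mathbf{\Lambda})^{-1})$, which yields the claimed conditional posterior for $\mathbf{\mu}$.

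Finally I would collect the remaining $\mathbf{\Lambda}$-only factors. Using the cyclic property of trace to rewrite the leftover quadratic as $\mathrm{etr}\!\left(-\tfrac{1}{2}\tfrac{\kappa n}{\kappa + n}(\mathbf{m}-\bar{\mathbf{x}})(\mathbf{m}-\bar{\mathbf{x}})'\mathbf{\Lambda}\right)$ and merging all $\mathrm{etr}$ factors, the exponent collapses to $\mathrm{etr}\!\left(-\tfrac{1}{2}\mathbf{M}_n^{-1}\mathbf{\Lambda}\right)$ with $\mathbf{M}_n^{-1}$ exactly as stated. Tracking the powers of $|\mathbf{\Lambda}|$, the total exponent $\tfrac{n}{2} + \tfrac{1}{2} + \tfrac{\nu-d-1}{2}$ minus the $\tfrac{1}{2}$ absorbed into the Gaussian for $\mathbf{\mu}$ leaves $|\mathbf{\Lambda}|^{(\nu+n-d-1)/2}$, the Wishart exponent with $\nu_n = \nu + n$ degrees of freedom. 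Recognizing this as the $W_d(\mathbf{M}_n,\nu_n)$ kernel via Definition~\ref{definition1} completes the identification, and the normalizing constants follow automatically by conjugacy since both factors integrate to proper densities. The main obstacle is purely bookkeeping: accurately tracking the exponents of $|\mathbf{\Lambda}|$ and verifying the matrix completing-the-square identity, both routine but requiring care to match the stated update formulas.
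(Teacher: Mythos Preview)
Your proof is correct and follows the standard direct-computation route for Gaussian--Wishart conjugacy. Note, however, that the paper does not actually prove Lemma~\ref{Lemma1}: it is quoted as a known result from \cite{muirhead} and used as a black box in the derivation of Theorem~\ref{thm:posterior}, so there is no paper proof to compare against beyond the citation itself.
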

We now provide the proof. From (\ref{x_s_x_t}), for each domain $z\in\{s,t\}$, 
\begin{equation}
p(\mathcal{D}_z^l|\mathbf{\mu}_z^l,\mathbf{\Lambda}_z^l) = (2\pi)^{-\frac{d
n_z^l}{2}} ~ \left|\mathbf{\Lambda}_z^l\right|^{\frac{n_z^l}{2}} \exp \left(-%
\frac{1}{2} \mathbf{Q}_z^l \right),  \label{D}
\end{equation}
where $\mathbf{Q}_z^l = \sum_{i=1}^{n_z^l}\left(\mathbf{x}_{z,i}^l-\mathbf{%
\mu}_z^l \right)^{^{\prime }} \mathbf{\Lambda}_z^l \left(\mathbf{x}_{z,i}^l-%
\mathbf{\mu}_z^l \right) $. Moreover, from (\ref{mu_s_mu_t}), for each
domain $z\in\{s,t\}$,
\begin{eqnarray}
p\left(\mathbf{\mu}_z^l | \mathbf{\Lambda}_z^l\right) = (2\pi)^{-\frac{d}{2}%
} \left(\kappa_z^l\right)^{\frac{d}{2}} \left|\mathbf{\Lambda}_z^l\right|^{%
\frac{1}{2}} \hspace{2cm}  \notag \\
\times\exp \left(-\frac{\kappa_z^l}{2}\left(\mathbf{\mu}_z^l - \mathbf{m}%
_z^l\right)^{^{\prime }}\mathbf{\Lambda}_z^l \left(\mathbf{\mu}_z^l - 
\mathbf{m}_z^l\right) \right).  \label{mu}
\end{eqnarray}
From (\ref{joint}), (\ref{posterior5}), (\ref{D}), and (\ref{mu}),
\begin{equation}  \label{post_t}
\begin{aligned} &p(\mathbf{\mu}_t^l,\mathbf{\Lambda}_t^l
|\mathcal{D}_t^l,\mathcal{D}_s^l) \propto
\left|\mathbf{\Lambda}_t^l\right|^{\frac{n_t^l}{2}} \exp \left(-\frac{1}{2}
\mathbf{Q}_t^l \right) \left|\mathbf{\Lambda}_t^l\right|^{\frac{1}{2}} \\
&\times \exp \left(-\frac{\kappa_t^l}{2}\left(\mathbf{\mu}_t^l -
\mathbf{m}_t^l\right)^{'}\mathbf{\Lambda}_t^l \left(\mathbf{\mu}_t^l -
\mathbf{m}_t^l\right) \right) \\ &\times
\left|\mathbf{\Lambda}_{t}^l\right|^{\frac{\nu^l-d-1}{2}}
\mathrm{etr}\left(-\frac{1}{2} \left({\left(\mathbf{M}_{t}^l\right)}^{-1} +
{\mathbf{F}^l}^{'}\mathbf{C}^l
\mathbf{F}^l\right)\mathbf{\Lambda}_{t}^l\right) \\ &\times
\int_{\mub_s^l,\Lambdab_s^l} \left\lbrace
\left|\mathbf{\Lambda}_s^l\right|^{\frac{n_s^l}{2}} \exp \left(-\frac{1}{2}
\mathbf{Q}_s^l \right) \left|\mathbf{\Lambda}_s^l\right|^{\frac{1}{2}}
\right.\\ &\times \exp \left(-\frac{\kappa_s^l}{2}\left(\mathbf{\mu}_s^l -
\mathbf{m}_s^l\right)^{'}\mathbf{\Lambda}_s^l \left(\mu_s^l -
\mathbf{m}_s^l\right) \right) \\ &\times
\left|\mathbf{\Lambda}_{s}^l\right|^{\frac{\nu^l-d-1}{2}}
\mathrm{etr}\left(-\frac{1}{2} {\left(\mathbf{C}^l\right)}^{-1}
\mathbf{\Lambda}_{s}^l\right) \\ &\left. \times ~_0F_1\left(\frac{\nu^l}{2};
\frac{1}{4} {\mathbf{\Lambda}_{s}^l}^{\frac{1}{2}} \mathbf{F}^l
\mathbf{\Lambda}_{t}^l
{\mathbf{F}^l}^{'}{\mathbf{\Lambda}_{s}^l}^{\frac{1}{2}} \right)
\right\rbrace d\mathbf{\mu}_s^l d\mathbf{\Lambda}_s^l. \end{aligned}
\end{equation}
Using Lemma \ref{Lemma1} we can simplify (\ref{post_t}) as 
\begin{equation}  \label{prop2}
\begin{aligned} &p(\mathbf{\mu}_t^l,\mathbf{\Lambda}_t^l
|\mathcal{D}_t^l,\mathcal{D}_s^l) \\ &\propto
\left|\mathbf{\Lambda}_t^l\right|^{\frac{1}{2}} \exp
\left(-\frac{\kappa_{t,n}^l}{2}\left(\mathbf{\mu}_t^l -
\mathbf{m}_{t,n}^l\right)^{'}\mathbf{\Lambda}_t^l \left(\mathbf{\mu}_t^l -
\mathbf{m}_{t,n}^l\right) \right) \\ &\times
\left|\mathbf{\Lambda}_{t}^l\right|^{\frac{\nu^l + n_t^l -d-1}{2}}
\mathrm{etr}\left(-\frac{1}{2}
{\left(\mathbf{T}_t^l\right)}^{-1}\mathbf{\Lambda}_{t}^l\right) \\ &
\int_{\mub_s^l,\Lambdab_s^l} \left\lbrace
\left|\mathbf{\Lambda}_s^l\right|^{\frac{1}{2}} \exp
\left(-\frac{\kappa_{s,n}^l}{2}\left(\mathbf{\mu}_s^l -
\mathbf{m}_{s,n}^l\right)^{'}\mathbf{\Lambda}_s^l \left(\mathbf{\mu}_s^l -
\mathbf{m}_{s,n}^l\right) \right) \right. \\ & \times
\left|\mathbf{\Lambda}_{s}^l\right|^{\frac{\nu^l + n_s^l -d-1}{2}}
\mathrm{etr}\left(-\frac{1}{2}
{\left(\mathbf{T}_s^l\right)}^{-1}\mathbf{\Lambda}_{s}^l\right) \\ &\left.
\times ~_0F_1\left(\frac{\nu^l}{2}; \frac{1}{4}
{\mathbf{\Lambda}_{s}^l}^{\frac{1}{2}} \mathbf{F}^l \mathbf{\Lambda}_{t}^l
{\mathbf{F}^l}^{'}{\mathbf{\Lambda}_{s}^l}^{\frac{1}{2}} \right)
\right\rbrace d\mathbf{\mu}_s^l d\mathbf{\Lambda}_s^l, \end{aligned}
\end{equation}
where 
\begin{equation}  \label{app:const1}
\begin{aligned} &\kappa_{t,n}^l = \kappa_t^l + n_t^l, \hspace{1.5cm}
\kappa_{s,n}^l = \kappa_s^l + n_s^l, \\ &\mathbf{m}_{t,n}^l =
\frac{\kappa_t^l \m_t^l + n_t^l \bar{\x}_t^l}{\kappa_t^l+n_t^l}, ~~~~~
\mathbf{m}_{s,n}^l = \frac{\kappa_s^l \m_s^l + n_s^l
\bar{\x}_s^l}{\kappa_s^l+n_s^l}, \\ &{\left(\mathbf{T}_t^l\right)}^{-1} =
{\left(\mathbf{M}_{t}^l\right)}^{-1} + {\mathbf{F}^l}^{'}\mathbf{C}^l
\mathbf{F}^l + \mathbf{S}_t^l \\ & \hspace{2cm} + \frac{\kappa_t^l
n_t^l}{\kappa_t^l + n_t^l} (\mathbf{m}_t^l -\bar{\x}_t^l)(\mathbf{m}_t^l
-\bar{\x}_t^l)^{'}, \\ &{\left(\mathbf{T}_s^l\right)}^{-1} =
{\left(\mathbf{C}^l\right)}^{-1} + \mathbf{S}_s^l + \frac{\kappa_s^l
n_s^l}{\kappa_s^l + n_s^l} (\mathbf{m}_s^l -\bar{\x}_s^l)(\mathbf{m}_s^l
-\bar{\x}_s^l)^{'}, \end{aligned}
\end{equation}
with sample means and covariances for $z\in\{s,t\}$ as 
\begin{equation}  \label{const2}
\bar{\mathbf{x}}_z^l = \frac{1}{n_z^l} \sum_{i=1}^{n_z^l} \mathbf{x}%
_{z,i}^l, ~~~ \mathbf{S}_z^l = \sum_{i=1}^{n_z^l} \left(\mathbf{x}_{z,i}^l - 
\bar{\mathbf{x}}_z^l \right)\left(\mathbf{x}_{z,i}^l - \bar{\mathbf{x}}_z^l
\right)^{^{\prime }}.  \notag
\end{equation}
Using the equation 
\begin{equation}  \label{int_mu}
\int_{\mathbf{x}} \exp \left(-\frac{1}{2}(\mathbf{x}-\mathbf{\mu})^{^{\prime
}}\mathbf{\Lambda} (\mathbf{x}-\mathbf{\mu}) \right) d\mathbf{x} = (2\pi)^{%
\frac{d}{2}} |\mathbf{\Lambda}|^{-\frac{1}{2}},
\end{equation}
and integrating out $\mathbf{\mu}_s^l$ in (\ref{prop2}) yields 
\begin{equation}  \label{prop3}
\begin{aligned} &p(\mathbf{\mu}_t^l,\mathbf{\Lambda}_t^l
|\mathcal{D}_t^l,\mathcal{D}_s^l) \\ &\propto
\left|\mathbf{\Lambda}_t^l\right|^{\frac{1}{2}} \exp
\left(-\frac{\kappa_{t,n}^l}{2}\left(\mathbf{\mu}_t^l -
\mathbf{m}_{t,n}^l\right)^{'}\mathbf{\Lambda}_t^l \left(\mathbf{\mu}_t^l -
\mathbf{m}_{t,n}^l\right) \right) \\ &\times
\left|\mathbf{\Lambda}_{t}^l\right|^{\frac{\nu^l + n_t^l -d-1}{2}}
\mathrm{etr}\left(-\frac{1}{2}
{\left(\mathbf{T}_t^l\right)}^{-1}\mathbf{\Lambda}_{t}^l\right) \\ &\times
\int_{\Lambdab_s^l} \left\lbrace
\left|\mathbf{\Lambda}_{s}^l\right|^{\frac{\nu^l + n_s^l -d-1}{2}}
\mathrm{etr}\left(-\frac{1}{2}
{\left(\mathbf{T}_s^l\right)}^{-1}\mathbf{\Lambda}_{s}^l\right) \right. \\ &
\hspace{1.5cm} \times \left. _0F_1\left(\frac{\nu^l}{2}; \frac{1}{4}
{\mathbf{\Lambda}_{s}^l}^{\frac{1}{2}} \mathbf{F}^l \mathbf{\Lambda}_{t}^l
{\mathbf{F}^l}^{'}{\mathbf{\Lambda}_{s}^l}^{\frac{1}{2}} \right)
\right\rbrace d\mathbf{\Lambda}_s^l. \end{aligned}
\end{equation}
The integral, $I$, in (\ref{prop3}) can be done using Theorem \ref{theorem7}
as 
\begin{equation}
\begin{aligned} & I = \Gamma_d\left(\frac{\nu^l + n_s^l}{2}\right) \\
&\times \left| 2\mathbf{T}_s^l\right|^{\frac{\nu^l + n_s^l}{2}}
~_1F_1\left(\frac{\nu^l + n_s^l}{2}; \frac{\nu^l}{2}; \frac{1}{2}
\mathbf{F}^l \mathbf{\Lambda}_{t}^l {\mathbf{F}^l}^{'} \mathbf{T}_s^l
\right), \end{aligned}
\end{equation}
where $_1F_1(a;b;\mathbf{X})$ is the Confluent hypergeometric function with
the matrix argument $\mathbf{X}$. As a result, (\ref{prop3}) becomes 
\begin{equation}  \label{app:prop4}
\begin{aligned} &p(\mathbf{\mu}_t^l,\mathbf{\Lambda}_t^l
|\mathcal{D}_t^l,\mathcal{D}_s^l) = \\ & A^l
\left|\mathbf{\Lambda}_t^l\right|^{\frac{1}{2}} \exp
\left(-\frac{\kappa_{t,n}^l}{2}\left(\mathbf{\mu}_t^l -
\mathbf{m}_{t,n}^l\right)^{'}\mathbf{\Lambda}_t^l \left(\mathbf{\mu}_t^l -
\mathbf{m}_{t,n}^l\right) \right) \\ &\times
\left|\mathbf{\Lambda}_{t}^l\right|^{\frac{\nu^l + n_t^l -d-1}{2}}
\mathrm{etr}\left(-\frac{1}{2}
{\left(\mathbf{T}_t^l\right)}^{-1}\mathbf{\Lambda}_{t}^l\right) \\ & \times
~_1F_1\left(\frac{\nu^l + n_s^l}{2}; \frac{\nu^l}{2}; \frac{1}{2}
\mathbf{F}^l \mathbf{\Lambda}_{t}^l {\mathbf{F}^l}^{'} \mathbf{T}_s^l
\right), \end{aligned}
\end{equation}
where the constant of proportionality, $A^l$, makes the integration of the
posterior $p(\mathbf{\mu}_t^l,\mathbf{\Lambda}_t^l |\mathcal{D}_t^l,\mathcal{%
D}_s^l)$ with respect to $\mathbf{\mu}_t^l$ and $\mathbf{\Lambda}_t^l$ equal
to one. Hence, 
\begin{equation}  \label{A1}
\begin{aligned} 
&{\left(A^l\right)}^{-1} = \int_{\mathbf{\Lambda}_t^l} \left|\mathbf{\Lambda}%
_{t}^l\right|^{\frac{\nu^l + n_t^l -d-1}{2}} \mathrm{etr}\left(-\frac{1}{2} {%
\left(\mathbf{T}_t^l\right)}^{-1}\mathbf{\Lambda}_{t}^l\right) \left|\mathbf{%
\Lambda}_t^l\right|^{\frac{1}{2}}  \\
& \times \int_{\mub_t^l} \exp
\left(-\frac{\kappa_{t,n}^l}{2}\left(\mathbf{\mu}_t^l -
\mathbf{m}_{t,n}^l\right)^{'}\mathbf{\Lambda}_t^l \left(\mathbf{\mu}_t^l -
\mathbf{m}_{t,n}^l\right) \right) d\mathbf{\mu}_t^l \\ &\times
_1F_1\left(\frac{\nu^l + n_s^l}{2}; \frac{\nu^l}{2}; \frac{1}{2}
\mathbf{F}^l \mathbf{\Lambda}_{t}^l {\mathbf{F}^l}^{'} \mathbf{T}_s^l
\right)d\mathbf{\Lambda}_t^l. \end{aligned}
\end{equation}
Using (\ref{int_mu}), the inner integral equals to $(2\pi)^{\frac{d}{2}}
|\kappa_{t,n}^l \mathbf{\Lambda}_t^l|^{-\frac{1}{2}}=\left(\frac{2\pi}{%
\kappa_{t,n}^l}\right)^{\frac{d}{2}} |\mathbf{\Lambda}_t^l|^{-\frac{1}{2}}$.
Hence, 
\begin{equation}  \label{A2}
\begin{aligned} {\left(A^l\right)}^{-1} =
\left(\frac{2\pi}{\kappa_{t,n}^l}\right)^{\frac{d}{2}} \int_{\Lambdab_t^l}
\left|\mathbf{\Lambda}_{t}^l\right|^{\frac{\nu^l + n_t^l -d-1}{2}}
\mathrm{etr}\left(-\frac{1}{2}
{\left(\mathbf{T}_t^l\right)}^{-1}\mathbf{\Lambda}_{t}^l\right) \\ \times
~_1F_1\left(\frac{\nu^l + n_s^l}{2}; \frac{\nu^l}{2}; \frac{1}{2}
\mathbf{F}^l \mathbf{\Lambda}_{t}^l {\mathbf{F}^l}^{'} \mathbf{T}_s^l
\right)d\mathbf{\Lambda}_t^l. \hspace{1cm} \end{aligned}
\end{equation}
With the variable change $\Omega = \mathbf{F}^l \mathbf{\Lambda}_{t}^l {%
\mathbf{F}^l}^{^{\prime }}$, we have $d\Omega=|\mathbf{F}^l|^{d+1} d\mathbf{%
\Lambda}_t^l$ and $\mathbf{\Lambda}_t^l = {\left(\mathbf{F}^l\right)}^{-1}
\Omega \left({\mathbf{F}^{l}}^{^{\prime }}\right)^{-1}$. Since $\mathrm{tr}(%
\mathbf{ABCD})=\mathrm{tr}(\mathbf{BCDA})=\mathrm{tr}(\mathbf{CDAB})=\mathrm{%
tr}(\mathbf{DABC})$ and $|\mathbf{ABC}|=|\mathbf{A}||\mathbf{B}||\mathbf{C}|$%
, $A^l$ can be derived as 
\begin{equation}  \label{app:A4}
\begin{aligned} &{\left(A^l\right)}^{-1} =
\left(\frac{2\pi}{\kappa_{t,n}^l}\right)^{\frac{d}{2}}
|\mathbf{F}^l|^{-\left(\nu^l + n_t^l \right)} \int_{\Omega} \left\lbrace
|\Omega |^{\frac{\nu^l + n_t^l -d-1}{2}} \right. \\ & \hspace{.5cm}
\times\mathrm{etr}\left(-\frac{1}{2}
{\left({\mathbf{F}^{l}}^{'}\right)}^{-1} {\left(\mathbf{T}_t^l\right)}^{-1}
{\mathbf{F}^l}^{-1} \Omega \right) \\ &\left.\hspace{.5cm} \times
~_1F_1\left(\frac{\nu^l + n_s^l}{2}; \frac{\nu^l}{2}; \frac{1}{2} \Omega
\mathbf{T}_s^l \right) \right\rbrace d\Omega \\ &=
\left(\frac{2\pi}{\kappa_{t,n}^l}\right)^{\frac{d}{2}}
2^{\frac{d\left(\nu^l+n_t^l \right)}{2}} \Gamma_d
\left(\frac{\nu^l+n_t^l}{2} \right) \left|\mathbf{T}_t^l\right|^{\frac{\nu^l
+ n_t^l}{2}} \\ & ~~~~~~ \times ~_2F_1\left(\frac{\nu^l + n_s^l}{2},
\frac{\nu^l + n_t^l}{2}; \frac{\nu^l}{2}; \mathbf{T}_s^l\mathbf{F}^l
\mathbf{T}_t^l {\mathbf{F}^l}^{'} \right), \end{aligned}
\end{equation}
where the second equality follows from Theorem \ref{theorem7}, and $%
_2F_1(a,b;c;\mathbf{X})$ is the Gauss hypergeometric function with the
matrix argument $\mathbf{X}$. As such, we have derived the closed-form
posterior distribution of the target parameters $(\mathbf{\mu}_t^l,\mathbf{%
\Lambda}_t^l)$ in (\ref{prop4}), where ${A^l}$ is given by (\ref{A4}).

\section{Proof of Theorem \ref{thm-effective}}
\label{appendix:effective}

The likelihood $p(\mathbf{x}|\mathbf{%
\mu }_{t}^{l},\mathbf{\Lambda }_{t}^{l})$ and posterior $p(\mathbf{\mu }%
_{t}^{l},\mathbf{\Lambda }_{t}^{l}|\mathcal{D}_{t}^{l},\mathcal{D}_{s}^{l})$
are given in (\ref{x_s_x_t}) and (\ref{prop4}), respectively. Hence, 
\begin{equation}
\begin{aligned} & p(\mathbf{x} | l) = (2\pi)^{-\frac{d}{2}} A^l
\int_{\mub_t^l,\Lambdab_t^l} \left\lbrace
|\mathbf{\Lambda}_t^l|^{\frac{1}{2}} \right. \\ & \times
\exp\left(-\frac{1}{2} \left(\mathbf{x}-\mathbf{\mu}_t^l
\right)^{'}\mathbf{\Lambda}_t^l \left(\mathbf{x}-\mathbf{\mu}_t^l \right)
\right) \\ & \times \left|\mathbf{\Lambda}_t^l\right|^{\frac{1}{2}} \exp
\left(-\frac{\kappa_{t,n}^l}{2}\left(\mathbf{\mu}_t^l -
\mathbf{m}_{t,n}^l\right)^{'}\mathbf{\Lambda}_t^l \left(\mathbf{\mu}_t^l -
\mathbf{m}_{t,n}^l\right) \right) \\ & \times
\left|\mathbf{\Lambda}_{t}^l\right|^{\frac{\nu^l + n_t^l -d-1}{2}}
\mathrm{etr}\left(-\frac{1}{2}
{\left(\mathbf{T}_t^l\right)}^{-1}\mathbf{\Lambda}_{t}^l\right) \\ & \left.
\times ~_1F_1\left(\frac{\nu^l + n_s^l}{2}; \frac{\nu^l}{2}; \frac{1}{2}
\mathbf{F}^l \mathbf{\Lambda}_{t}^l {\mathbf{F}^l}^{'} \mathbf{T}_s^l
\right) \right\rbrace d\mathbf{\mu}_t^l d\mathbf{\Lambda}_t^l. \end{aligned}
\label{eff1}
\end{equation}%
Similarly, we can simplify (\ref{eff1}) as 
\begin{equation}
\begin{aligned} & p(\mathbf{x}| l) = (2\pi)^{-\frac{d}{2}} A^l
\int_{\mub_t^l,\Lambdab_t^l} \left\lbrace
|\mathbf{\Lambda}_t^l|^{\frac{1}{2}} \right. \\ & \times
\exp\left(-\frac{\kappa_\x^l}{2} \left(\mathbf{\mu}_t^l-\mathbf{m}_\x^l
\right)^{'}\mathbf{\Lambda}_t^l \left(\mathbf{\mu}_t^l-\mathbf{m}_\x^l
\right) \right) \\ & \times \left|\mathbf{\Lambda}_{t}^l\right|^{\frac{\nu^l
+ n_t^l +1 -d-1}{2}} \mathrm{etr}\left(-\frac{1}{2}
{\left(\mathbf{T}_\x^l\right)}^{-1}\mathbf{\Lambda}_{t}^l\right) \\ & \left.
\times ~_{1}F_{1}\left( \frac{\nu ^{l}+n_{s}^{l}}{2};\frac{\nu
^{l}}{2};\frac{1}{2}\mathbf{F}^{l}\mathbf{\Lambda
}_{t}^{l}{\mathbf{F}^{l}}^{^{\prime }}\mathbf{T}_{s}^{l}\right) \right\}
d\mathbf{\mu }_{t}^{l}d\mathbf{\Lambda }_{t}^{l}, \end{aligned}  \label{eff2}
\end{equation}
where 
\begin{equation}
\begin{aligned} & \kappa_\x^l = \kappa_{t,n}^l + 1 = \kappa_t^l + n_t^l + 1,
~~~~~ \mathbf{m}_\x^l = \frac{\kappa_{t,n}^l \m_{t,n}^l +
\x}{\kappa_{t,n}+1}, \\ & {\left(\mathbf{T}_\x^l\right)}^{-1} =
{\left(\mathbf{T}_t^l\right)}^{-1} + \frac{\kappa_{t,n}^l}{\kappa_{t,n}^l +
1} \left(\mathbf{m}_{t,n}^l-\mathbf{x} \right)
\left(\mathbf{m}_{t,n}^l-\mathbf{x} \right)^{'}. \end{aligned}
\label{app:update_1}
\end{equation}%
The integration in (\ref{eff2}) is similar to the one in (\ref{A1}). As a
result, using (\ref{A4}), 
\begin{equation}
\begin{aligned} & p(\mathbf{x}| l) = (2\pi)^{-\frac{d}{2}} A^l
\left(\frac{2\pi}{\kappa_\x^l}\right)^{\frac{d}{2}}
2^{\frac{d\left(\nu^l+n_t^l + 1\right)}{2}} \Gamma_d \left(\frac{\nu^l+n_t^l
+ 1}{2} \right) \\ & \left|\mathbf{T}_\x^l\right|^{\frac{\nu^l + n_t^l +
1}{2}} ~_2F_1\left(\frac{\nu^l + n_s^l}{2}, \frac{\nu^l + n_t^l + 1}{2};
\frac{\nu^l}{2}; \mathbf{T}_s^l\mathbf{F}^l \mathbf{T}_\x^l
{\mathbf{F}^l}^{'} \right). \end{aligned}  \label{eff3}
\end{equation}%
By replacing the value of $A^{l}$, we have the effective class-conditional
density. We denote $O_{\mathrm{OBTL}}(\mathbf{x}|l)=p(\mathbf{x}|l)$, since
it is the objective function for the OBTL classifier. As such, 
\begin{equation}
\begin{aligned} &O_{\mathrm{OBTL}}(\mathbf{x}| l) = \pi^{-\frac{d}{2}}
\left(\frac{\kappa_{t,n}^l}{\kappa_\x^l} \right)^{\frac{d}{2}} \Gamma_d
\left(\frac{\nu^l+n_t^l + 1}{2} \right) \\ & \times \Gamma_d^{-1}
\left(\frac{\nu^l+n_t^l}{2} \right)
\left|\mathbf{T}_\x^l\right|^{\frac{\nu^l + n_t^l + 1}{2}}
\left|\mathbf{T}_t^l\right|^{-\frac{\nu^l + n_t^l}{2}} \\ & \times
~_2F_1\left(\frac{\nu^l + n_s^l}{2}, \frac{\nu^l + n_t^l + 1}{2};
\frac{\nu^l}{2}; \mathbf{T}_s^l\mathbf{F}^l \mathbf{T}_\x^l
{\mathbf{F}^l}^{'} \right) \\ & \times ~_2F_1^{-1}\left(\frac{\nu^l +
n_s^l}{2}, \frac{\nu^l + n_t^l}{2}; \frac{\nu^l}{2};
\mathbf{T}_s^l\mathbf{F}^l \mathbf{T}_t^l {\mathbf{F}^l}^{'} \right).
\end{aligned}  \label{app:eff4}
\end{equation}

\section{Laplace Approximation of the Gauss Hypergeometric Function of Matrix Argument}
\label{appendix:Laplace}
The Gauss hypergeomeric function has the following integral representation: 
\begin{equation}
\begin{aligned} &~_2F_1(a,b;c;\mathbf{X})= B_d^{-1}(a,c-a) \\ & \times
\int_{0_d<\Y<\I_d} |\mathbf{Y}|^{a-\frac{d+1}{2}} |\mathbf{I}_d
-\mathbf{Y}|^{c-a-\frac{d+1}{2}} |\mathbf{I}_d - \mathbf{X}\mathbf{Y}|^{-b}
d\mathbf{Y}, \end{aligned}  \label{int_rep}
\end{equation}%
which is valid under the following conditions: $\mathbf{X}\in \mathbf{C}%
^{d\times d}$ is symmetric and satisfies $\mathrm{Re}(\mathbf{X})<\mathbf{I}%
_{d}$, $\mathrm{Re}(a)>\frac{d-1}{2}$, and $\mathrm{Re}(c-a)>\frac{d-1}{2}$. 
$B_{d}(\alpha ,\beta )$ is the multivariate beta function 
\begin{equation}
B_{d}(\alpha ,\beta )=\frac{\Gamma _{d}(\alpha )\Gamma _{d}(\beta )}{\Gamma
_{d}(\alpha +\beta )},
\end{equation}%
where $\Gamma _{d}(\alpha )$ is the multivariate gamma function defined in (%
\ref{Gamma_multi}). The Laplace approximation is one common solution to
approximate the integral 
\begin{equation}
I=\int_{y\in D}h(y)\exp (-\lambda g(y))dy,  \label{laplace}
\end{equation}%
where $D\subseteq \mathbf{R}^{d}$ is an open set and $\lambda $ is a real
parameter. If $g(\lambda )$ has a unique minimum over $D$ at point $\hat{y}%
\in D$, then the Laplace approximation to $I$ is given by 
\begin{equation}
\tilde{I}=(2\pi )^{\frac{d}{2}}\lambda ^{-\frac{d}{2}}|g^{^{\prime \prime }}(%
\hat{y})|^{-\frac{1}{2}}h(\hat{y})\exp (-\lambda g(\hat{y})),
\label{laplace2}
\end{equation}%
where $g^{^{\prime \prime }}(y)=\frac{\partial ^{2}g(y)}{\partial y\partial
y^{T}}$ is the Hessian of $g(y)$. The hypergeometric function $%
~_{2}F_{1}(a,b;c;\mathbf{X})$ depends only on the eigenvalues of the
symmetric matrix $\mathbf{X}$. Hence, without loss of generality, it is
assumed that $\mathbf{X}=\mathrm{diag}\{x_{1},\cdots ,x_{d}\}$. The
following $g$ and $h$ functions are used for (\ref{int_rep}): 
\begin{equation}
\begin{aligned} &g(\mathbf{Y}) = -a\log |\mathbf{Y}| - (c-a) \log
|\mathbf{I}_d - \mathbf{Y}| + \log |\mathbf{I}_d-\mathbf{X}\mathbf{Y}|, \\
&h(\mathbf{Y}) = B_d^{-1}(a,c-a) |\mathbf{Y}|^{-\frac{d+1}{2}}
|\mathbf{I}_d-\mathbf{Y}|^{-\frac{d+1}{2}}. \end{aligned}  \label{gh}
\end{equation}%
Using (\ref{laplace2}) and (\ref{gh}), the Laplace approximation to $%
~_{2}F_{1}(a,b;c;\mathbf{X})$ is given by \cite{Laplace_approx} 
\begin{equation}
\begin{aligned} &~_2\tilde{F}_1(a,b;c;\mathbf{X}) = \frac{2^{\frac{d}{2}}
\pi^{\frac{d(d+1)}{4}}}{B_d(a,c-a)} J_{2,1}^{-\frac{1}{2}} \\ &
\hspace{1cm}\times\prod_{i=1}^d\{\hat{y}_i^a
(1-\hat{y}_i)^{c-a}(1-x_i\hat{y}_i)^{-b}\}, \end{aligned}
\label{laplace_approx}
\end{equation}%
where $\hat{y}_{i}$ is defined as 
\begin{equation}
\hat{y}_{i}=\frac{2a}{\sqrt{\tau ^{2}-4ax_{i}(c-b)}-\tau },
\end{equation}%
with $\tau =x_{i}(b-a)-c$, and 
\begin{equation}
J_{2,1}=\prod_{i=1}^{d}\prod_{j=i}^{d}\{a(1-\hat{y}_{i})(1-\hat{y}_{j})+(c-a)%
\hat{y}_{i}\hat{y}_{j}-bL_{i}L_{j}\},
\end{equation}%
with 
\begin{equation}
L_{i}=\frac{x_{i}\hat{y}_{i}(1-\hat{y}_{i})}{1-x_{i}\hat{y}_{i}}.
\end{equation}%
The value of $_{2}F_{1}(a,b;c;\mathbf{X})$ at $\mathbf{X}=\mathbf{0}$ is 1,
that is, $~_{2}F_{1}(a,b;c;\mathbf{0})=1$. As a result, the Laplace
approximation in (\ref{laplace_approx}) is calibrated at $\mathbf{X}=\mathbf{%
0}$ to give the calibrated Laplace approximation \cite{Laplace_approx}: 
\begin{equation}
\begin{aligned} & ~_2\hat{F}_1(a,b;c;\mathbf{X}) =
\frac{~_2\tilde{F}_1(a,b;c;\X) }{~_2\tilde{F}_1(a,b;c;\mathbf{0}) } =
c^{cd-\frac{d(d+1)}{4}} R_{2,1}^{-\frac{1}{2}} \\ &\hspace{1cm} \times
\prod_{i=1}^d \left\lbrace\left(\frac{\hat{y}_i}{a}\right)^a
\left(\frac{1-\hat{y}_i}{c-a}\right)^{c-a}
(1-x_i\hat{y}_i)^{-b}\right\rbrace, \end{aligned}  \label{lablace_calib}
\end{equation}%
where 
\begin{equation}
\begin{aligned} &R_{2,1} = \prod_{i=1}^d \prod_{j=i}^d \left\lbrace
\frac{\hat{y}_i \hat{y}_j}{a} + \frac{(1-\hat{y}_i)(1-\hat{y}_j)}{c-a}
\right. \\ & \hspace{2cm} \left. - \frac{b x_ix_j \hat{y}_i \hat{y}_j
(1-\hat{y}_i)(1-\hat{y}_j)}{(1-x_i\hat{y}_i)(1-x_j\hat{y}_j)a(c-a)}\right%
\rbrace. \end{aligned}
\end{equation}

According to \cite{Laplace_approx}, the relative error of the approximation remains uniformly bounded:
\begin{equation}
\sup |\log ~_2\hat{F}_1(a,b;c;\X) - \log ~_2F_1(a,b;c;\X)| < \infty,
\end{equation}
supremum being over $c\geq c_0 > \frac{d-1}{2}$, $a,b\in R$, and $ 0_d\leq \X <(1-\epsilon)I_d$  for any $\epsilon \in (0,1)$. Authors provide in \cite{Laplace_approx} some numerical examples to show how well this approximation works. We also follow the same way and show two plots in Fig. \ref{fig_laplace}, which demonstrate a very good numerical accuracy for several different setups. As mentioned, the hypergeometric function $~_2F_1(a,b;c;\X)$ of matrix argument is only a function of the eigenvalues of $\X$. So, we fix $\X=\tau I_d$ and draw the exact and approximate values of $~_2F_1(a,b;c;\tau I_d)$ versus $\tau$ (note $0<\tau<1$ for convergence as mentioned in the definition of $~_2F_1(a,b;c;\X)$ in (\ref{Gauss})) in Fig. \ref{fig_laplace}a for $d=5$, $a=3$, $b=4$, and $c=6$. Fig. \ref{fig_laplace}b shows the exact and approximate values of $~_2F_1(a,b;c;\tau I_d)$ versus $c$ for $d=10$, $a=30$, $b=50$, and $\tau=0.01$. The authors stated in \cite{Laplace_approx} that when the integral representation is not valid, that is, when $c-a < \frac{d-1}{2}$, this Laplace approximation still gives good accuracy. We also see that approximation in Fig. \ref{fig_laplace}b is accurate for all range of $c$, even though the integral representation is not valid for $c<a+\frac{d-1}{2} = 34.5$. We also note that this approximation is more accurate in the smaller function values.

\begin{figure}[t!]
\centering
\begin{subfigure}[t]{0.23\textwidth}
        \centering
        \includegraphics[width=\textwidth]{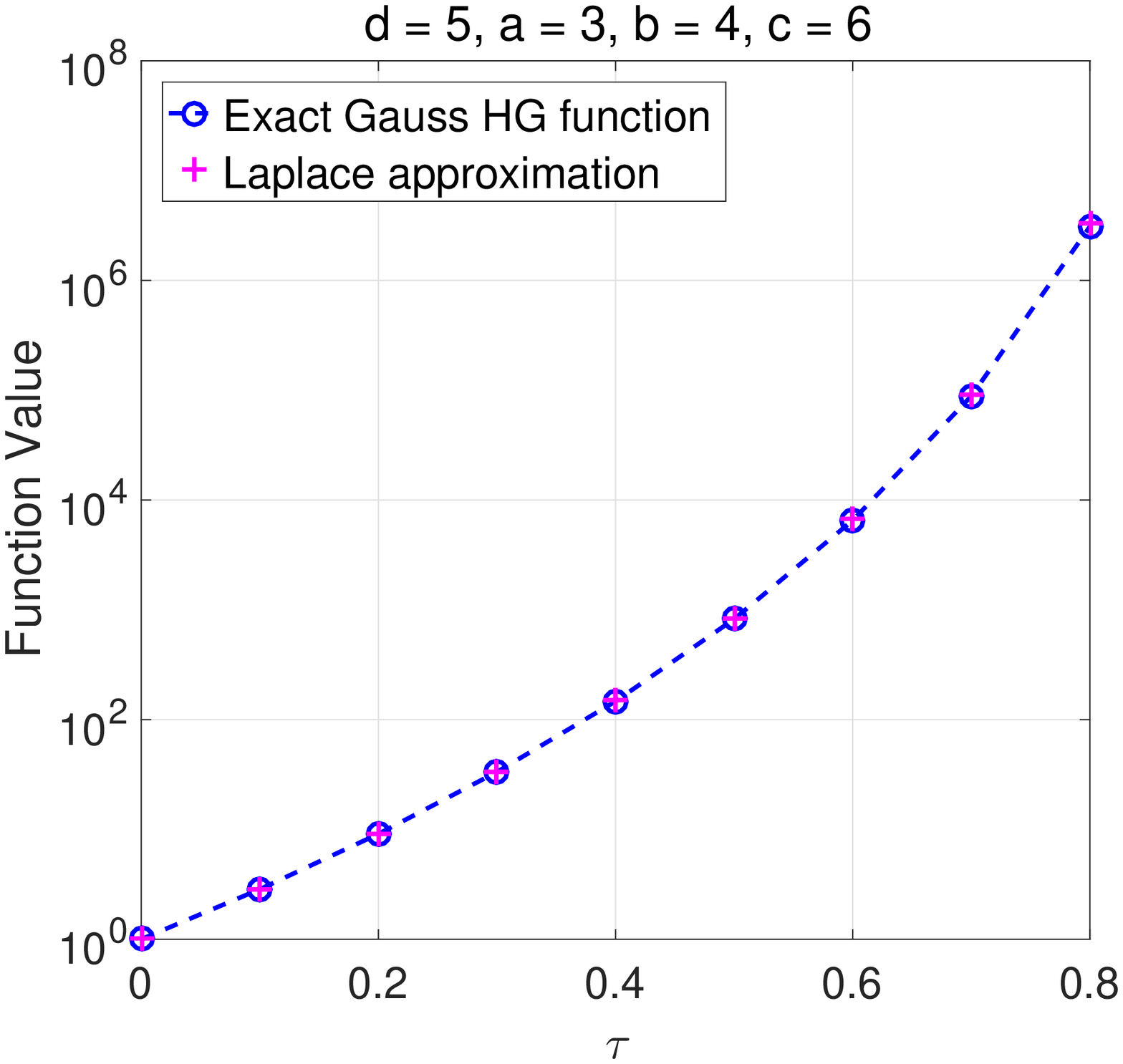}
        \caption{}
    \end{subfigure}
~ 
\begin{subfigure}[t]{0.23\textwidth}
        \centering
        \includegraphics[width=\textwidth]{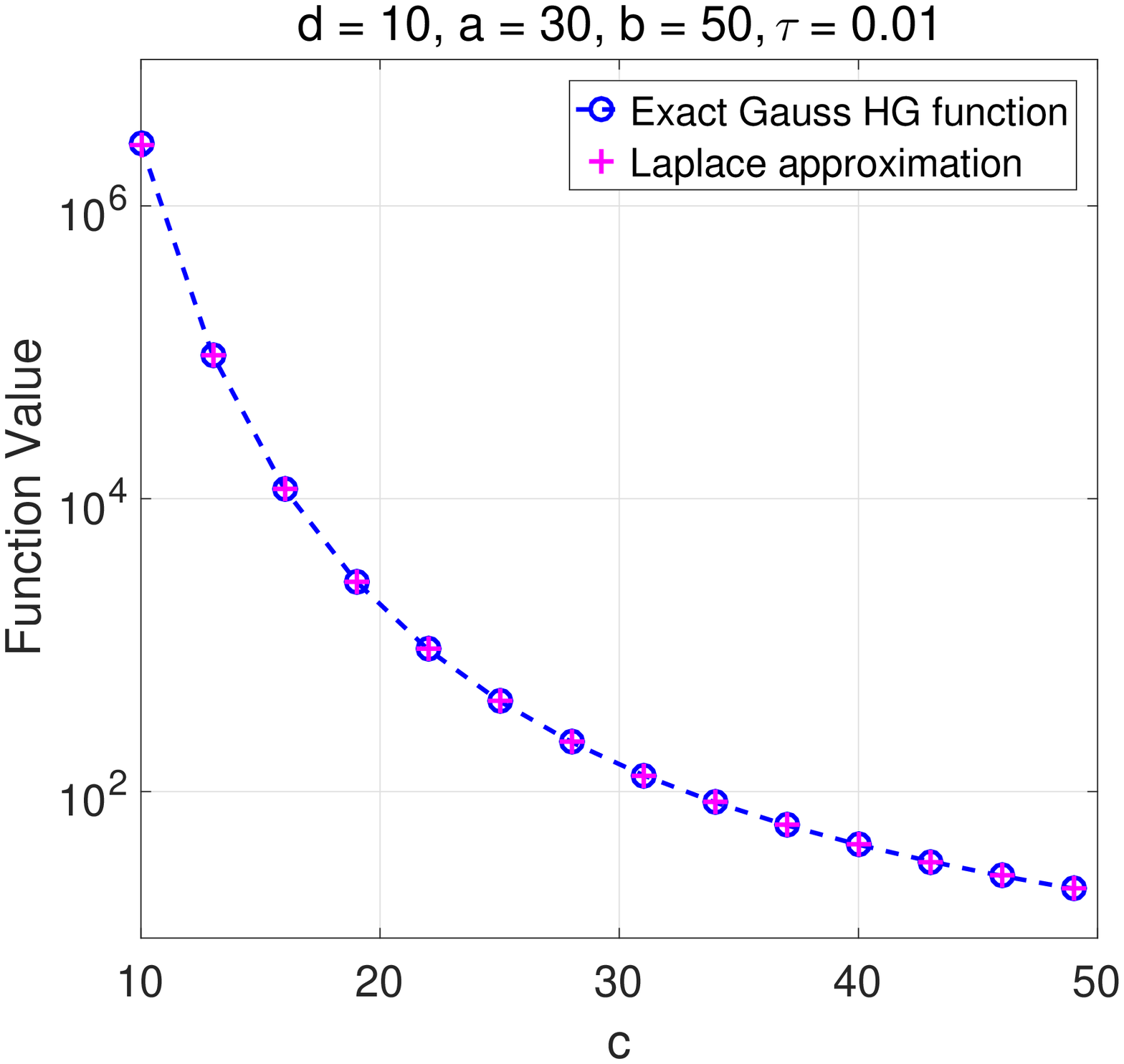}
        \caption{}
    \end{subfigure}
\caption{{\protect\footnotesize Exact values of function $~_2F_1(a,b;c;\tau I_d)$ and its corresponding Laplace approximation $~_2\hat{F}_1(a,b;c;\tau I_d)$ versus: (a) $\tau$, for $d=5$, $a=3$, $b=4$, and $c=6$, (b) $c$, for $d=10$, $a=30$, $b=50$, and $\tau=0.01$.}}
\label{fig_laplace}
\end{figure}

\section*{Acknowledgment}

This work was funded in part by Award CCF-1553281 from the National Science
Foundation.

\vspace{-.1cm} 
\bibliographystyle{IEEEtran}
\bibliography{IEEEabrv,ref}

\end{document}